\def\isanonymous{0} 
\def\doincludeappendixatbottom{1} 
\newcommand{\ie}{\emph{i.e.}}
\newcommand{\eg}{\emph{e.g.}}
\newcommand{\vs}{\emph{vs.}}
\newcommand{\wrt}{w.r.t.}
\newcommand{\st}{s.t.} 
\newcommand{\ub}{upper bound}
\newcommand{\UB}{Upper Bound}
\newcommand{\fp}{fixed-point}
\newcommand{\wc}{worst case}
\newcommand{\Qfun}{Q-function}
\newcommand{\Qval}{Q-value}
\renewcommand{\l}{Lifelong} 
\newcommand{\lrl}{\l{} RL}
\newtheorem{definition}{Definition}
\newtheorem{lemma}{Lemma}
\newtheorem{proposition}{Proposition}
\newtheorem{notation}{Notation}
\newtheorem*{definition*}{Definition}
\newtheorem*{lemma*}{Lemma}
\newtheorem*{property*}{Property}
\newtheorem*{proposition*}{Proposition}
\newtheorem*{theorem*}{Theorem}
\newtheorem*{corollary*}{Corollary}
\def\R{\mathbb{R}}  
\def\N{\mathbb{N}}  
\def\S{\mathcal{S}} 
\def\A{\mathcal{A}} 
\def\M{\mathcal{M}} 
\newcommand{\mdpvar}{\ensuremath{m}}
\newcommand{\mdpvarbar}{\ensuremath{\bar{m}}}
\newcommand{\setprobavect}[1]{\ensuremath{\mathcal{V}_{#1}}}
\newcommand{\V}[2]{\ensuremath{V^{#1}_{#2}}} 
\newcommand{\Q}[2]{\ensuremath{Q^{#1}_{#2}}} 
\newcommand{\rmax}{RMax}
\newcommand{\lrmax}{LRMax}
\newcommand{\maxqinit}{MaxQInit}
\newcommand{\lrmaxqinit}{LRMaxQInit}
\newcommand{\eqdef}{\ensuremath{\triangleq}}
\newcommand*{\qm}[1]{``#1''}
\newcommand{\intrange}[2]{\ensuremath{\left\{#1, \dots, #2\right\}}} 
\newcommand{\bigo}{\ensuremath{\mathcal{O}}} 
\newcommand{\bigotilde}{\ensuremath{\tilde{\mathcal{O}}}}
\newcommand{\wass}[1]{\ensuremath{W_{#1}}} 
\newcommand{\wasserstein}[3]{\ensuremath{\wass{#1} \left( #2, #3 \right)}} 
\newcommand{\functionspace}[2]{\mathcal{F} \left( #1, #2 \right)}
\newcommand{\FUNCTION}[5]{
	\ensuremath{
		\begin{array}{llll}
			#1: & #2 & \rightarrow & #3 \\
			& #4 & \mapsto & #5
		\end{array}
	}
}
\DeclarePairedDelimiter{\ceil}{\lceil}{\rceil}
\newcommand{\MAXENS}[2]{\ensuremath{ \max_{#1} \left( #2 \right) }}
\newcommand*{\SET}[1]{\ensuremath{ \left\{ #1 \right\} }}
\newcommand{\tuple}[2]{\ensuremath{\left( #1, #2 \right)}} 
\renewcommand{\Pr}{\ensuremath{\textbf{Pr}}}
\renewcommand{\L}{\ensuremath{\mathcal{L}}}
\newcommand{\absnorm}[1]{\ensuremath{ \left| #1 \right| }}
\newcommand{\nnorm}[2]{\ensuremath{ \left\lVert #1 \right\rVert_{#2} }}
\newcommand{\onenorm}[1]{\nnorm{#1}{1}}
\newcommand{\inftynorm}[1]{\nnorm{#1}{\infty}}
\newcommand{\mthspc}{\ensuremath{\,}}
\newcommand{\phleq}{\hphantom{\leq\text{ }}}
\newcommand{\pheq}{\hphantom{=\text{ }}}
\renewcommand{\S}{\ensuremath{\mathcal{S}}}
\renewcommand{\A}{\ensuremath{\mathcal{A}}}
\newcommand{\SA}{\ensuremath{\S \times \A}}
\newcommand{\nS}{\ensuremath{S}} 
\newcommand{\nA}{\ensuremath{A}} 
\newcommand{\transition}{\ensuremath{T}}
\newcommand{\tra}[3]{\ensuremath{\transition{}_{#1 #3}^{#2}}} 
\newcommand{\trahat}[3]{\ensuremath{\hat{\transition{}}_{#1 #3}^{#2}}}
\newcommand{\trabar}[3]{\ensuremath{\bar{\transition{}}_{#1 #3}^{#2}}}
\newcommand{\trahatbar}[3]{\ensuremath{\hat{\bar{\transition{}}}_{#1 #3}^{#2}}}
\newcommand{\Reward}{\ensuremath{R}}
\newcommand{\Rew}[2]{\ensuremath{\Reward{}_{#1}^{#2}}} 
\newcommand{\Rewhat}[2]{\ensuremath{\hat{\Reward{}}_{#1}^{#2}}}
\newcommand{\Rewbar}[2]{\ensuremath{\bar{\Reward{}}_{#1}^{#2}}}
\newcommand{\Rewhatbar}[2]{\ensuremath{\hat{\bar{\Reward{}}}_{#1}^{#2}}}
\newcommand{\Vmax}{\ensuremath{V_{\text{max}}}}
\newcommand{\prior}{\ensuremath{D_{\max}}} 
\newcommand{\ntimesteps}{\ensuremath{\tau}}
\newcommand{\pr}{\ensuremath{\textbf{Pr}}}
\newcommand{\p}{\ensuremath{p_{\min}}}
\newcommand{\pmin}{\ensuremath{p_{\min}}}
\newcommand{\modpm}[5]{\ensuremath{D_{#1 #2}^{#3} ( #4, #5 )}} 
\newcommand{\modiv}[5]{\ensuremath{D_{#1 #2} ( #4 \| #5 ) }} 
\newcommand{\modivhat}[4]{\ensuremath{\hat{D}_{#1 #2} ( #3 \| #4 )}} 
\newcommand{\mdpdiv}[4]{\ensuremath{d_{#1 #2} ( #3 \| #4 ) }} 
\newcommand{\mdpdivhat}[4]{\ensuremath{\hat{d}_{#1 #2} ( #3 \| #4 ) }} 
\newcommand{\mdppm}[4]{\ensuremath{\Delta_{#1 #2} ( #3, #4 ) }} 
\newcommand{\mdpdivpi}[5]{\ensuremath{d_{#1 #2}^{#3} ( #4 \| #5 ) }} 
\newcommand{\mdppmpi}[5]{\ensuremath{\Delta_{#1 #2}^{#3} ( #4, #5 ) }} 
\newcommand{\mdpdivn}[5]{\ensuremath{d_{#1 #2}^{#3} ( #4 \| #5 ) }} 
\newcommand{\mdpdivhatn}[5]{\ensuremath{\hat{d}_{#1 #2}^{#3} ( #4 \| #5 ) }} 
\title{Lipschitz Lifelong Reinforcement Learning}
\author{%
    \if\isanonymous1%
    Anonymized%
    \else%
    Erwan Lecarpentier\textsuperscript{\rm 1, 2}, David Abel\textsuperscript{\rm 3}, Kavosh Asadi\textsuperscript{\rm 3, 4\footnote{Kavosh Asadi finished working on this project before joining Amazon.}}, Yuu Jinnai\textsuperscript{\rm 3},\\Emmanuel Rachelson\textsuperscript{\rm 1}, Michael L. Littman\textsuperscript{\rm 3}\\%
    \fi%
}%
\begin{document}
	

\maketitle

\begin{abstract}	
	We consider the problem of knowledge transfer when an agent is facing a series of Reinforcement Learning (RL) tasks.
	We introduce a novel metric between Markov Decision Processes and establish that close MDPs have close optimal value functions.
	Formally, the optimal value functions are Lipschitz continuous with respect to the tasks space.
	These theoretical results lead us to a value-transfer method for \lrl{}, which we use to build a PAC-MDP algorithm with improved convergence rate.
	Further, we show the method to experience no negative transfer with high probability.
	We illustrate the benefits of the method in \lrl{} experiments.
\end{abstract}

\section{Introduction}

Lifelong Reinforcement Learning (RL) is an online problem where an agent faces a series of RL tasks, drawn 
sequentially.
Transferring knowledge from prior experience to speed up the resolution of new tasks is a key question in that setting \citep{lazaric2012transfer,taylor2009transfer}.
We elaborate on the intuitive idea that \emph{similar} tasks should allow a large amount of transfer.
An agent able to compute online a similarity measure between source tasks and the current target task could be able to perform transfer accordingly.
By measuring the amount of inter-task similarity, we design a novel method for value transfer, practically deployable in the online \lrl{} setting.
Specifically, we introduce a metric between MDPs and prove that the optimal Q-value function is Lipschitz continuous with respect to the MDP space.
This property makes it possible to compute a provable \ub{} on the optimal Q-value function of an unknown target task, given the learned optimal Q-value function of a source task.
Knowing this \ub{} accelerates the convergence of an \rmax{}-like algorithm \citep{brafman2002r}, relying on an optimistic estimate of the optimal Q-value function.
Overall, the proposed transfer method consists of computing online the distance between source and target tasks, deducing the \ub{} on the optimal Q value function of the source task and using this bound to accelerate learning.
Importantly, the method exhibits no negative transfer, \ie{}, it cannot cause performance degradation, as the computed \ub{} provably does not underestimate the optimal Q-value function.

Our contributions are as follows.
First, we study theoretically the Lipschitz continuity of the optimal Q-value function in the task space by introducing a metric between MDPs (Section~\ref{sec:lipschitz-continuity-results}).
Then, we use this continuity property to propose a value-transfer method based on a local distance between MDPs (Section~\ref{sec:transfer}).
Full knowledge of both MDPs is not required and the transfer is non-negative, which makes the method applicable online and safe.
In Section~\ref{sec:lrmax}, we build a PAC-MDP algorithm called \emph{Lipschitz \rmax{}}, applying this transfer method in the online \lrl{} setting.
We provide sample and computational complexity bounds and showcase the algorithm in \lrl{} experiments (Section~\ref{sec:experiments}).

\section{Background and Related Work}

Reinforcement Learning~(RL)~\citep{sutton2018reinforcement} is a framework for sequential decision making.
The problem is typically modeled as a Markov Decision Process (MDP) \citep{puterman2014markov} consisting of a 4-tuple $\langle \S, \A, R, T \rangle$, where $\S$ is a state space, $\A$ an action space, $R_s^a$ is the expected reward of taking action $a$ in state $s$ and $T_{s s'}^a$ is the transition probability of reaching state $s'$ when taking action $a$ in state $s$.
Without loss of generality, we assume $R_s^a \in [0, 1]$. 
Given a discount factor $\gamma \in [0,1)$, the expected cumulative return $\sum_t \gamma^t R_{s_t}^{a_t}$ obtained along a trajectory starting with state $s$ and action $a$ using policy $\pi$ in MDP $M$ is denoted by $Q^{\pi}_M(s,a)$ and called the Q-function.
The optimal Q-function $Q^*_M$ is the highest attainable expected return from $s, a$ and $V^*_M(s) = \max_{a \in \A} Q^*_M(s,a)$ is the optimal value function in $s$.
Notice that $R_{s}^{a} \leq 1$ implies $Q^*_M(s, a) \leq \frac{1}{1 - \gamma}$ for all $s, a \in \S \times \A$.
This maximum \ub{} is used by the \rmax{} algorithm as an optimistic initialization of the learned Q function.
A key point to reduce the sample complexity of this algorithm is to benefit from a tighter \ub{}, which is the purpose of our transfer method.

\lrl{}~\citep{silver2013lifelong,brunskill2014pac} is the problem of experiencing online a series of MDPs drawn from an unknown distribution. 
Each time an MDP is sampled, a classical RL problem takes place where the agent is able to interact with the environment to maximize its expected return.
In this setting, it is reasonable to think that knowledge gained on previous MDPs could be re-used to improve the performance in new MDPs.
In this paper, we provide a novel method for such transfer by characterizing the way the optimal Q-function can evolve across tasks.
As commonly done \citep{wilson2007multi,brunskill2014pac,abel2018policy}, we restrict the scope of the study to the case where sampled MDPs share the same state-action space $\S \times \A$.
For brevity, we will refer indifferently to MDPs, models or tasks, and write them $M=\langle R,T \rangle$.

Using a metric between MDPs has the appealing characteristic of quantifying the amount of similarity between tasks, which intuitively should be linked to the amount of transfer achievable.
\citet{song2016measuring} define a metric based on the bi-simulation metric introduced by \citet{ferns2004metrics} and the Wasserstein metric~\citep{villani2008optimal}.
value transfer is performed between states with low bi-simulation distances.
However, this metric requires knowing both MDPs completely and is thus unusable in the \lrl{} setting where we expect to perform transfer before having learned the current MDP.
Further, the transfer technique they propose does allow negative transfer (see Appendix, Section~\ref{sec:example-negative-transfer}).
\citet{carroll2005task} also define a value-transfer method based on a measure of similarity between tasks.
However, this measure is not computable online and thus not applicable to the \lrl{} setting.
\citet{mahmud2013clustering} and \citet{brunskill2013sample} propose MDP clustering methods; respectively using a metric quantifying the regret of running the optimal policy of one MDP in the other MDP and the $\mathcal{L}_1$ norm between the MDP models.
An advantage of clustering is to prune the set of possible source tasks.
They use their approach for policy transfer, which differs from the value-transfer method proposed in this paper.
\citet{ammar2014automated} learn the model of a source MDP and view the prediction error on a target MDP as a dissimilarity measure in the task space.
Their method makes use of samples from both tasks and is not readily applicable to the online setting considered in this paper.
\citet{lazaric2008transfer} provide a practical method for sample transfer, computing a similarity metric reflecting the probability of the models to be identical.
Their approach is applicable in a batch RL setting as opposed to the online setting considered in this paper.
The approach developed by \citet{sorg2009transfer} is very similar to ours in the sense that they prove bounds on the optimal Q-function for new tasks, assuming that both MDPs are known and that a soft homomorphism exists between the state spaces.
\citet{brunskill2013sample} also provide a method that can be used for Q-function bounding in multi-task RL.

\begin{figure}
	\centering
	\includegraphics[
	width=\linewidth,
	clip,
	trim={15 13 15 12}
	]{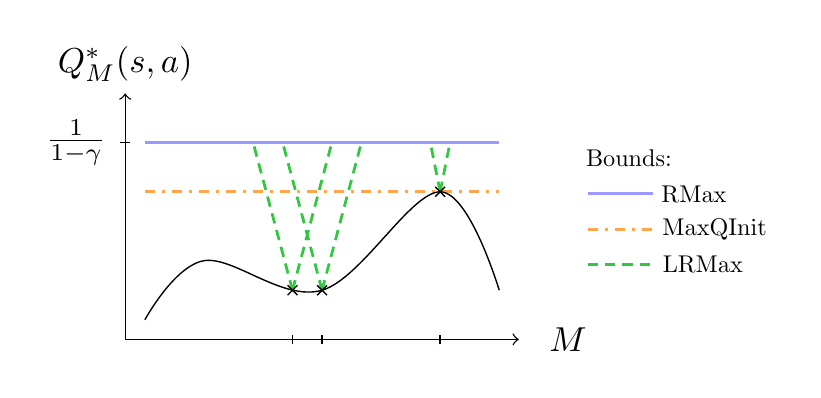}
	\caption{
		The optimal Q-value function represented for a particular $s, a$ pair across the MDP space.
		The \rmax{}, \maxqinit{} and \lrmax{} bounds are represented for three sampled MDPs.
	}
	\label{fig:bounds-illustration}
\end{figure}

\citet{abel2018policy} present the MaxQInit algorithm, providing transferable bounds on the Q-function with high probability while preserving PAC-MDP guarantees~\citep{strehl2009reinforcement}. 
Given a set of solved tasks, they derive the probability that the maximum over the Q-values of previous MDPs is an \ub{} on the current task's optimal Q-function. 
This approach results in a method for non-negative transfer with high probability once enough tasks have been sampled.
The method developed by \citet{abel2018policy} is similar to ours in two fundamental points:
first, a theoretical \ub{}s on optimal Q-values across the MDP space is built;
secondly, this provable \ub{} is used to transfer knowledge between MDPs by replacing the maximum $\frac{1}{1 - \gamma}$ bound in an \rmax{}-like algorithm, providing PAC guarantees.
The difference between the two approaches is illustrated in Figure~\ref{fig:bounds-illustration}, where the \maxqinit{} bound is the one developed by \citet{abel2018policy}, and the \lrmax{} bound is the one we present in this paper.
On this figure, the essence of the \lrmax{} bound is noticeable.
It stems from the fact that the optimal Q value function is locally Lipschitz continuous in the MDP space \wrt{} a specific pseudometric.
Confirming the intuition, close MDPs \wrt{} this metric have close optimal Q values.
It should be noticed that no bound is uniformly better than the other as intuited by Figure~\ref{fig:bounds-illustration}.
Hence, combining all the bounds results in a tighter \ub{} as we will illustrate in experiments (Section~\ref{sec:experiments}).
We first carry out the theoretical characterization of the Lipschitz continuity properties in the following section.
Then, we build on this result to propose a practical transfer method for the online \lrl{} setting. 

\section{Lipschitz Continuity of Q-Functions}
\label{sec:lipschitz-continuity-results}

The intuition we build on is that similar MDPs should have similar optimal Q-functions.
Formally, this insight can be translated into a continuity property of the optimal Q-function over the MDP space $\M$.
The remainder of this section mathematically formalizes this intuition that will be used in the next section to derive a practical method for value transfer. 
To that end, we introduce a local pseudometric characterizing the distance between the models of two MDPs at a particular state-action pair. 
A reminder and a detailed discussion on the metrics used herein can be found in the Appendix, Section~\ref{sec:app:local-mdps-distance-discussion}.
\begin{definition}%
	\label{def:pseudo-metric-between-models}
	Given two tasks $M = \langle R, T \rangle$, $\bar{M} = \langle \bar{R}, \bar{T} \rangle$, and a function $f: \S \rightarrow \mathbb{R}^+$, we define the \emph{pseudometric between models} at $(s, a) \in \SA$ \wrt{} $f$ as:
	\begin{equation}
	\label{eq:local-model-pseudo-metric}
	\modpm{s}{a}{f}{M}{\bar{M}} \triangleq |R_s^a - \bar{R}_s^a| + \sum_{s' \in \S} f(s') |T_{s s'}^a - \bar{T}_{s s'}^a|.
	\end{equation}
\end{definition}%
This pseudometric is relative to a positive function $f$.
We implicitly cast this definition in the context of discrete state spaces.
The extension to continuous spaces is straightforward but beyond the scope of this paper.
For the sake of clarity in the remainder of this study, we introduce
\begin{equation*}
\modiv{s}{a}{\gamma V^*_{\bar{M}}}{M}{\bar{M}} \eqdef \modpm{s}{a}{\gamma V^*_{\bar{M}}}{M}{\bar{M}},
\end{equation*}
corresponding to the pseudometric between models with the particular choice of $f = \gamma V^*_{\bar{M}}$.
From this definition stems the following pseudo-Lipschitz continuity result.

\begin{proposition}[Local pseudo-Lipschitz continuity]
	\label{proposition:local-lipschitz-continuity}
	For two MDPs $M, \bar{M}$, for all $(s, a) \in \SA$,
	\begin{equation}
	\label{eq:local-lipschitz-continuity}
	\absnorm{\Q{*}{M}(s, a) - \Q{*}{\bar{M}}(s, a)} \leq \mdppm{s}{a}{M}{\bar{M}},
	\end{equation}
	with the \emph{local MDP pseudometric} $\mdppm{s}{a}{M}{\bar{M}} \eqdef \min \left\{ \mdpdiv{s}{a}{M}{\bar{M}}, \mdpdiv{s}{a}{\bar{M}}{M} \right\}$, and the \emph{local MDP dissimilarity} $\mdpdiv{s}{a}{M}{\bar{M}}$ is the unique solution to the following \fp{} equation for $d_{s a}$:
	\begin{equation}
	\label{eq:asym-mdp-pseudo-distance}
	d_{s a} = \modiv{s}{a}{\gamma V^*_{\bar{M}}}{M}{\bar{M}} + \gamma \sum_{s' \in \S} T_{s s'}^a \max_{a' \in \A} d_{s' a'}, \forall s, a.
	\end{equation}
\end{proposition}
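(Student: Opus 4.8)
The plan is to prove the result in three steps: first, show that the fixed-point equation~\eqref{eq:asym-mdp-pseudo-distance} has a unique solution; second, establish the one-sided bound $\absnorm{\Q{*}{M}(s,a) - \Q{*}{\bar{M}}(s,a)} \leq \mdpdiv{s}{a}{M}{\bar{M}}$ for every $(s,a) \in \SA$; third, invoke symmetry to obtain the same bound with $\mdpdiv{s}{a}{\bar{M}}{M}$ and conclude with the minimum $\mdppm{s}{a}{M}{\bar{M}}$.

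For the first step, I would let $\mathcal{B}$ denote the operator on bounded functions $d : \SA \to \R$ whose value at $(s,a)$ is the right-hand side of~\eqref{eq:asym-mdp-pseudo-distance}, i.e. $(\mathcal{B}d)(s,a) = \modiv{s}{a}{\gamma \V{*}{\bar{M}}}{M}{\bar{M}} + \gamma \sum_{s'} T_{ss'}^a \max_{a'} d(s',a')$. Since $R_s^a, \bar{R}_s^a \in [0,1]$ and $\V{*}{\bar{M}} \leq \tfrac{1}{1-\gamma}$, the term $\modiv{s}{a}{\gamma \V{*}{\bar{M}}}{M}{\bar{M}}$ is finite and $\mathcal{B}$ maps bounded functions to bounded functions. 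A direct computation, using that $d \mapsto \max_{a'} d(\cdot, a')$ is a non-expansion for the sup-norm and that $T_{s\cdot}^a$ is a probability distribution, gives $\inftynorm{\mathcal{B}d - \mathcal{B}d'} \leq \gamma \inftynorm{d - d'}$; hence $\mathcal{B}$ is a $\gamma$-contraction on a Banach space and admits a unique fixed point, which we name $\mdpdiv{s}{a}{M}{\bar{M}}$ (and, exchanging $M$ and $\bar{M}$, a unique $\mdpdiv{s}{a}{\bar{M}}{M}$).

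For the second step, I would write the Bellman optimality equations for $\Q{*}{M}$ and $\Q{*}{\bar{M}}$, subtract them, and insert the telescoping term $\gamma \sum_{s'} T_{ss'}^a \max_{a'} \Q{*}{\bar{M}}(s',a')$. This decomposes the difference into the reward gap $R_s^a - \bar{R}_s^a$, a transition gap $\gamma\sum_{s'}(T_{ss'}^a - \bar{T}_{ss'}^a)\V{*}{\bar{M}}(s')$ (using $\max_{a'}\Q{*}{\bar{M}}(s',a') = \V{*}{\bar{M}}(s')$), and $\gamma\sum_{s'}T_{ss'}^a(\max_{a'}\Q{*}{M}(s',a') - \max_{a'}\Q{*}{\bar{M}}(s',a'))$. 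Taking absolute values, using $\absnorm{\max_{a'} x_{a'} - \max_{a'} y_{a'}} \leq \max_{a'}\absnorm{x_{a'} - y_{a'}}$, and recognising the first two terms as $\modiv{s}{a}{\gamma \V{*}{\bar{M}}}{M}{\bar{M}}$, I obtain that $\delta \eqdef \absnorm{\Q{*}{M} - \Q{*}{\bar{M}}}$ satisfies the pointwise inequality $\delta \leq \mathcal{B}\delta$. Because $\mathcal{B}$ is monotone, iterating yields $\delta \leq \mathcal{B}\delta \leq \mathcal{B}^2\delta \leq \cdots \leq \mathcal{B}^n\delta$, and $\mathcal{B}^n\delta$ converges (in sup-norm, hence pointwise) to $\mdpdiv{s}{a}{M}{\bar{M}}$ by the contraction property; therefore $\delta(s,a) \leq \mdpdiv{s}{a}{M}{\bar{M}}$ for all $(s,a)$.

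The third step is immediate: swapping the roles of $M$ and $\bar{M}$ gives $\delta(s,a) \leq \mdpdiv{s}{a}{\bar{M}}{M}$ as well, so $\delta(s,a) \leq \min\{\mdpdiv{s}{a}{M}{\bar{M}}, \mdpdiv{s}{a}{\bar{M}}{M}\} = \mdppm{s}{a}{M}{\bar{M}}$, which is~\eqref{eq:local-lipschitz-continuity}. I expect the main obstacle to be the careful handling of the functional inequality $\delta \leq \mathcal{B}\delta$: passing from it to $\delta \leq \mdpdiv{s}{a}{M}{\bar{M}}$ relies on both monotonicity of $\mathcal{B}$ and sup-norm convergence of its iterates, and it also requires checking a priori that $\delta$ is bounded — which it is, since $R_s^a \in [0,1]$ forces $\Q{*}{M}, \Q{*}{\bar{M}} \in [0,\tfrac{1}{1-\gamma}]$ — so that $\mathcal{B}$ may legitimately be applied to it.
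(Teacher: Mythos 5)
Your proof is correct, and it rests on the same two pillars as the paper's: the Banach fixed-point/contraction argument for existence and uniqueness of $\mdpdiv{s}{a}{M}{\bar{M}}$ (the paper's Lemma~\ref{lemma:mdp-distance-fp}), and the telescoping decomposition of the Bellman difference into the reward gap, the transition gap weighted by $\V{*}{\bar{M}}$, and a recursive term — which is exactly the chain of inequalities in the paper's inductive step. Where you diverge is in how the comparison with the fixed point is organized: the paper runs an induction over the value-iteration iterates $\Q{n}{M}, \Q{n}{\bar{M}}$, proving $\absnorm{\Q{n}{M}(s,a) - \Q{n}{\bar{M}}(s,a)} \leq \mdpdiv{s}{a}{M}{\bar{M}}$ at every rank and then passing to the limit, whereas you work directly with the optimal Q-functions, establish the sub-solution inequality $\delta \leq \mathcal{B}\delta$ for $\delta = \absnorm{\Q{*}{M} - \Q{*}{\bar{M}}}$, and conclude $\delta \leq \mdpdiv{s}{a}{M}{\bar{M}}$ by monotonicity of $\mathcal{B}$ together with sup-norm convergence of $\mathcal{B}^n\delta$ to the fixed point. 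Both routes are sound; yours avoids tracking two value-iteration sequences simultaneously at the cost of having to justify (as you correctly do) that $\delta$ is bounded and that a bounded sub-solution of a monotone $\gamma$-contraction is dominated by its fixed point, while the paper's induction is the more elementary, fully constructive phrasing of the same comparison. Your symmetry step and the final minimum match the paper exactly.
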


All the proofs of the paper can be found in the Appendix.
This result establishes that the distance between the optimal Q-functions of two MDPs at $(s, a) \in \S \times \A$ is controlled by a local dissimilarity between the MDPs. 
The latter follows a fixed-point equation (Equation \ref{eq:asym-mdp-pseudo-distance}), which can be solved by Dynamic Programming~(DP) \citep{bellman1957dynamic}. 
Note that, although the local MDP dissimilarity $\mdpdiv{s}{a}{M}{\bar{M}}$ is asymmetric, $\mdppm{s}{a}{M}{\bar{M}}$ \emph{is} a pseudometric, hence the name \emph{pseudo-Lipschitz continuity}.
Notice that the policies in Equation~\ref{eq:local-lipschitz-continuity} are the optimal ones for the two MDPs and thus are different.
Proposition~\ref{proposition:local-lipschitz-continuity} is a mathematical result stemming from Definition~\ref{def:pseudo-metric-between-models} and should be distinguished from other frameworks of the literature that \textit{assume} the continuity of the reward and transition models \wrt{} $\SA$ \citep{rachelson2010locality,pirotta2015policy,asadi2018lipschitz}.%
This result establishes that the optimal Q-functions of two close MDPs, in the sense of Equation \ref{eq:local-model-pseudo-metric}, are themselves close to each other.
Hence, given $Q^*_{\bar{M}}$, the function
\begin{equation}
s, a \mapsto Q^*_{\bar{M}}(s, a) + \mdppm{s}{a}{M}{\bar{M}}
\label{eq:local-ub}
\end{equation}
can be used as an \ub{} on $Q^*_{M}$ with $M$ an unknown MDP.
This is the idea on which we construct a computable and transferable \ub{} in Section~\ref{sec:transfer}.
In Figure~\ref{fig:bounds-illustration}, the \ub{} of Equation~\ref{eq:local-ub} is represented by the \lrmax{} bound.
Noticeably, we provide a global pseudo-Lipschitz continuity property, along with similar results for the optimal value function $V^*_M$ and the value function of a fixed policy.
As these results do not directly serve the purpose of this article, we report them in the Appendix, Section~\ref{sec:app:other-results}.

\section{Transfer Using the Lipschitz Continuity}
\label{sec:transfer}

A purpose of value transfer, when interacting online with a new MDP, is to initialize the value function and drive the exploration to accelerate learning.
We aim to exploit value transfer in a method guaranteeing three conditions:\\
\indent C1. the resulting algorithm is PAC-MDP;\\
\indent C2. the transfer accelerates learning;\\
\indent C3. the transfer is non-negative.\\
To achieve these conditions, we first present a transferable \ub{} on $Q^*_M$ in Section~\ref{sec:transferable-ub}.
This \ub{} stems from the Lipschitz continuity result of Proposition~\ref{proposition:local-lipschitz-continuity}.
Then, we propose a practical way to \emph{compute} this \ub{} in Section~\ref{sec:computable-ub}.
Precisely, we propose a surrogate bound that can be calculated online in the \lrl{} setting, without having explored the source and target tasks completely.
Finally, we implement the method in an algorithm described in Section~\ref{sec:lrmax}, and demonstrate formally that it meets conditions C1, C2 and C3.
Improvements are discussed in Section~\ref{sec:improving-lrmax}.

\subsection{A Transferable \UB{} on $Q^*_M$}
\label{sec:transferable-ub}

From Proposition~\ref{proposition:local-lipschitz-continuity}, one can naturally define a local \ub{} on the optimal Q-function of an MDP given the optimal Q-function of another MDP.
\begin{definition}
	\label{def:lipschitz-bound}
	Given two tasks $M$ and $\bar{M}$, for all $(s, a) \in \S \times \A$, the \emph{Lipschitz \ub{} on $Q^*_M$ induced by $Q^*_{\bar{M}}$} is defined as $U_{\bar{M}}(s, a) \geq Q^*_M(s,a)$ with:
	\begin{equation}
	U_{\bar{M}}(s, a) \triangleq Q^*_{\bar{M}}(s, a) + \mdppm{s}{a}{M}{\bar{M}}.
	\label{eq:lipschitz-bound}
	\end{equation}
\end{definition}
The \emph{optimism in the face of uncertainty} principle leads to considering that the long-term expected return from any state is the $\frac{1}{1-\gamma}$ maximum return, unless proven otherwise. 
Particularly, the \rmax{} algorithm \citep{brafman2002r}, explores an MDP so as to shrink this \ub{}.
\rmax{} is a model-based, online RL algorithm with PAC-MDP guarantees \citep{strehl2009reinforcement}, meaning that convergence to a near-optimal policy is guaranteed in a polynomial number of missteps with high probability.
It relies on an optimistic model initialization that yields an optimistic \ub{} $U$ on the optimal Q-function, then acts greedily \wrt{} $U$.
By default, it takes the maximum value $U(s, a) = \frac{1}{1 - \gamma}$, but any tighter \ub{} is admissible.
Thus, shrinking $U$ with Equation~\ref{eq:lipschitz-bound} is expected to improve the learning speed or sample complexity for new tasks in \lrl{}.

In \rmax{}, during the resolution of a task $M$, $\SA$ is split into a subset of known state-action pairs $K$ and its complement $K^c$ of unknown pairs.
A state-action pair is known if the number of collected reward and transition samples allows estimating an $\epsilon$-accurate model in $\mathcal{L}_1$-norm with probability higher than $1 - \delta$.
We refer to $\epsilon$ and $\delta$ as the \emph{\rmax{} precision parameters}.
This results in a threshold $n_{known}$ on the number of visits $n(s,a)$ to a pair $s,a$ that are necessary to reach this precision.
Given the experience of a set of $m$ MDPs $\bar{\M} = \{ \bar{M}_1, \ldots, \bar{M}_m \}$, we define the total bound as the minimum over all the induced Lipschitz bounds.
\begin{proposition}%
	\label{proposition:total-ub}%
	Given a partially known task $M = \langle R, T \rangle$, the set of known state-action pairs $K$, and the set of Lipschitz bounds on $Q^*_M$ induced by previous tasks $\left\{ U_{\bar{M}_1}, \ldots, U_{\bar{M}_m} \right\}$, the function $Q$ defined below is an \ub{} on $Q^*_M$ for all $s, a \in \S \times \A$.
	\begin{equation}%
		Q(s, a) \eqdef
		\begin{cases}
		R_s^a + \gamma \sum\limits_{s' \in \S} T_{s s'}^a \max\limits_{a' \in \A} Q(s', a') \\
		\hfill \text{if } (s, a) \in K, \\
		U(s,a) \text{ otherwise,}
		\end{cases}
		\label{eq:total-ub}%
	\end{equation}%
	with $U(s,a)=\min \left\{ \frac{1}{1 - \gamma}, U_{\bar{M}_1}(s, a), \ldots, U_{\bar{M}_m}(s, a) \right\}$.
\end{proposition}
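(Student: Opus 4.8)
The plan is to exhibit $Q$ as the unique fixed point of a Bellman-type operator and then show, by a monotone-iteration argument, that this fixed point dominates $\Q{*}{M}$ pointwise.

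First I would introduce the operator $\mathcal{B}$ acting on functions $q : \SA \to \R$ by setting $\mathcal{B}q(s,a) = \Rew{s}{a} + \gamma \sum_{s' \in \S} \tra{s}{a}{s'} \max_{a' \in \A} q(s',a')$ whenever $(s,a) \in K$, and $\mathcal{B}q(s,a) = U(s,a)$ otherwise. The second branch does not depend on $q$, and the first is a convex combination (with weights $\tra{s}{a}{s'} \geq 0$ summing to $1$) of a maximum, so $\mathcal{B}$ is a $\gamma$-contraction in $\inftynorm{\cdot}$; hence it has a unique fixed point, which is precisely the function $Q$ of Equation~\ref{eq:total-ub}, and $\mathcal{B}^n q_0 \to Q$ for any starting point $q_0$. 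This already settles that $Q$ is well defined. I would also record that $\mathcal{B}$ is monotone: $q_1 \leq q_2$ pointwise implies $\mathcal{B}q_1 \leq \mathcal{B}q_2$, again because on $K$ it is a nonnegative average of a (monotone) maximum and on $K^c$ it is constant.

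The key step is to prove $\mathcal{B}\Q{*}{M} \geq \Q{*}{M}$ pointwise. On $K$ this is in fact an equality, namely the Bellman optimality equation for $M$ written with $\max_{a' \in \A} \Q{*}{M}(s',a') = \V{*}{M}(s')$. On $K^c$ we have $\mathcal{B}\Q{*}{M}(s,a) = U(s,a) = \min\{\tfrac{1}{1-\gamma}, U_{\bar{M}_1}(s,a), \dots, U_{\bar{M}_m}(s,a)\}$, and every term of this minimum is an upper bound on $\Q{*}{M}(s,a)$: the term $\tfrac{1}{1-\gamma}$ because $\Rew{s}{a} \in [0,1]$ (as noted in the background section), and each $U_{\bar{M}_i}(s,a)$ by Definition~\ref{def:lipschitz-bound}, which is itself a direct consequence of Proposition~\ref{proposition:local-lipschitz-continuity}. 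Hence the minimum is also an upper bound, giving $\mathcal{B}\Q{*}{M}(s,a) \geq \Q{*}{M}(s,a)$ on $K^c$ as well.

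Finally I would iterate: applying the monotone operator $\mathcal{B}$ to the inequality $\mathcal{B}\Q{*}{M} \geq \Q{*}{M}$ repeatedly shows that the sequence $(\mathcal{B}^n \Q{*}{M})_{n \geq 0}$ is nondecreasing and bounded below by $\Q{*}{M}$; since $\mathcal{B}^n \Q{*}{M} \to Q$ by the contraction property, passing to the limit yields $Q \geq \Q{*}{M}$ pointwise, which is the claim. The argument is essentially routine once the operator is set up; the only points requiring a little care are the well-definedness of the implicit system in Equation~\ref{eq:total-ub} restricted to $K$ (handled by the contraction property) and checking that the monotone iteration indeed converges to this particular fixed point rather than some other function (again immediate from uniqueness of the fixed point of a contraction).
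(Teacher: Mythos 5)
Your proof is correct and follows essentially the same route as the paper's: both reduce the claim on $K^c$ to the fact that every term in the minimum defining $U$ dominates $Q^*_M$, and both handle $K$ by a monotone Bellman-iteration argument concluded by a passage to the limit. The only cosmetic difference is that you seed the iteration at $Q^*_M$ and use $\mathcal{B}Q^*_M \geq Q^*_M$ together with monotonicity and contraction, whereas the paper starts two value-iteration sequences at zero and compares them rank by rank; the underlying mechanism is the same.
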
%
Commonly in \rmax{}, Equation~\ref{eq:total-ub} is solved to a precision $\epsilon_Q$ via Value Iteration.
This yields a function $Q$ that is a valid heuristic (bound on $Q^*_M$) for the exploration of MDP $M$.

\subsection{A Computable \UB{} on $Q^*_M$}
\label{sec:computable-ub}

The key issue addressed in this section is how to actually compute $U(s,a)$, particularly when both source and target tasks are partially explored.
Consider two tasks $M$ and $\bar{M}$, on which vanilla \rmax{} has been applied, yielding the respective sets of known state-action pairs $K$  and $\bar{K}$, along with the learned models $\hat{M} = \langle \hat{T}, \hat{R} \rangle$ and $\hat{\bar{M}} = \langle \hat{\bar{T}}, \hat{\bar{R}} \rangle$, and the \ub{}s $Q$ and $\bar{Q}$ respectively on $Q^*_M$ and $Q^*_{\bar{M}}$.
Notice that, if $K = \emptyset$, then $Q(s,a)=\frac{1}{1-\gamma}$ for all $s, a$ pairs.
Conversely, if $K^c = \emptyset$, $Q$ is an $\epsilon$-accurate estimate  of $Q^*_{M}$  in $\mathcal{L}_1$-norm with high probability.
Equation~\ref{eq:total-ub} allows the transfer of knowledge from $\bar{M}$ to $M$ if $U_{\bar{M}}(s,a)$ can be computed.
Unfortunately, the true model and optimal value functions, necessary to compute $U_{\bar{M}}$, are \emph{partially} known (see Equation~\ref{eq:lipschitz-bound}).
Thus, we propose to compute a looser \ub{} based on the learned models and value functions.
First, we provide an \ub{} $\modivhat{s}{a}{M}{\bar{M}}$ on $\modiv{s}{a}{\gamma \V{*}{\bar{M}}}{M}{\bar{M}}$ (Definition~\ref{def:pseudo-metric-between-models}).

\begin{proposition}
	\label{proposition:ub-model-pseudo-metric}
	Given two tasks $M$, $\bar{M}$ and respectively $K$, $\bar{K}$ the subsets of $\S \times \A$ where their models are known with accuracy $\epsilon$ in $\mathcal{L}_1$-norm with probability at least $1 - \delta$,
	\begin{equation*}
	\pr \left( \modivhat{s}{a}{M}{\bar{M}} \geq \modiv{s}{a}{\gamma V^*_{\bar{M}}}{M}{\bar{M}} \right) \geq 1 - \delta
	\end{equation*}
	with $\modivhat{s}{a}{M}{\bar{M}}$ the \emph{\ub{} on the pseudometric between models} defined below for $B = \epsilon \left(1 + \gamma \max_{s'} \bar{V}(s') \right)$.
	\begin{align}%
	& \modivhat{s}{a}{M}{\bar{M}} \; \eqdef \; \nonumber \\
	& \begin{cases}%
	\modpm{s}{a}{\gamma \bar{V}}{\hat{M}}{\hat{\bar{M}}} + 2 B
	& \text{if } (s, a) \in K \cap \bar{K} \\
	\max\limits_{\bar{\mu} \in \M} \modpm{s}{a}{\gamma \bar{V}}{\hat{M}}{\bar{\mu}} + B
	& \text{if } (s, a) \in K \cap \bar{K}^c \\
	\max\limits_{\mu \in \M} \modpm{s}{a}{\gamma \bar{V}}{\mu}{\hat{\bar{M}}} + B
	& \text{if } (s, a) \in K^c \cap \bar{K} \\
	\max\limits_{\mu, \bar{\mu} \in \M^2} \modpm{s}{a}{\gamma \bar{V}}{\mu}{\bar{\mu}}
	& \text{if } (s, a) \in K^c \cap \bar{K}^c
	\end{cases}%
	\label{eq:local-distance-ub}%
	\end{align}%
\end{proposition}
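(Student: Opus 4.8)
Fix an arbitrary $(s,a)\in\SA$. The plan is to chain three elementary facts and then split on the four cases of Equation~\ref{eq:local-distance-ub}. \textbf{Fact 1 (monotonicity).} The map $f\mapsto\modpm{s}{a}{f}{M}{\bar{M}}$ of Equation~\ref{eq:local-model-pseudo-metric} is non-decreasing for the pointwise order on non-negative functions, since the weights $|T_{ss'}^a-\bar{T}_{ss'}^a|$ are non-negative. \textbf{Fact 2 (triangle inequality).} For any fixed non-negative $f$, $\modpm{s}{a}{f}{\cdot}{\cdot}$ is a weighted $\ell_1$ distance between the model "vectors" $(R_s^a,(T_{ss'}^a)_{s'})$, hence a pseudometric in its two model arguments. \textbf{Fact 3 (optimism of RMax).} The value bound $\bar{V}\eqdef\max_a\bar{Q}(\cdot,a)$ produced by RMax on $\bar{M}$ dominates $\V{*}{\bar{M}}$ pointwise. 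Combining Facts 3 and 1, $\modiv{s}{a}{\gamma\V{*}{\bar{M}}}{M}{\bar{M}}=\modpm{s}{a}{\gamma\V{*}{\bar{M}}}{M}{\bar{M}}\leq\modpm{s}{a}{\gamma\bar{V}}{M}{\bar{M}}$ deterministically, so it suffices to show $\modpm{s}{a}{\gamma\bar{V}}{M}{\bar{M}}\leq\modivhat{s}{a}{M}{\bar{M}}$ with probability at least $1-\delta$.

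Next I would condition on the event $E$ --- of probability at least $1-\delta$ by hypothesis --- that every model declared known is $\epsilon$-accurate, i.e.\ $|R_s^a-\hat{R}_s^a|\leq\epsilon$ and $\sum_{s'}|T_{ss'}^a-\hat{T}_{ss'}^a|\leq\epsilon$ for $(s,a)\in K$, and likewise for $\bar{M}$ on $\bar{K}$. On $E$, if $(s,a)\in K$ then $\modpm{s}{a}{\gamma\bar{V}}{M}{\hat{M}}\leq\epsilon+\gamma(\max_{s'}\bar{V}(s'))\epsilon=B$ (the reward contributes $\epsilon$, the reweighted $\ell_1$ transition error contributes the rest), and symmetrically $\modpm{s}{a}{\gamma\bar{V}}{\bar{M}}{\hat{\bar{M}}}\leq B$ if $(s,a)\in\bar{K}$. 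Now apply Fact 2 in each case: in $K\cap\bar{K}$, interpose both $\hat{M}$ and $\hat{\bar{M}}$ to reach the bound $\modpm{s}{a}{\gamma\bar{V}}{\hat{M}}{\hat{\bar{M}}}+2B$; in $K\cap\bar{K}^c$, interpose $\hat{M}$ and bound the leftover term $\modpm{s}{a}{\gamma\bar{V}}{\hat{M}}{\bar{M}}$ by $\max_{\bar{\mu}\in\M}\modpm{s}{a}{\gamma\bar{V}}{\hat{M}}{\bar{\mu}}$ using $\bar{M}\in\M$; the case $K^c\cap\bar{K}$ is symmetric; and in $K^c\cap\bar{K}^c$, use $M,\bar{M}\in\M$ directly to bound by $\max_{\mu,\bar{\mu}\in\M^2}\modpm{s}{a}{\gamma\bar{V}}{\mu}{\bar{\mu}}$ (this case needs no randomness at all). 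In each case the right-hand side is exactly $\modivhat{s}{a}{M}{\bar{M}}$, and chaining with the reduction above yields the claim.

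I expect the remaining subtleties to be only bookkeeping. First, the constant $B=\epsilon(1+\gamma\max_{s'}\bar{V}(s'))$ must come out exactly, and the reason it appears \emph{twice} in the $K\cap\bar{K}$ case is that two estimated models are interposed, each contributing one $B$; the other cases interpose at most one estimated model. Second, the $K\cap\bar{K}$ case simultaneously needs the accuracy events for $\hat{M}$ and $\hat{\bar{M}}$ at $(s,a)$; a union bound keeps the total failure probability at $\delta$ provided the per-task RMax precision parameters are chosen to absorb the factor, which is what the hypothesis "known with accuracy $\epsilon$ with probability at least $1-\delta$" is implicitly assuming. Everything else is routine use of monotonicity and the triangle inequality for the weighted $\ell_1$ distance $\modpm{s}{a}{\gamma\bar{V}}{\cdot}{\cdot}$.
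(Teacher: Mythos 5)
Your proof is correct and follows essentially the same route as the paper's: condition on the $\epsilon$-accuracy event, interpose the learned models via the triangle inequality for the weighted $\ell_1$ pseudometric (each interposed estimate costing one $B$), and maximize over $\M$ for whichever model is unknown. The only imprecision is calling the domination $\bar{V}\ge V^*_{\bar{M}}$ ``deterministic'' --- it is itself only guaranteed with high probability as a consequence of the accuracy event on $\bar{K}$ (the paper folds it into the same $1-\delta$ event), but since you condition on $E$ anyway this does not affect the argument.
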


Importantly, this \ub{} $\modivhat{s}{a}{M}{\bar{M}}$ can be calculated analytically (see Appendix, Section~\ref{sec:app:analytical-calculation-dmodelhat}).
This makes $\modivhat{s}{a}{M}{\bar{M}}$ usable in the online \lrl{} setting, where already explored tasks may be partially learned, and little knowledge has been gathered on the current task.
The magnitude of the $B$ term is controlled by $\epsilon$.
In the case where no information is available on the maximum value of $\bar{V}$, we have that $B = \frac{\epsilon}{1 - \gamma}$.
$\epsilon$ measures the accuracy with which the tasks are known: the smaller $\epsilon$, the tighter the $B$ bound.
Note that $\bar{V}$ is used as an \ub{} on the true $V^*_{\bar{M}}$.
In many cases, $\max_{s'} V^*_{\bar{M}}(s') \leq \frac{1}{1-\gamma}$; \eg{} for stochastic shortest path problems, which feature rewards only upon reaching terminal states, we have that $\max_{s'} V^*_{\bar{M}}(s')=1$ and thus $B=(1+\gamma)\epsilon$ is a tighter bound for transfer.
Combining $\modivhat{s}{a}{M}{\bar{M}}$ and Equation \ref{eq:asym-mdp-pseudo-distance}, one can derive an \ub{} $\mdpdivhat{s}{a}{M}{\bar{M}}$ on $\mdpdiv{s}{a}{M}{\bar{M}}$, detailed in Proposition~\ref{proposition:asym-mdp-pseudo-distance-upperbound}.
\begin{proposition}
	\label{proposition:asym-mdp-pseudo-distance-upperbound}
	Given two tasks $M, \bar{M} \in \M$, $K$ the set of state-action pairs where $(\Reward, \transition)$ is known with accuracy $\epsilon$ in $\mathcal{L}_1$-norm with probability at least $1 - \delta$.
	If $\gamma (1 + \epsilon) < 1$, the solution $\mdpdivhat{s}{a}{M}{\bar{M}}$ of the following \fp{} equation on $\hat{d}_{s a}$ (for all $s, a \in \SA$) is an \ub{} on $\mdpdiv{s}{a}{M}{\bar{M}}$ with probability at least $1 - \delta$:
	\begin{align}
	\label{eq:asym-mdp-pseudo-distance-upperbound}
	& \hat{d}_{s a} = \modivhat{s}{a}{M}{\bar{M}} \; + \\
	&
	\begin{cases}
		\gamma \left( \sum\limits_{s' \in \S} \hat{T}_{s s'}^a \max\limits_{a' \in \A} \hat{d}_{s' a'} + \epsilon \max\limits_{s', a' \in \S \times \A} \hat{d}_{s' a'} \right) \text{ if } s, a \in K, \\
		\gamma \max\limits_{s', a' \in \S \times \A} \hat{d}_{s' a'}
		\text{ otherwise.}
		\end{cases} \nonumber
	\end{align}
\end{proposition}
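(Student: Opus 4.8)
The plan is to read Equation~\ref{eq:asym-mdp-pseudo-distance-upperbound} as the fixed-point equation of a contraction operator $\hat{\mathcal{T}}$ on $\R^{\SA}$ that dominates, term by term, the operator $\mathcal{T}$ whose fixed point is $\mdpdiv{s}{a}{M}{\bar{M}}$ (Equation~\ref{eq:asym-mdp-pseudo-distance} and Proposition~\ref{proposition:local-lipschitz-continuity}), and then to push that domination down to the fixed points by a monotone-iteration argument. I work throughout on the event $E$ of probability at least $1-\delta$ underlying Proposition~\ref{proposition:ub-model-pseudo-metric}, on which $\modivhat{s}{a}{M}{\bar{M}} \geq \modiv{s}{a}{\gamma V^*_{\bar{M}}}{M}{\bar{M}}$ at every $(s,a)$ and on which the learned transition at each $(s,a) \in K$ is $\epsilon$-accurate in $\mathcal{L}_1$-norm.

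First I set up the two operators on the finite-dimensional space $\R^{\SA}$ equipped with $\inftynorm{\cdot}$. The true operator is $(\mathcal{T} d)(s,a) \eqdef \modiv{s}{a}{\gamma V^*_{\bar{M}}}{M}{\bar{M}} + \gamma \sum_{s' \in \S} T_{s s'}^a \max_{a' \in \A} d(s',a')$; it is a monotone $\gamma$-contraction with unique fixed point $d^*$, where $d^*(s,a) = \mdpdiv{s}{a}{M}{\bar{M}}$ (Proposition~\ref{proposition:local-lipschitz-continuity}), and since $\modiv{s}{a}{\gamma V^*_{\bar{M}}}{M}{\bar{M}} \geq 0$ one has $d^* = \lim_n \mathcal{T}^n 0 \geq 0$ pointwise. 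The surrogate operator $\hat{\mathcal{T}}$ is the right-hand side of Equation~\ref{eq:asym-mdp-pseudo-distance-upperbound}; it is monotone (all its coefficients are nonnegative, the $\modivhat{s}{a}{M}{\bar{M}}$ term does not depend on $d$, and $\max$ preserves order), and it is $\gamma(1+\epsilon)$-Lipschitz in $\inftynorm{\cdot}$: on $K$ the factors multiplying $d$ are $\gamma \hat{T}_{s\cdot}^a$ and $\gamma\epsilon$, summing to $\gamma(1+\epsilon)$ because $\sum_{s'} \hat{T}_{s s'}^a = 1$, and on $K^c$ the single factor is $\gamma$. Under the hypothesis $\gamma(1+\epsilon) < 1$, $\hat{\mathcal{T}}$ is therefore a contraction, so it has a unique fixed point $\hat{d}$, obtained as $\lim_n \hat{\mathcal{T}}^n d_0$ for any starting $d_0$; this $\hat{d}$ is exactly the function $(s,a) \mapsto \mdpdivhat{s}{a}{M}{\bar{M}}$ of the statement.

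The core step is the pointwise inequality $\hat{\mathcal{T}} d^* \geq d^* = \mathcal{T} d^*$. On $E$, $\modivhat{s}{a}{M}{\bar{M}} \geq \modiv{s}{a}{\gamma V^*_{\bar{M}}}{M}{\bar{M}}$, so for $(s,a) \in K$ it remains to check $\gamma\bigl( \sum_{s'} \hat{T}_{s s'}^a v(s') + \epsilon \max_{s',a'} d^*(s',a') \bigr) \geq \gamma \sum_{s'} T_{s s'}^a v(s')$ with $v(s') \eqdef \max_{a'} d^*(s',a') \geq 0$; this is equivalent to $\epsilon \inftynorm{v} \geq \sum_{s'} (T_{s s'}^a - \hat{T}_{s s'}^a) v(s')$, which holds since $\sum_{s'} (T_{s s'}^a - \hat{T}_{s s'}^a) v(s') \leq \inftynorm{v} \, \onenorm{T_{s\cdot}^a - \hat{T}_{s\cdot}^a} \leq \epsilon \inftynorm{v}$, using the $\mathcal{L}_1$-accuracy at $(s,a) \in K$ and the identity $\inftynorm{v} = \max_{s',a'} d^*(s',a')$ (here $d^* \geq 0$ is used). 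For $(s,a) \in K^c$ the inequality is immediate, since $\gamma \max_{s',a'} d^*(s',a') \geq \gamma \sum_{s'} T_{s s'}^a v(s')$ because $T_{s\cdot}^a$ is a probability vector. Finally, applying the monotone map $\hat{\mathcal{T}}$ repeatedly to $\hat{\mathcal{T}} d^* \geq d^*$ gives $\hat{\mathcal{T}}^n d^* \geq d^*$ for every $n$; letting $n \to \infty$ and using $\hat{\mathcal{T}}^n d^* \to \hat{d}$ yields $\mdpdivhat{s}{a}{M}{\bar{M}} = \hat{d}(s,a) \geq d^*(s,a) = \mdpdiv{s}{a}{M}{\bar{M}}$ for all $(s,a)$, which is the claim (on $E$, hence with probability at least $1-\delta$).

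I expect the $(s,a) \in K$ case of the domination inequality to be the main obstacle: one has to recognize that the extra summand $\gamma\epsilon \max_{s',a'} \hat{d}_{s'a'}$ in Equation~\ref{eq:asym-mdp-pseudo-distance-upperbound} is precisely the slack needed to absorb the error of substituting the unknown $T_{s\cdot}^a$ by its $\mathcal{L}_1$-$\epsilon$-close estimate $\hat{T}_{s\cdot}^a$ inside the expectation, and to carry out the bound so that a single sup-norm of $v$ suffices — which is why $d^* \geq 0$, hence $\inftynorm{v} = \max_{s',a'} d^*(s',a')$, must be established first. A secondary point requiring care is the probability bookkeeping, namely insisting that the guarantee of Proposition~\ref{proposition:ub-model-pseudo-metric} and the $\mathcal{L}_1$-accuracy on $K$ hold simultaneously on one event of probability at least $1-\delta$, rather than being invoked separately for each $(s,a)$.
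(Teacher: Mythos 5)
Your proposal is correct and takes essentially the same route as the paper's proof: the same contraction argument (the paper's Lemma~\ref{lemma:ub-mdp-distance-fp}) gives existence and uniqueness of the surrogate fixed point, and the same key inequality drives the comparison, namely that the extra $\gamma \epsilon \max_{s',a'} \hat{d}_{s'a'}$ term absorbs $\sum_{s'} (T_{ss'}^a - \hat{T}_{ss'}^a) \max_{a'} d_{s'a'} \leq \epsilon \inftynorm{v}$ via the $\mathcal{L}_1$-accuracy on $K$, with the identical case split between $K$ and $K^c$. The only cosmetic difference is how the domination is propagated to the fixed points: you prove the one-step inequality $\hat{\mathcal{T}} d^* \geq d^*$ and iterate the monotone surrogate operator from $d^*$, whereas the paper runs a parallel induction on the two value-iteration sequences started from zero and passes to the limit; your explicit handling of $d^* \geq 0$ and of the single probability-$1-\delta$ event is a welcome tightening of details the paper treats more loosely.
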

Similarly as in Proposition~\ref{proposition:ub-model-pseudo-metric}, the condition ${\gamma (1 + \epsilon) < 1}$ illustrates the fact that for a large return horizon (large $\gamma$), a high accuracy (small $\epsilon$) is needed for the bound to be computable.
Eventually, a computable \ub{} on $Q^*_M$ given $\bar{M}$ with high probability is given by
\begin{align}
\nonumber \hat{U}_{\bar{M}}(s,a) & = \bar{Q}(s,a)\\%
& + \min \left\{ \mdpdivhat{s}{a}{M}{\bar{M}}, \mdpdivhat{s}{a}{\bar{M}}{M} \right\}.
\label{eq:ub-on-lipschitz-bound}
\end{align}
The associated \ub{} on $U(s,a)$ (Equation \ref{eq:total-ub}) given the set of previous tasks $\bar{\M}=\{ \bar{M}_i \}_{i=1}^m$ is defined by
\begin{equation}
\hat{U}(s,a) = \min \left\{ {\textstyle \frac{1}{1 - \gamma}}, \hat{U}_{\bar{M}_1}(s, a), \ldots, \hat{U}_{\bar{M}_m}(s, a) \right\}.
\label{eq:ub-on-u}
\end{equation}
This \ub{} can be used to transfer knowledge from a partially solved source task to a target task.
If $\hat{U} (s, a) \leq \frac{1}{1 - \gamma}$ on a subset of $\SA$, then the convergence rate can be improved.
As complete knowledge of both tasks is not needed to compute the \ub{}, it can be applied online in the \lrl{} setting.
In the next section, we explicit an algorithm that leverages this value-transfer method.

\subsection{Lipschitz \rmax{} Algorithm}
\label{sec:lrmax}

In \lrl{}, MDPs are encountered sequentially. 
Applying \rmax{} to task $M$ yields the set of known state-action pairs $K$, the learned models $\hat{\transition}$ and $\hat{\Reward}$, and the \ub{} $Q$ on $\Q{*}{M}$. 
Saving this information when the task changes allows computing the \ub{} of Equation~\ref{eq:ub-on-u} for the new target task, and using it to shrink the optimistic heuristic of \rmax{}.
This computation effectively transfers value functions between tasks based on task similarity.
As the new task is explored online, the task similarity is progressively assessed with better confidence, refining the values of $\modivhat{s}{a}{M}{\bar{M}}$, $\mdpdivhat{s}{a}{M}{\bar{M}}$ and eventually $\hat{U}$, allowing for more efficient transfer where the task similarity is appraised.
The resulting algorithm, Lipschitz \rmax{} (\lrmax{}), is presented in Algorithm~\ref{alg:lrmax}.
To avoid ambiguities with $\bar{\M}$, we use $\hat{\M}$ to store learned features ($\hat{T}$, $\hat{R}$, $K$, $Q$) about previous MDPs.
\begin{algorithm2e}[t]
	\DontPrintSemicolon
	Initialize $\hat{\M} = \emptyset$.\;
	\For{each newly sampled MDP $M$}{
		Initialize $Q(s,a)=\frac{1}{1-\gamma}, \forall s,a$, and $K=\emptyset$\;
		Initialize $\hat{T}$ and $\hat{R}$ (\rmax{} initialization)\;
		$Q \leftarrow$ UpdateQ$(\hat{\M},\hat{T},\hat{R})$\;
		\For{$t \in [1,$ max number of steps$]$}{
			$s=$ current state, 
			$a = \arg\max\limits_{a'} Q(s, a')$\;
			Observe reward $r$ and next state $s'$ \;
			$n(s,a) \leftarrow n(s,a)+1$ \;
			\If{$n(s,a)<n_{known}$}{
				Store $(s,a,r,s')$\;
			}
			\If{$n(s,a)=n_{known}$}{
				Update $K$ and $(\hat{T}_{ss'}^a, \hat{R}_s^a)$ (learned model) \;
				$Q \leftarrow$ UpdateQ$(\hat{\M},\hat{T},\hat{R})$
			}
		}
		Save $\hat{M} = \left(\hat{T},\hat{R},K,Q\right)$ in $\hat{\M}$
	}
	Function UpdateQ$(\hat{\M},\hat{T},\hat{R})$:\;
	\For{${\bar{M}} \in \bar{\M}$}{
		Compute $\modivhat{s}{a}{M}{\bar{M}}$, $\modivhat{s}{a}{\bar{M}}{M}$ (Eq. \ref{eq:local-distance-ub})\;
		Compute $\mdpdivhat{s}{a}{M}{\bar{M}}$, $\mdpdivhat{s}{a}{\bar{M}}{M}$ (DP on Eq. \ref{eq:asym-mdp-pseudo-distance-upperbound})\;
		Compute $\hat{U}_{\bar{M}}$ (Eq. \ref{eq:ub-on-lipschitz-bound})\;
	}
	Compute $\hat{U}$ (Eq.~\ref{eq:ub-on-u})\;
	Compute and return $Q$ (DP on Eq. \ref{eq:total-ub} using $\hat{U}$)
	\;
	\caption{Lipschitz \rmax{} algorithm}
	\label{alg:lrmax}
\end{algorithm2e}
In a nutshell, the behavior of \lrmax{} is precisely that of \rmax{}, but with a tighter admissible heuristic $\hat{U}$ that becomes better as the new task is explored (while this heuristic remains constant in vanilla \rmax{}).
\lrmax{} is PAC-MDP (Condition~C1) as stated in Propositions~\ref{proposition:sample-complexity} and~\ref{proposition:computational-complexity} below.
With $S=|\S|$ and $A=|\A|$, the sample complexity of vanilla \rmax{} is $\tilde{\mathcal{O}} ( S^2 A / (\epsilon^3 (1 - \gamma)^3) )$, which is improved by \lrmax{} in Proposition~\ref{proposition:sample-complexity} and meets Condition~C2.
Finally, $\hat{U}$ is a provable \ub{} with high probability on $Q^*_M$, which avoids negative transfer and meets Condition~C3.
\begin{proposition}[Sample complexity \citep{strehl2009reinforcement}]
	\label{proposition:sample-complexity}
	With probability $1 - \delta$, the greedy policy \wrt{} $Q$ computed by \lrmax{} achieves an $\epsilon$-optimal return in MDP $M$ after 
	\begin{equation*}
	\tilde{\mathcal{O}} \left( \frac{S |\{ s, a \in \S \times \A \mid \hat{U}(s, a) \geq V^*_{M}(s) - \epsilon \} |}{\epsilon^3 (1 - \gamma)^3} \right)
	\end{equation*}
	samples (when logarithmic factors are ignored), with $\hat{U}$ defined in Equation~\ref{eq:ub-on-u} a non-static, decreasing quantity, \ub{}ed by $\frac{1}{1-\gamma}$.
\end{proposition}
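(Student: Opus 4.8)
The plan is to reduce the statement to the general PAC-MDP theorem for model-based optimistic algorithms (the \rmax{}-style analysis of \citet{strehl2009reinforcement}), and then sharpen its ``escape-event'' counting so that the usual $SA$ factor is replaced by $|\{s,a : \hat U(s,a) \geq V^*_M(s) - \epsilon\}|$. Recall that the generic theorem needs three ingredients for the value function $Q$ maintained by the algorithm: (i) \emph{optimism} --- $Q(s,a) \geq Q^*_M(s,a)$ throughout the run with probability at least $1-\delta$; (ii) \emph{accuracy} --- on the current known set $K$, $Q$ is within $\epsilon$ of the value, under the greedy policy, of an MDP coinciding with $M$ on $K$; and (iii) a bound on the number of visits to pairs outside $K$ (``escape events''), each such pair becoming known after $n_{known}$ visits. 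These compose, via the simulation lemma and an induction over episodes, into a sample-complexity bound whose leading term is (number of relevant unknown pairs) $\times\, n_{known} \times$ (effective horizon).

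Optimism is where our construction enters. By Propositions~\ref{proposition:ub-model-pseudo-metric} and~\ref{proposition:asym-mdp-pseudo-distance-upperbound}, each $\hat U_{\bar M_i}$ upper-bounds the true Lipschitz bound $U_{\bar M_i}$ with probability at least $1-\delta$; a union bound over the $m$ stored tasks makes the aggregate $\hat U$ of Equation~\ref{eq:ub-on-u} an \ub{} on the true $U$ of Equation~\ref{eq:total-ub} with high probability, so Proposition~\ref{proposition:total-ub} yields $Q \geq Q^*_M$ (up to the value-iteration precision $\epsilon_Q$, absorbed into constants). Accuracy on $K$ is inherited verbatim from vanilla \rmax{}: $n_{known}$ is chosen via $\mathcal L_1$ concentration so that on $K$ the learned $(\hat R,\hat T)$ is $\epsilon$-accurate with probability $1-\delta$, and this part of the argument never sees $\hat U$.

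The new step is the escape count. Define the \emph{effectively known} set $\tilde K \eqdef K \cup \{(s,a) : \hat U(s,a) < V^*_M(s) - \epsilon\}$. A pair in the second group is irrelevant to near-optimality: since $Q(s,a) \leq \hat U(s,a)$ whenever $(s,a) \notin K$, and since $\hat U$ only decreases as $K$ grows and task similarity is re-estimated with tighter confidence, such a pair has $Q(s,a) < V^*_M(s) - \epsilon$ and is never selected greedily in $s$ unless \emph{every} action in $s$ has value below $V^*_M(s)-\epsilon$, in which case $s$ is already $\epsilon$-suboptimal and contributes nothing extra. Thus the generic explore-or-exploit dichotomy runs with $\tilde K$ in place of $K$: in each episode, either the greedy policy is $\epsilon$-optimal, or with non-negligible probability the agent reaches an unknown pair in $\{s,a : \hat U(s,a) \geq V^*_M(s) - \epsilon\}\setminus K$. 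There are at most $|\{s,a : \hat U(s,a) \geq V^*_M(s) - \epsilon\}|$ such pairs, each visited at most $n_{known} = \tilde{\mathcal{O}}(S/(\epsilon^2(1-\gamma)^2))$ times, and each escape costs an effective horizon $\tilde{\mathcal{O}}(1/(\epsilon(1-\gamma)))$; multiplying gives the announced $\tilde{\mathcal{O}}\big(S\,|\{s,a : \hat U(s,a) \geq V^*_M(s) - \epsilon\}| / (\epsilon^3(1-\gamma)^3)\big)$, and $\hat U \leq \frac{1}{1-\gamma}$ recovers the vanilla-\rmax{} rate as a \wc{}.

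I expect the genuine obstacle to be two intertwined points. First, $\hat U$ is \emph{not static}: it is recomputed inside the UpdateQ subroutine and must be shown monotone non-increasing over the course of solving $M$ (moving from the $K^c$ branches to the tighter $K$ branches of Equation~\ref{eq:local-distance-ub}, with the saved $\bar Q$ fixed), so that $\tilde K$ only ever grows and the counting above is legitimate. Second, one must thread the union bound carefully --- over the at most $SA$ model-learning events for $M$ and the at most $m$ source-task confidence events --- so the total failure probability remains $\delta$, while keeping the comparison to the per-state quantity $V^*_M(s)-\epsilon$ consistent with the per-pair bookkeeping of \citet{strehl2009reinforcement}. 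Once monotonicity of $\hat U$ and this accounting are in place, the remainder is a black-box invocation of the established PAC-MDP machinery.
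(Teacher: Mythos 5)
The paper does not actually write out a proof of this proposition anywhere in the appendix: it is stated as an import from \citet{strehl2009reinforcement}, to be read as a direct application of the generic PAC-MDP theorem for \rmax{}-style algorithms with the tighter admissible heuristic $\hat{U}$ in place of $\frac{1}{1-\gamma}$. Your reconstruction is exactly that application, and it is sound: optimism of $Q$ from Propositions~\ref{proposition:ub-model-pseudo-metric}, \ref{proposition:asym-mdp-pseudo-distance-upperbound} and \ref{proposition:total-ub} (with a union bound over the stored tasks), accuracy on $K$ from the standard $n_{known} = \tilde{\mathcal{O}}(S/(\epsilon^2(1-\gamma)^2))$ concentration, and the escape count restricted to $\{s,a : \hat U(s,a) \geq V^*_M(s)-\epsilon\}$, which is the same refinement \citet{abel2018policy} use for \maxqinit{}. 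One small simplification: your caveat about states where \emph{every} action falls below $V^*_M(s)-\epsilon$ is vacuous, since under optimism $\max_{a'}Q(s,a') \geq V^*_M(s)$, so the greedy action always satisfies $\hat U(s,a) \geq Q(s,a) \geq V^*_M(s) > V^*_M(s)-\epsilon$ whenever it is unknown; every escape event therefore lands in the claimed set with no exceptional case. The two obstacles you flag (monotonicity of $\hat U$ as $K$ grows so the relevant set only shrinks, and the $\delta$-accounting across the $\bigo(SA)$ model events and $m$ source-task events) are genuine and are precisely what a fully self-contained proof would have to supply; the paper sidesteps them by citing the established machinery.
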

Proposition ~\ref{proposition:sample-complexity} shows that the sample complexity of \lrmax{} is no worse than that of \rmax{}.
Consequently, in the worst case, \lrmax{} performs as badly as learning from scratch, which is to say that the transfer method is not negative as it cannot degrade the performance.
\begin{proposition}[Computational complexity]
	\label{proposition:computational-complexity}
	The total computational complexity of \lrmax{} (Algorithm~\ref{alg:lrmax}) is
	\begin{equation*}
	\tilde{\bigo} \left( \ntimesteps + \frac{\nS^3 \nA^2 N}{(1 - \gamma)} \ln \left( \frac{1}{\epsilon_Q (1 - \gamma)} \right) \right)
	\end{equation*}
	with $\ntimesteps$ the number of interaction steps, $\epsilon_Q$ the precision of value iteration and $N$ the number of source tasks.
\end{proposition}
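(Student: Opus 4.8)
The plan is to split the work of Algorithm~\ref{alg:lrmax} into (i) the bare interaction loop executed over the $\ntimesteps$ steps and (ii) the repeated calls to the \texttt{UpdateQ} subroutine, and to bound the two contributions separately. For (i), a single iteration of the inner loop costs $\bigo(\nA)$ for the greedy action selection $\argmax_{a'}Q(s,a')$, $\bigo(1)$ for the visit-count update and the conditional sample storage, and an extra $\bigo(\nS)$ write to $\hat{T}$ on each of the at most $\nS\nA$ occasions a pair crosses the $n_{known}$ threshold; summed over $\ntimesteps$ steps this is $\bigo(\ntimesteps\nA + \nS^2\nA)$, i.e.\ the $\ntimesteps$ term (the $\nA$ factor of the per-step cost being handled as in the standard \rmax{} complexity analysis, and $\nS^2\nA$ being dominated by the term below).

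For (ii), I would first price a single \texttt{UpdateQ} call, whose loop ranges over the $N$ source tasks $\bar{M}\in\bar{\M}$. For a fixed $\bar{M}$: evaluating $\modivhat{s}{a}{M}{\bar{M}}$ (Equation~\ref{eq:local-distance-ub}) over all $(s,a)$ costs $\bigo(\nS^2\nA)$, since each of the $\nS\nA$ entries needs an $\bigo(\nS)$ sum over $s'$ and the maxima over $\M$ (resp.\ $\M^2$) are available in closed form --- the analytical computation deferred to the Appendix --- in $\bigo(\nS)$ per entry; solving the \fp{} Equation~\ref{eq:asym-mdp-pseudo-distance-upperbound} by dynamic programming is value iteration on a $\gamma(1+\epsilon)$-contraction (this is where the hypothesis $\gamma(1+\epsilon)<1$ of Proposition~\ref{proposition:asym-mdp-pseudo-distance-upperbound} is used), hence $\bigo\!\big(\frac{1}{1-\gamma}\ln\frac{1}{\epsilon_Q(1-\gamma)}\big)$ sweeps --- treating $1-\gamma(1+\epsilon)$ as $\Theta(1-\gamma)$ --- each sweep $\bigo(\nS^2\nA)$; and forming $\hat{U}_{\bar{M}}$ (Equation~\ref{eq:ub-on-lipschitz-bound}) costs $\bigo(\nS\nA)$. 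Doing this for both orderings $\modivhat{s}{a}{M}{\bar{M}}$ and $\modivhat{s}{a}{\bar{M}}{M}$ is only a constant factor. Aggregating over the $N$ source tasks and adding the $\bigo(\nS\nA N)$ minimum of Equation~\ref{eq:ub-on-u} together with the final value iteration on Equation~\ref{eq:total-ub} (a further $\bigo\!\big(\frac{\nS^2\nA}{1-\gamma}\ln\frac{1}{\epsilon_Q(1-\gamma)}\big)$, dominated), one \texttt{UpdateQ} call costs $\bigo\!\big(\frac{\nS^2\nA N}{1-\gamma}\ln\frac{1}{\epsilon_Q(1-\gamma)}\big)$.

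Next I would count the invocations: \texttt{UpdateQ} is run once when a new MDP is sampled and once each time the test $n(s,a)=n_{known}$ succeeds; since a pair becomes known at most once during the resolution of a task, this occurs at most $\nS\nA$ times, so there are $\bigo(\nS\nA)$ calls. Multiplying the per-call cost by $\bigo(\nS\nA)$ gives $\bigo\!\big(\frac{\nS^3\nA^2 N}{1-\gamma}\ln\frac{1}{\epsilon_Q(1-\gamma)}\big)$, and adding the interaction-loop term from (i) yields the claimed $\bigotilde\!\big(\ntimesteps + \frac{\nS^3\nA^2 N}{1-\gamma}\ln\frac{1}{\epsilon_Q(1-\gamma)}\big)$, the $\bigotilde$ absorbing the logarithmic iteration-count factors and the $\ln$ hidden in $n_{known}$.

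The two delicate points, where I expect the actual effort to lie, are: \textbf{(a)} showing that the dynamic program for Equation~\ref{eq:asym-mdp-pseudo-distance-upperbound} converges geometrically --- that its Bellman-like operator is a sup-norm contraction of modulus $\gamma(1+\epsilon)$, which is exactly what Proposition~\ref{proposition:asym-mdp-pseudo-distance-upperbound} together with $\gamma(1+\epsilon)<1$ guarantees --- so that $\bigo\!\big(\frac{1}{1-\gamma}\ln\frac{1}{\epsilon_Q(1-\gamma)}\big)$ sweeps genuinely reach precision $\epsilon_Q$; and \textbf{(b)} confirming that the maxima over $\M$ and $\M^2$ in Equation~\ref{eq:local-distance-ub} admit the closed form computable in $\bigo(\nS)$ per state--action pair, rather than needing an explicit optimization over probability simplices, which relies on the analytical derivation given in the Appendix. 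The rest is routine operation counting.
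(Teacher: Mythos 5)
Your proposal is correct and follows essentially the same route as the paper's proof: a constant (per-step) cost over the $\ntimesteps$ interaction steps, plus $(2N+1)$ dynamic-programming computations per state-action-pair update, at most $\nS\nA$ such updates, and each value-iteration run costing $\bigo\left(\frac{\nS^2\nA}{1-\gamma}\ln\frac{1}{\epsilon_Q(1-\gamma)}\right)$. Your two "delicate points" are in fact handled more explicitly than in the paper, which silently applies the standard $\frac{1}{1-\gamma}$ iteration count to the $\gamma(1+\epsilon)$-contraction and defers the closed-form maxima to the appendix, so no gap remains.
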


\subsection{Refining the \lrmax{} Bounds}
\label{sec:improving-lrmax}

\lrmax{} relies on bounds on the local MDP dissimilarity (Equation~\ref{eq:asym-mdp-pseudo-distance-upperbound}).
The quality of the Lipschitz bound on $Q^*_M$  can be improved according to the quality of those estimates.
We discuss two methods to provide finer estimates.

\textbf{Refining with prior knowledge.}
First, from the definition of $\modiv{s}{a}{\gamma V^*_{\bar{M}}}{M}{\bar{M}}$, it is easy to show that this pseudometric between models is always \ub{}ed by $\frac{1+\gamma}{1-\gamma}$.
However, in practice, the tasks experienced in a \lrl{} experiment might not cover the full span of possible MDPs $\M$ and may systematically be closer to each other than $\frac{1 + \gamma}{1 - \gamma}$.
For instance, the distance between two games in the Arcade Learning Environment (ALE) \citep{bellemare2013ale}, is smaller than the maximum distance between any two MDPs defined on the common state-action space of the ALE (extended discussion in Appendix, Section \ref{sec:app:dmax}).
\begin{figure}
	\centering
	\includegraphics[
	width=\linewidth,
	clip,
	trim={10 0 7 0}
	]{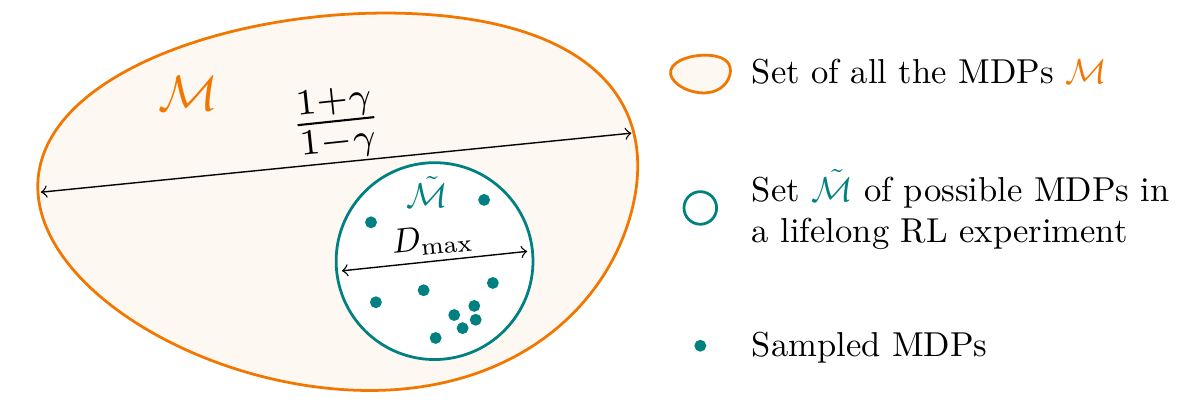}
	\caption{%
		Illustration of the prior knowledge on the maximum pseudo-distance between models for a particular $s, a$ pair.%
	}
	\label{fig:prior-illustration}
\end{figure}
Let us note $\tilde{\M} \subset \M$ the set of possible MDPs for a particular \lrl{} experiment.
Let $D_{\max}(s, a) \eqdef \max_{M, \bar{M} \in \tilde{\M}^2} \left( \modiv{s}{a}{\gamma V^*_{\bar{M}}}{M}{\bar{M}} \right)$ be the \emph{maximum model pseudo-distance} at a particular $s, a$ pair on the subset $\tilde{\M}$.
\emph{Prior knowledge} might indicate a smaller \ub{} for $D_{\max}(s, a)$ than $\frac{1 + \gamma}{1 - \gamma}$.
We will note such an \ub{} $\prior$, considered valid for all $s, a$ pairs, \ie{}, such that $\prior \geq \max\limits_{s, a, M, \bar{M} \in \SA \times \tilde{\M}^2} \left( \modiv{s}{a}{\gamma V^*_{\bar{M}}}{M}{\bar{M}} \right)$.
In a \lrl{} experiment, $\prior$ can be seen as a rough estimate of the maximum model discrepancy an agent may encounter.
Figure~\ref{fig:prior-illustration} illustrates the relative importance of $\prior$ \vs{} $\frac{1 + \gamma}{1 - \gamma}$.
Solving Equation~\ref{eq:asym-mdp-pseudo-distance-upperbound} boils down to accumulating $\modivhat{s}{a}{M}{\bar{M}}$ values in $\mdpdivhat{s}{a}{M}{\bar{M}}$.
Hence, reducing a $\modivhat{s}{a}{M}{\bar{M}}$ estimate in a single $s, a$ pair actually reduces $\mdpdivhat{s}{a}{M}{\bar{M}}$ in \emph{all} $s, a$ pairs.
Thus, replacing $\modivhat{s}{a}{M}{\bar{M}}$ in Equation \ref{eq:asym-mdp-pseudo-distance-upperbound} by $\min \{ \prior, \modivhat{s}{a}{M}{\bar{M}} \}$, provides a smaller \ub{} $\mdpdivhat{s}{a}{M}{\bar{M}}$ on $\mdpdiv{s}{a}{M}{\bar{M}}$, and thus a smaller $\hat{U}$ which allows transfer if it is less than $\frac{1}{1-\gamma}$.
Consequently, the knowledge of such a bound $\prior$ can make a difference between successful and unsuccessful transfer, even if its value is of little importance.
Conversely, setting a value for $\prior$ quantifies the distance between MDPs where transfer is efficient.

\textbf{Refining by learning the maximum distance.}
The value of $D_{\max}(s, a)$ can be estimated online for each $s, a$ pair, discarding the hypothesis of available prior knowledge. 
We propose to use an empirical estimate of the maximum model distance at $s, a$: $\hat{D}_{\max}(s, a) \triangleq \max_{M, \bar{M} \in \hat{\M}^2} \{ \modivhat{s}{a}{M}{\bar{M}} \}$, with $\hat{\M}$ the set of explored tasks.
The pitfall of this approach is that, with few explored tasks, $\hat{D}_{\max}(s, a)$ could underestimate $D_{\max}(s, a)$.
Proposition~\ref{proposition:max-distance-estimate} provides a lower bound on the probability that $\hat{D}_{\max}(s, a) + \epsilon$ does not underestimate $D_{\max}(s, a)$, depending on the number of sampled tasks.
\begin{proposition}%
	\label{proposition:max-distance-estimate}%
	Consider an algorithm producing $\epsilon$-accurate model estimates $\modivhat{s}{a}{M}{\bar{M}}$ for a subset $K$ of $\SA$ after interacting with any two MDPs $M, \bar{M} \in \M$.
	Assume $\modivhat{s}{a}{M}{\bar{M}} \geq \modiv{s}{a}{\gamma V^*_{\bar{M}}}{M}{\bar{M}}$ for any $s, a \notin K$.
	For all $s, a \in \SA$, $\delta \in (0, 1]$, after sampling $m$ tasks, if $m$ is large enough to verify $2 (1 - \pmin)^m - (1 - 2\pmin)^m \leq \delta$,
	\begin{equation*}%
		\Pr \left( \hat{D}_{\max}(s, a) + \epsilon \geq \prior(s, a) \right) \geq 1 - \delta \mthspc .
	\end{equation*}%
\end{proposition}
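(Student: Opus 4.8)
The plan is to reduce the statement to a coverage argument over the $m$ sampled tasks, exactly as in the spirit of the \maxqinit{} analysis of \citet{abel2018policy}. Let $(M_1, M_2)$ be an ordered pair of tasks attaining the maximum in the definition of $D_{\max}(s,a)$, i.e. $\modiv{s}{a}{\gamma \V{*}{M_2}}{M_1}{M_2} = D_{\max}(s,a)$; such a pair exists since the set $\tilde{\M}$ of tasks reachable in the experiment is finite (were it infinite, no positive $\pmin$ could exist). If the maximum is attained only by pairs with $M_1 = M_2$, then $D_{\max}(s,a) = 0$ by Definition~\ref{def:pseudo-metric-between-models} and the claim is trivial, so assume $M_1 \neq M_2$. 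Write $p_i \eqdef \Pr(\text{a single task draw equals } M_i)$, so $p_1, p_2 \geq \pmin$.

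First I would establish the deterministic implication: \emph{if both $M_1$ and $M_2$ are among the $m$ sampled tasks, then $\hat{D}_{\max}(s,a) + \epsilon \geq D_{\max}(s,a)$.} On that event $M_1, M_2 \in \hat{\M}$, so by definition $\hat{D}_{\max}(s,a) \geq \modivhat{s}{a}{M_1}{M_2}$. The hypotheses on the estimator — $\epsilon$-accuracy on $K$ and the one-sided bound $\modivhat{s}{a}{M_1}{M_2} \geq \modiv{s}{a}{\gamma \V{*}{M_2}}{M_1}{M_2}$ off $K$ — give, in both cases, $\modivhat{s}{a}{M_1}{M_2} \geq \modiv{s}{a}{\gamma \V{*}{M_2}}{M_1}{M_2} - \epsilon = D_{\max}(s,a) - \epsilon$. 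Combining the two inequalities yields the implication.

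Next I would lower-bound the probability of the covering event $E \eqdef \{M_1 \text{ and } M_2 \text{ both sampled in } m \text{ i.i.d.\ draws}\}$. Inclusion–exclusion on $E^c$ gives
\begin{equation*}
\Pr(E^c) = (1-p_1)^m + (1-p_2)^m - (1-p_1-p_2)^m .
\end{equation*}
The remaining analytic step is to show that, over the domain $p_1, p_2 \in [\pmin, 1]$ with $p_1 + p_2 \leq 1$, this quantity is maximized at $p_1 = p_2 = \pmin$, giving $2(1-\pmin)^m - (1-2\pmin)^m$ (note $\pmin \leq 1/2$ in the nontrivial case, so $1-2\pmin \geq 0$). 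This follows from monotonicity in each variable: $\partial_{p_1}\!\big[(1-p_1)^m - (1-p_1-p_2)^m\big] = -m(1-p_1)^{m-1} + m(1-p_1-p_2)^{m-1} \leq 0$ because $1-p_1-p_2 \leq 1-p_1$ with both bases in $[0,1]$, and symmetrically in $p_2$. Hence whenever $m$ satisfies $2(1-\pmin)^m - (1-2\pmin)^m \leq \delta$ we get $\Pr(E^c) \leq \delta$, i.e. $\Pr(E) \geq 1-\delta$, and the deterministic implication upgrades this to $\Pr\!\left( \hat{D}_{\max}(s,a) + \epsilon \geq D_{\max}(s,a) \right) \geq 1-\delta$.

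I expect the main obstacle to be the last optimization step: one must check that the candidate maximizer $p_1 = p_2 = \pmin$ is feasible and genuinely optimal over the whole constrained domain — in particular handling the degenerate cases where two MDPs carry essentially all the probability mass (so $p_1 + p_2$ is close to $1$ and $(1-p_1-p_2)^m$ nearly vanishes) and where the $D_{\max}$-maximizing pair is trivial — while the coverage and estimator-accuracy parts are routine.
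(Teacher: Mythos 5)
Your proposal is correct and follows essentially the same route as the paper's proof: a deterministic coverage argument (if the $D_{\max}$-attaining pair $M_1, M_2$ is among the $m$ samples, then $\hat{D}_{\max}(s,a)+\epsilon \geq \prior(s,a)$ by the $\epsilon$-accuracy/one-sided-bound hypotheses), followed by inclusion--exclusion giving $\Pr(X_1 \leq m \cap X_2 \leq m) = 1 - (1-p_1)^m - (1-p_2)^m + (1-p_1-p_2)^m \geq 1 - 2(1-\pmin)^m + (1-2\pmin)^m$. You are in fact slightly more careful than the paper on two points it glosses over --- the degenerate case where the maximizing pair is trivial, and the monotonicity argument justifying that $p_1 = p_2 = \pmin$ maximizes the failure probability --- but these are refinements of the same argument, not a different one.
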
%
This result indicates when $\hat{D}_{\max}(s, a) + \epsilon$ \ub{}s $D_{\max}(s, a)$ with high probability.
In such a case, $\modivhat{s}{a}{M}{\bar{M}}$ of Equation \ref{eq:asym-mdp-pseudo-distance-upperbound} can be replaced by $\min \{ \hat{D}_{\max}(s, a) + \epsilon, \modivhat{s}{a}{M}{\bar{M}} \}$ to tighten the bound on $\mdpdiv{s}{a}{M}{\bar{M}}$.
Assuming a lower bound $\p$ on the sampling probability of a task implies that $\M$ is finite and is seen as a non-adversarial task sampling rule \citep{abel2018policy}.

\section{Experiments}
\label{sec:experiments}

The experiments reported here\footnote{%
	\if\isanonymous1%
	Link to open-source code omitted for anonymity.
	\else%
	Code available at https://github.com/SuReLI/llrl
	\fi%
} illustrate how
the Lipschitz bound (Equation~\ref{eq:ub-on-lipschitz-bound}) provides a tighter \ub{} on $Q^*$, improving the sample complexity of \lrmax{} compared to \rmax{}, and making the transfer of inter-task knowledge effective. 
Graphs are displayed with 95\% confidence intervals.
For information in line with the Machine Learning Reproducibility Check-list~\citep{mlreproducibility2019} see the Appendix, Section~\ref{sec:app:reproducibility-checklist}.

\begin{figure*}[t]%
	\centering%
	\begin{subfigure}{.49\textwidth}%
		\centering%
		\includegraphics[width=\textwidth,clip,trim={0 0 45 35}]{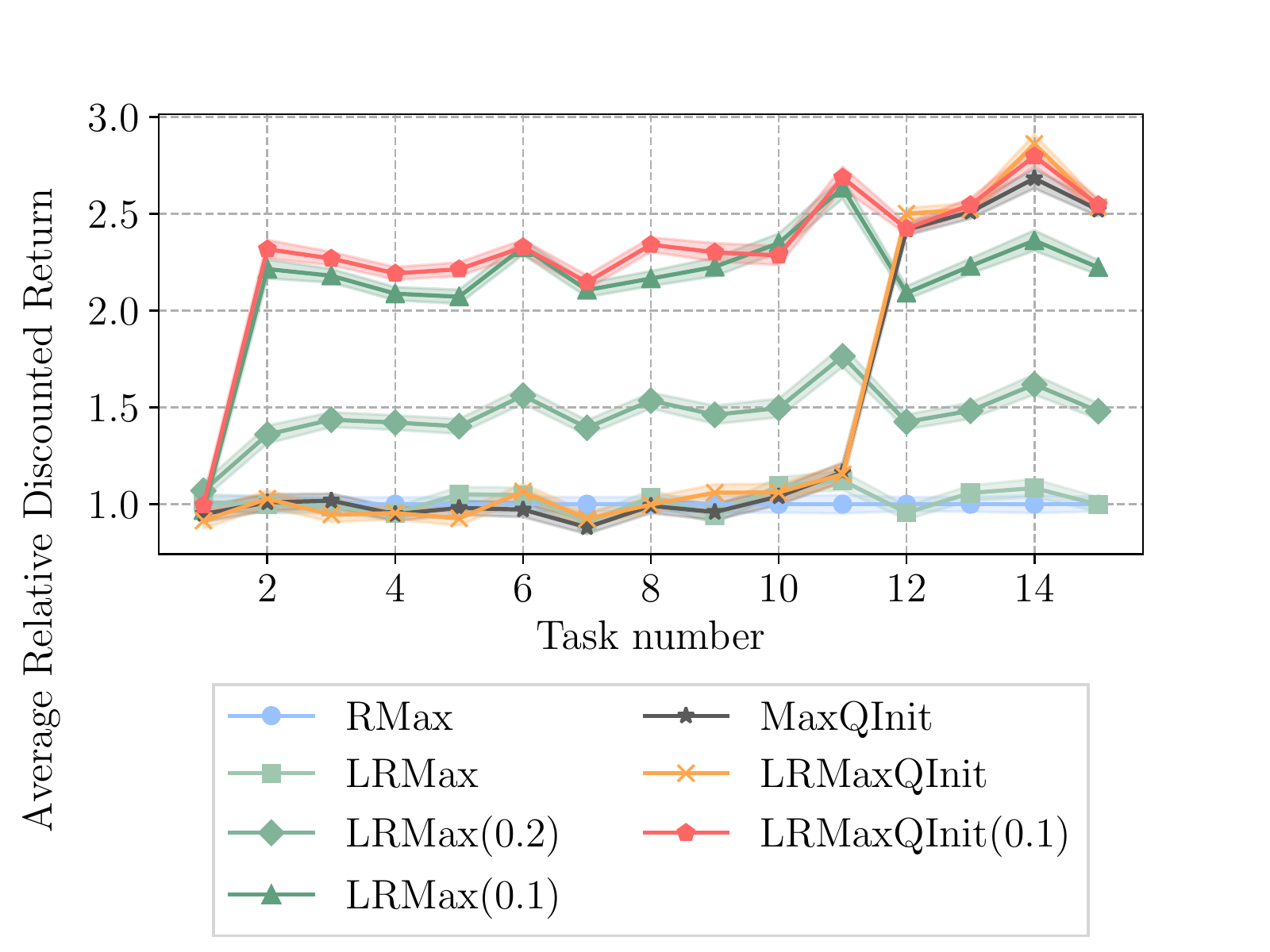}%
		\caption{Average discounted return vs. tasks}%
		\label{fig:discounted_return_vs_task}%
	\end{subfigure}
	\hfill
	\begin{subfigure}{.49\textwidth}%
		\centering%
		\includegraphics[width=\textwidth, clip, trim={10 0 40 40}]{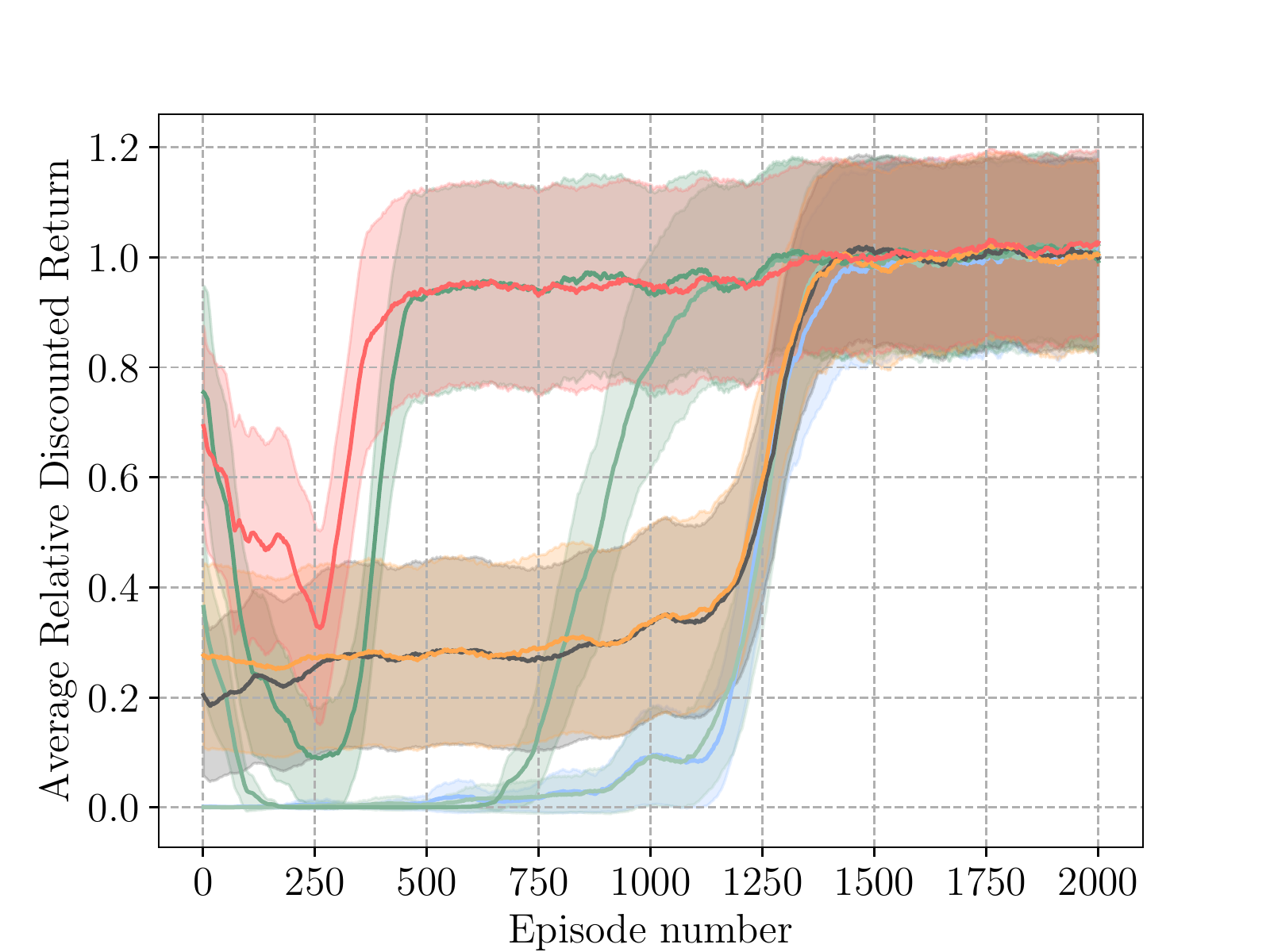}%
		\caption{Average discounted return vs. episodes}%
		\label{fig:discounted_return_vs_episode}%
	\end{subfigure}
	\begin{subfigure}{.49\textwidth}%
		\centering%
		\includegraphics[width=\textwidth, clip, trim={20 15 50 30}]{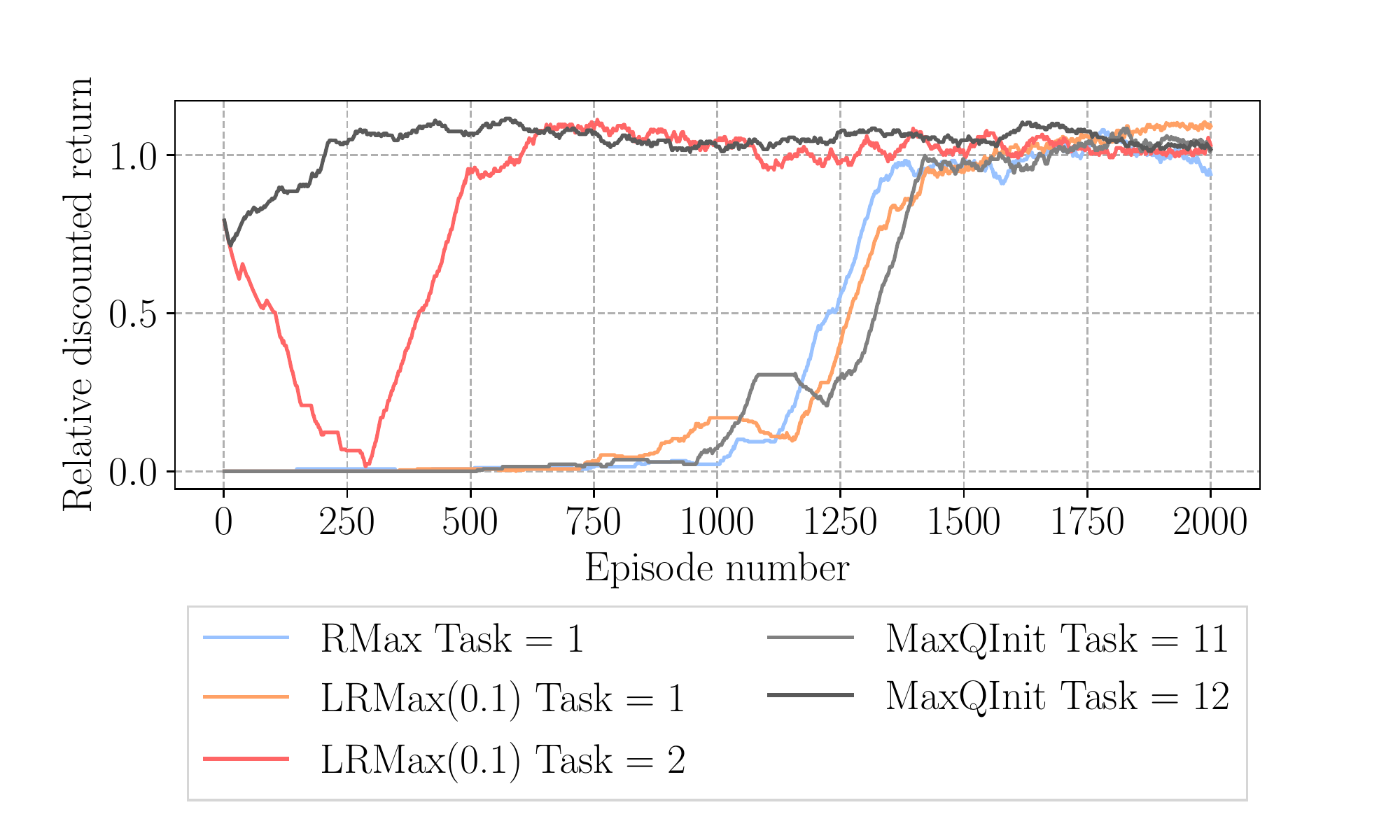}%
		\caption{Discounted return for specific tasks}%
		\label{fig:custom_fig}%
	\end{subfigure}
	\hfill
	\begin{subfigure}{.49\textwidth}%
		\centering%
		\includegraphics[width=\textwidth, clip, trim={0 15 30 42}]{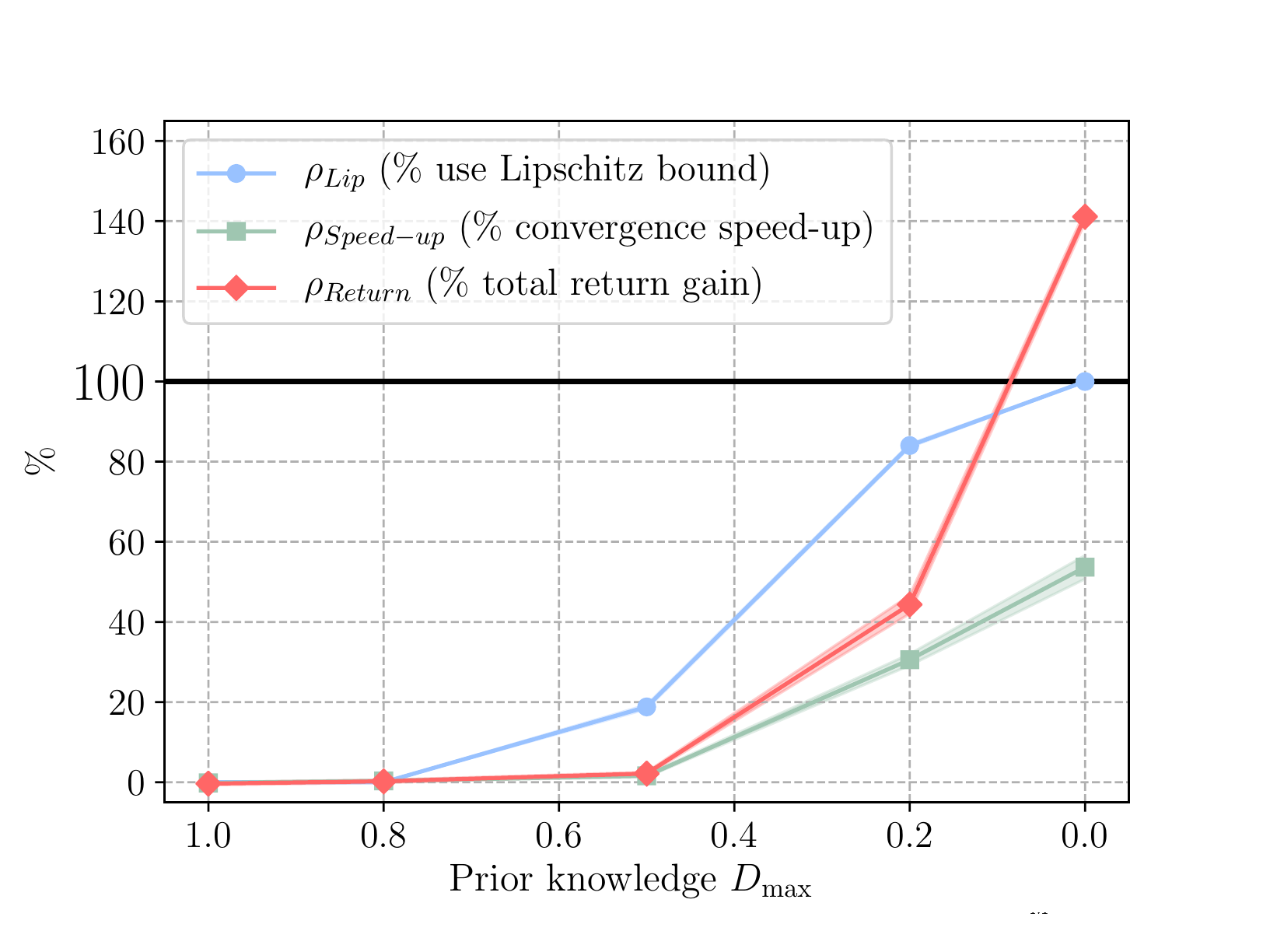}%
		\caption{Algorithmic properties vs. $\prior$}%
		\label{fig:bounds_comparison}%
	\end{subfigure}
	\caption{Experimental results. \lrmax{} benefits from an enhanced sample complexity thanks to the value-transfer method.}%
	\label{fig:experimental-results}
\end{figure*}

We evaluate different variants of \lrmax{} in a \lrl{} experiment.
The \rmax{} algorithm will be used as a no-transfer baseline.
\lrmax{}($x$) denotes Algorithm~\ref{alg:lrmax} with prior $\prior = x$.
\maxqinit{} denotes the \textsc{MaxQInit} algorithm from \citet{abel2018policy}, consisting in a state-of-the art PAC-MDP algorithm.
Both \lrmax{} and \maxqinit{} algorithms achieve value transfer by providing a tighter \ub{} on $Q^*$ than $\frac{1}{1 - \gamma}$.
Computing both \ub{}s and taking the minimum results in combining the two approaches.
We include such a combination in our study with the \lrmaxqinit{} algorithm.
Similarly, the latter algorithm benefiting from prior knowledge $\prior = x$ is denoted by \lrmaxqinit{}($x$).
For the sake of comparison, we only compare algorithms with the same features, namely, tabular, online, PAC-MDP methods, presenting non-negative transfer.

The environment used in all experiments is a variant of the ``tight'' task used by \citet{abel2018policy}.
It is an $11 \times 11$ grid-world, the initial state is in the centre, actions are the cardinal moves (Appendix, Section~\ref{sec:app:tight}).
The reward is always zero except for the three goal cells in the upper-right corner.
Each sampled task has its own reward values, drawn from $[0.8, 1]$ for each of the three goal cells and its own probability of slipping (performing a different action than the one selected), picked in $[0, 0.1]$.
Hence, tasks have different reward and transition functions.
Notice the distinction in applicability between \maxqinit{}, that requires the set of MDPs to be finite, and \lrmax{}, that can be used with any set of MDPs.
For the comparison between both to be possible, we drew tasks from a finite set of 5 MDPs.
We sample 15 tasks sequentially among this set, each run for 2000 episodes of length 10.
The operation is repeated 10 times to narrow the confidence intervals.
We set $n_{known} = 10$, $\delta=0.05$, and $\epsilon = 0.01$ (discussion in Appendix, Section~\ref{sec:app:nknown}).
Other \lrl{} experiments are reported in Appendix, Section~\ref{sec:app:additional-experiments}.

The results are reported in Figure~\ref{fig:experimental-results}.
Figure~\ref{fig:discounted_return_vs_task} displays the discounted return for each task, averaged across episodes.
Similarly, Figure~\ref{fig:discounted_return_vs_episode} displays the discounted return for each episode, averaged across tasks (same color code as Figure~\ref{fig:discounted_return_vs_task}).
Figure~\ref{fig:custom_fig} displays the discounted return for five specific instances, detailed below.
To avoid inter-task disparities, all the aforementioned discounted returns are displayed relative to an estimator of the optimal expected return for each task. 
For readability, Figures~\ref{fig:discounted_return_vs_episode} and~\ref{fig:custom_fig} display a moving average over 100 episodes.
Figure~\ref{fig:bounds_comparison} reports the benefits of various values of $\prior$ on the algorithmic properties.

In Figure~\ref{fig:discounted_return_vs_task}, we first observe that \lrmax{} benefits from the transfer method, as the average discounted return increases as more tasks are experienced.
Moreover, this advantage appears as early as the second task.
In contrast, \maxqinit{} requires to wait for task 12 before benefiting from transfer.
As suggested in Section~\ref{sec:improving-lrmax}, increasing amounts of prior knowledge allow the \lrmax{} transfer method to be more efficient: a smaller known \ub{} $\prior$ on $\modivhat{s}{a}{M}{\bar{M}}$ accelerates convergence.
Combining both approaches in the \lrmaxqinit{} algorithm outperforms all other methods.
Episode-wise, we observe in Figure~\ref{fig:discounted_return_vs_episode} that the \lrmax{} transfer method allows for faster convergence, \ie{}, lower sample complexity.
Interestingly, \lrmax{} exhibits three stages in the learning process.
1) The first episodes are characterized by a direct exploitation of the transferred knowledge, causing these episodes to yield high payoff.
This behavior is a consequence of the combined facts that the Lipschitz bound (Equation~\ref{eq:ub-on-lipschitz-bound}) is larger on promising regions of $\SA$ seen on previous tasks and the fact that \lrmax{} acts greedily \wrt{} that bound.
2) This high performance regime is followed by the exploration of unknown regions of $\SA$, in our case yielding low returns.
Indeed, as promising regions are explored first, the bound becomes tighter for the corresponding state-action pairs, enough for the Lipschitz bound of unknown pairs to become larger, thus driving the exploration towards low payoff regions.
Such regions are then identified and never revisited. 
3) Eventually, \lrmax{} stops exploring and converges to the optimal policy.
Importantly, in all experiments, \lrmax{} never experiences negative transfer, as supported by the provability of the Lipschitz \ub{} with high probability.
\lrmax{} is at least as efficient as the no-transfer \rmax{} baseline.

Figure~\ref{fig:custom_fig} displays the collected returns of \rmax{}, \lrmax{}(0.1), and \maxqinit{} for specific tasks.
We observe that \lrmax{} benefits from transfer as early as Task 2, where the previous 3-stage behavior is visible.
\maxqinit{} takes until task 12 to leverage the transfer method.
However, the bound it provides is tight enough that it does not have to explore.

In Figure~\ref{fig:bounds_comparison}, we display the following quantities for various values of $\prior$:
$\rho_{Lip}$, the fraction of the time the Lipschitz bound was tighter than the \rmax{} bound $\frac{1}{1 - \gamma}$;
$\rho_{Speed\text{-}up}$, is the relative gain of time steps before convergence when comparing \lrmax{} to \rmax{}.
This quantity is estimated based on the last updates of the empirical model $\bar{M}$;
$\rho_{Return}$, is the relative total return gain on 2000 episodes of \lrmax{} \wrt{} \rmax{}.
First, we observe an increase of $\rho_{Lip}$ as $\prior$ becomes tighter.
This means that the Lipschitz bound of Equation~\ref{eq:ub-on-lipschitz-bound} becomes effectively smaller than $\frac{1}{1 - \gamma}$.
This phenomenon leads to faster convergence, indicated by $\rho_{Speed\text{-}up}$.
Eventually, this increased convergence rate allows for a net total return gain, as can be seen with the increase of $\rho_{Return}$.

Overall, in this analysis, we have showed that \lrmax{} benefits from an enhanced sample complexity thanks to the value-transfer method.
The knowledge of a prior $\prior$ increases this benefit.
The method is comparable to the \maxqinit{} method and has some advantages such as the early fitness for use and the applicability to infinite sets of tasks.
Moreover, the transfer is non-negative while preserving the PAC-MDP guarantees of the algorithm.
Additionally, we show in Appendix, Section~\ref{sec:app:prior-use-experiment} that, when provided with any prior $\prior$, \lrmax{} increasingly stops using it during exploration, confirming the claim of Section~\ref{sec:improving-lrmax} that providing $\prior$ enables transfer even if its value is of little importance.
%

\section{Conclusion}

We have studied theoretically the Lipschitz continuity property of the optimal Q-function in the MDP space \wrt{} a new metric. 
We proved a local Lipschitz continuity result, establishing that the optimal Q-functions of two close MDPs are themselves close to each other.
%
We then proposed a value-transfer method using this continuity property with the Lipschitz \rmax{} algorithm, practically implementing this approach in the \lrl{} setting.
The algorithm preserves PAC-MDP guarantees, accelerates learning in subsequent tasks and exhibits no negative transfer.
Improvements of the algorithm were discussed in the form of prior knowledge on the maximum distance between models and online estimation of this distance.
As a non-negative, similarity-based, PAC-MDP transfer method, the \lrmax{} algorithm is the first method of the literature combining those three appealing features.
%
We showcased the algorithm in \lrl{} experiments and demonstrated empirically its ability to accelerate learning while not experiencing any negative transfer.
Notably, our approach can directly extend other PAC-MDP algorithms \citep{szita2010model,rao2012v,pazis2016improving,dann2017unifying} to the \l{} setting.
In hindsight, we believe this contribution provides a sound basis to non-negative value transfer via MDP similarity, a study that was lacking in the literature.
Key insights for the practitioner lie both in the theoretical analysis and in the practical derivation of a transfer scheme achieving non-negative transfer with PAC guarantees.
Further, designing scalable methods conveying the same intuition could be a promising research direction.

\section*{Acknowledgments}

\if\isanonymous1
Omitted for anonymity.
\else
We would like to thank Dennis Wilson for fruitful discussions and comments on the paper.
This research was supported by
the Occitanie region, France;
ISAE-SUPAERO (Institut Supérieur de l'Aéronautique et de l'Espace);
fondation ISAE-SUPAERO;
\'Ecole Doctorale Syst\`emes;
and ONERA, the French Aerospace Lab.
\fi

\bibliography{llrlbib}


\if\doincludeappendixatbottom1

\newpage
\onecolumn

\vspace*{1em}
\begin{center}
	{\LARGE\bf Lipschitz Lifelong Reinforcement Learning\\\vspace{0.5em}Appendix}
\end{center}
\vspace{1em}
\begin{tabular}[t]{p{\textwidth}}\centering \Large{\bf
	Erwan Lecarpentier\textsuperscript{\rm 1, 2}, David Abel\textsuperscript{\rm 3}, Kavosh Asadi\textsuperscript{\rm 3, 4}, Yuu Jinnai\textsuperscript{\rm 3},\\Emmanuel Rachelson\textsuperscript{\rm 1}, Michael L. Littman\textsuperscript{\rm 3}\\%
}%
	\vspace{3pt}  \normalsize  \affiliations_
\end{tabular}
\vspace{1em}

\section{Negative transfer}
\label{sec:example-negative-transfer}

In the \lrl{} setting, it is reasonable to think that knowledge gained on previous MDPs could be re-used to improve the performance in new MDPs.
Such a practice, known as knowledge transfer, sometimes does cause the opposite effect, \ie{}, decreases the performance.
In such a case, we talk about \emph{negative transfer}.
Several attempt to formally define negative transfer have been done, but researchers hardly agree on a single definition, as \emph{performance} can be defined in various ways.
For instance, it can be characterized by the speed of convergence, the area under the learning curve, the final score of the learned policy or classifier, and many other things.
Defining negative transfer is out of the scope of this paper, but let us give an example of why this phenomenon can be problematic.

In their paper, \citet{song2016measuring} propose a transfer methods based on the metric between MDPs they introduce, stemming from the bi-simulation metric introduced by \citet{ferns2004metrics}.
In their method, a bi-simulation metric is computed between each pair of states belonging respectively to the source and target MDPs.
Roughly, this metric tells how \emph{different} are the transition and reward models corresponding to the states pairs, for the action maximizing th distance.
More precisely, if we note $(\transition, \Reward)$ and $(\bar{\transition}, \bar{\Reward})$ the models of two MDPs, and $\tuple{s}{s'} \in \S$ a state pair, the distance $d$ between $s$ and $s'$ is defined by
\begin{equation}
d(s, s') = \MAXENS{a \in \A}{\absnorm{\Rew{s}{a} - \Rewbar{s'}{a}} + c \; \wasserstein{1}{\tra{s}{a}{\cdot}}{\trabar{s'}{a}{\cdot}}}
\mthspc ,
\label{eq:bisim}
\end{equation}
where $c \in \R$ is a positive constant and $\wass{1}$ is the 1-Wasserstein metric.
For each state of the target model, the closest counterpart state (with the smallest bi-simulation distance) of the source MDP is identified and its learned \Qval{}s are used to initialize the \Qfun{} of the target MDP.
In their experiments, \citet{song2016measuring} run a standard Q-Learning algorithm~\citep{watkins1992q} with an $\epsilon$-greedy exploration strategy thereafter.

\begin{figure}[h]
	\centering
	\includegraphics[width=.8\textwidth]{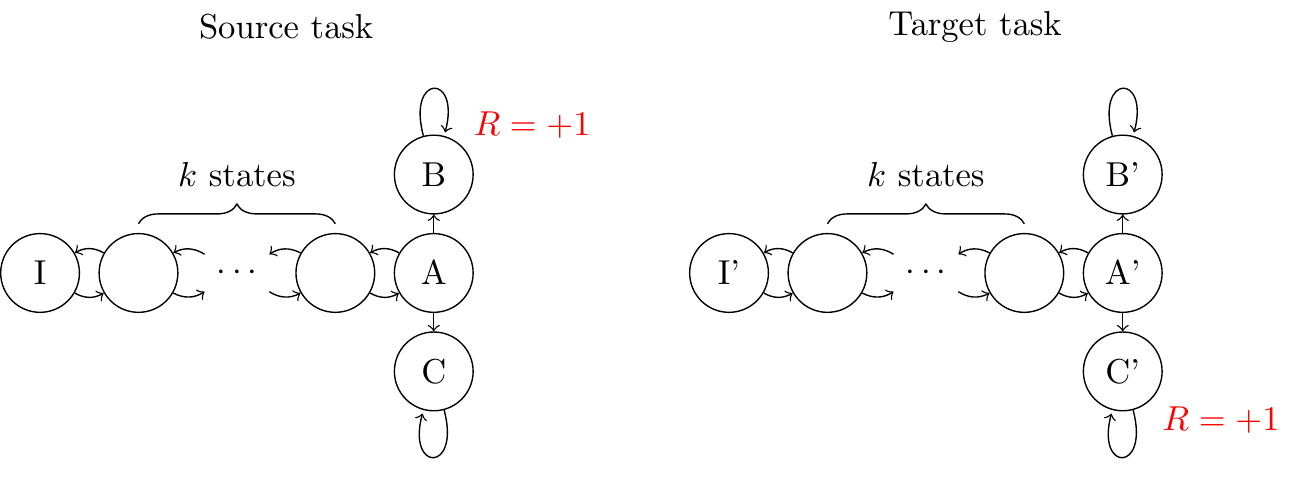}
	\caption{The T-shaped MDP transfer task.}
	\label{fig:negative-transfer-mdp}
\end{figure}

Let us now consider applying this method to a similar task to the T-shaped MDP transfer task proposed by \citet{taylor2009transfer}.
The source and target tasks are respectively described on the left and right sides of Figure~\ref{fig:negative-transfer-mdp}.
In each task, the states are represented by the circles and the arrows between them correspond to the available actions that allow to move from one state to the other.
The initial state of both tasks is the left state I for the source task and I$'$ for the target task.
Between the states I and A in the source task (respectively I$'$ to A$'$ in the target task) are $k$ states, $k$ being a parameter increasing the distance to travel from I to A (respectively I$'$ to A$'$).
The tasks are deterministic and the reward is zero everywhere except for the state B in the source task and C$'$ in the target task where a reward of $+1$ is received.
Consequently, the optimal policy in the source task is to go to the state A and then to the state B.
In the target task, the same applies except that a transition to state C should be applied in place of state B$'$ when the agent is in state A$'$.

Regardless of the parameters used in the bi-simulation metric of Equation~\ref{eq:bisim}, the direct state transfer method from \citet{song2016measuring} maps the following states together as they share the exact same model:
\begin{align*}
\text{I} \mthspc \longleftrightarrow & \mthspc \text{I}' \\
\text{k states} \mthspc \longleftrightarrow & \mthspc \text{k states} \\
\text{A} \mthspc \longleftrightarrow & \mthspc \text{A}'
\mthspc .
\end{align*}
Hence, during learning, the \Qfun{} of the target task is initialized with the values of the \Qfun{} of the source task.
Therefore, the behavior derived with the Q-Learning algorithm is the optimal policy of the source task, but in the target task.
Depending on the value of the learning rate of the algorithm, the time to favor action DOWN in state A$'$ instead of action UP can be arbitrarily long.
Also, depending on the value of $\epsilon$, the exploration of state C$'$ due to the $\epsilon$-greedy strategy can be arbitrarily unlikely.
Finally, the time needed for one of those two events to occur increases proportionally to the value of $k$, which can be arbitrarily large.

This case illustrates the difficulty facing any transfer method in the general context of \lrl{}.
The method proposed by \citet{song2016measuring} can be highly efficient in some cases as they show in experiments, but the lack of theoretical guarantees makes negative transfer possible.
Generally, using a similarity measure such that the bi-simulation metric helps to discourage using some source tasks when the computed similarity is too low.
However, as we saw in the T-shaped MDP example, this rule is not absolute and the choice of the metric is important.
The approach we develop in this paper aims at avoiding negative transfer by providing a conservative transferred knowledge that is simply of no use when the similarity between source and target tasks is too low.
This is intuitive as we do not expect \emph{any} task to provide transferable knowledge to \emph{any} other task.

\section{Discussion on metrics and related notions}
\label{sec:app:local-mdps-distance-discussion}

A \emph{metric} on a set $X$ is a function $m: X \times X \rightarrow \R$ which has the following properties for any $x, y, z \in X$:
\begin{enumerate}[align=left]
	\item[P1.] $m(x, y) \geq 0$ (positivity),
	\item[P2.] $m(x, y) = 0 \Leftrightarrow x = y$ (positive definiteness),
	\item[P3.] $m(x, y) = m(y, x)$ (symmetry),
	\item[P4.] $m(x, z) \leq m(x, y) + m(y, z)$ (triangle inequality).
\end{enumerate}
If property P2 is not verified by $m$, but instead we have for any $x \in X$ that $m(x, x) = 0$, then $m$ is called a \emph{pseudo-metric}.
If $m$ only verifies P1, P2 and P4 then $m$ is called a \emph{quasi-metric}.
If $m$ only verifies P1 and P2 and if $X$ is a set of probability measures, then $m$ is called a \emph{divergence}.

From this, the pseudo-metric between models of Definition~\ref{def:pseudo-metric-between-models} is indeed a pseudo-metric as it is relative to a positive function $f$ that could be equal to zero and break property P2.

The local MDP dissimilarity between MDPs $\mdpdiv{s}{a}{M}{\bar{M}}$ of Proposition~\ref{proposition:local-lipschitz-continuity} does not respect properties P2 and P3, hence the name \emph{dissimilarity}.
The $\mdppm{s}{a}{M}{\bar{M}} = \min \left\{ \mdpdiv{s}{a}{M}{\bar{M}}, \mdpdiv{s}{a}{\bar{M}}{M} \right\}$ quantity, however, regains property P3 and is hence a pseudo-metric.
A noticeable consequence is that Proposition~\ref{proposition:local-lipschitz-continuity} is \qm{in the spirit} of a Lipschitz continuity result but cannot be called as such, hence the name \emph{pseudo-Lipschitz continuity}.

The same goes for the global dissimilarity $\mdpdiv{}{}{M}{\bar{M}} = \frac{1}{1 - \gamma} \max_{s, a \in \S \times \A} \left( \modiv{s}{a}{\gamma V^*_{\bar{M}}}{M}{\bar{M}} \right)$.
However, using $\min \left\{d_{M}^{\bar{M}}, d^{M}_{\bar{M}} \right\}$ allows to regain property 3 and makes this quantity a pseudo-metric again between MDPs.

\section{Proof of Proposition~\ref{proposition:local-lipschitz-continuity}}
\label{sec:proof-local-continuity}

\begin{notation}
	Given two sets $X$ and $Y$, we note $\functionspace{X}{Y}$ the set of functions defined on the domain $X$ with codomain $Y$.
	\label{notation:functions-set}
\end{notation}

\begin{lemma}
	\label{lemma:mdp-distance-fp}
	Given two MDPs $M, \bar{M} \in \M$, the following equation on $d \in \functionspace{\SA}{\R}$ is a \fp{} equation admitting a unique solution for any $\tuple{s}{a} \in \SA$:
	\begin{equation*}
	d_{s a} = \modiv{s}{a}{\gamma V^*_{\bar{M}}}{M}{\bar{M}} + \gamma \sum_{s' \in \S} T_{s s'}^a \max_{a' \in \A} d_{s' a'}.
	\end{equation*}
	We refer to this unique solution as $\mdpdiv{s}{a}{M}{\bar{M}}$.
\end{lemma}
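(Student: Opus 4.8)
The plan is to exhibit an operator on the space $\functionspace{\SA}{\R}$ whose fixed points are exactly the solutions of the displayed equation, and to show this operator is a $\gamma$-contraction in the sup-norm; the Banach fixed-point theorem then yields existence and uniqueness. Concretely, for $d \in \functionspace{\SA}{\R}$ define
\begin{equation*}
(\mathcal{T}d)(s,a) \eqdef \modiv{s}{a}{\gamma V^*_{\bar{M}}}{M}{\bar{M}} + \gamma \sum_{s' \in \S} T_{s s'}^a \max_{a' \in \A} d(s',a').
\end{equation*}
A solution $d_{sa}$ of the stated equation is precisely a fixed point of $\mathcal{T}$, so it suffices to analyze $\mathcal{T}$.

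First I would check that $\mathcal{T}$ maps $\functionspace{\SA}{\R}$ into itself, which is immediate since $\modiv{s}{a}{\gamma V^*_{\bar{M}}}{M}{\bar{M}}$ is a finite nonnegative number (bounded by $\frac{1+\gamma}{1-\gamma}$ as noted in the main text) and the sum is finite on a discrete state space. Next I would equip $\functionspace{\SA}{\R}$ with the norm $\inftynorm{d} = \max_{s,a} \absnorm{d(s,a)}$, under which it is a complete metric space (finite-dimensional, since $\SA$ is finite). The core estimate is the contraction bound: for any $d_1, d_2$,
\begin{align*}
\absnorm{(\mathcal{T}d_1)(s,a) - (\mathcal{T}d_2)(s,a)}
&= \gamma \absnorm{\sum_{s'} T_{ss'}^a \left( \max_{a'} d_1(s',a') - \max_{a'} d_2(s',a') \right)} \\
&\leq \gamma \sum_{s'} T_{ss'}^a \absnorm{\max_{a'} d_1(s',a') - \max_{a'} d_2(s',a')} \\
&\leq \gamma \sum_{s'} T_{ss'}^a \inftynorm{d_1 - d_2}
= \gamma \inftynorm{d_1 - d_2},
\end{align*}
using the standard fact that $\absnorm{\max_x g_1(x) - \max_x g_2(x)} \leq \max_x \absnorm{g_1(x) - g_2(x)}$ and that $\sum_{s'} T_{ss'}^a = 1$. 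Taking the maximum over $(s,a)$ gives $\inftynorm{\mathcal{T}d_1 - \mathcal{T}d_2} \leq \gamma \inftynorm{d_1 - d_2}$ with $\gamma < 1$, so $\mathcal{T}$ is a contraction. Banach's theorem then gives a unique fixed point, which we name $\mdpdiv{s}{a}{M}{\bar{M}}$.

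I do not expect a serious obstacle here — the argument is the standard Bellman-operator contraction template, the only mild subtleties being to confirm that the inhomogeneous term $\modiv{s}{a}{\gamma V^*_{\bar{M}}}{M}{\bar{M}}$ is well-defined and finite (so that $\mathcal{T}$ genuinely acts on the claimed space) and to invoke the max-of-maxima inequality correctly. If one wants to avoid citing Banach abstractly, the same conclusion follows by noting $\mathcal{T}$ is monotone and a $\gamma$-contraction and iterating from any starting point, e.g. $d \equiv 0$, with the iterates forming a Cauchy sequence; either route is routine.
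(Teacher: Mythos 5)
Your argument is correct and is essentially identical to the paper's own proof: both define the affine operator with the inhomogeneous term $\modiv{s}{a}{\gamma V^*_{\bar{M}}}{M}{\bar{M}}$, establish the $\gamma$-contraction bound in the sup-norm via the max-of-maxima inequality and $\sum_{s'} T_{ss'}^a = 1$, and conclude by the Banach fixed-point theorem. Your version is marginally tidier in working with absolute values directly rather than bounding $Lf - Lg$ one-sidedly and appealing to symmetry, but the content is the same.
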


\begin{proof}[Proof of Lemma~\ref{lemma:mdp-distance-fp}]
	The proof follows closely that in \cite{puterman2014markov} that proves that the Bellman operator over value functions is a contraction mapping.
	Let $L$ be the functional operator that maps any function $d \in \functionspace{\SA}{\R}$ to
	\begin{equation*}
	\FUNCTION{Ld}{\S \times \A}{\R}{s, a}{\modiv{s}{a}{\gamma V^*_{\bar{M}}}{M}{\bar{M}} + \gamma \sum_{s' \in \S} \tra{s}{a}{s'} \max_{a' \in \A} d_{s' a'} \mthspc .}
	\end{equation*}
	Then for $f$ and $g$, two functions from $\S \times \A$ to $\R$ and $\tuple{s}{a} \in \SA$, we have that
	\begin{align*}
	\label{eq:diff-func-operator-lemma}
	Lf_{s a} - Lg_{s a}
	& = \gamma \sum_{s' \in \S} \tra{s}{a}{s'} \left( \max_{a' \in \A} f_{s' a'} - \max_{a' \in \A} g_{s' a'} \right) \\
	& \leq \gamma \sum_{s' \in \S} \tra{s}{a}{s'} \max_{a' \in \A} \left(f_{s' a'} - g_{s' a'}\right) \\
	& \leq \gamma \inftynorm{f - g}
	\mthspc .
	\end{align*}
	Since this is true for any pair $\tuple{s}{a} \in \SA$, we have that
	\begin{equation*}
	\inftynorm{Lf - Lg} \leq \gamma \inftynorm{f - g}
	\mthspc .
	\end{equation*}
	Since $\gamma < 1$, $L$ is a contraction mapping in the metric space $\tuple{\functionspace{\SA}{\R}}{\inftynorm{\cdot}}$.
	This metric space being complete and non-empty, it follows by direct application of the Banach \fp{} theorem that the equation $d = L d$ admits a unique solution.
\end{proof}

\begin{proof}[Proof of Proposition~\ref{proposition:local-lipschitz-continuity}]
	The proof is by induction.
	The value iteration sequence of iterates $\left( \Q{n}{M} \right)_{n \in \N}$ of the optimal \Qfun{} of any MDP $M \in \M$ is defined for all $\tuple{s}{a} \in \SA$ by:
	\begin{align*}
	\Q{0}{M}(s, a) & = 0 \mthspc , \\
	\Q{n + 1}{M}(s, a) & = \Rew{s}{a} + \gamma \sum_{s' \in \S} \tra{s}{a}{s'} \max_{a' \in \A} \Q{n}{M}(s', a'), \mthspc \forall n \in \N \mthspc .
	\end{align*}
	Consider two MDPs $M, \bar{M} \in \M$.
	It is obvious that $\absnorm{\Q{0}{M}(s, a) - \Q{0}{\bar{M}}(s, a)} \leq \mdpdiv{s}{a}{M}{\bar{M}}$ for all $\tuple{s}{a} \in \SA$.
	Suppose the property $\absnorm{\Q{n}{M}(s, a) - \Q{n}{\bar{M}}(s, a)} \leq \mdpdiv{s}{a}{M}{\bar{M}}$ true at rank $n \in \N$ for all $\tuple{s}{a} \in \SA$.
	Consider now the rank $n + 1$ and a pair $\tuple{s}{a} \in \SA$:
	\begin{align*}
	\absnorm{\Q{n + 1}{M}(s, a) - \Q{n + 1}{\bar{M}}(s, a)}
	& = \absnorm{\Rew{s}{a}  - \Rewbar{s}{a} + \gamma \sum_{s' \in \S} \left[ \tra{s}{a}{s'} \max_{a' \in \A} \Q{n}{M}(s', a') - \trabar{s}{a}{s'} \max_{a' \in \A} \Q{n}{\bar{M}}(s', a') \right]} \\
	& \leq \absnorm{\Rew{s}{a} - \Rewbar{s}{a}} + \gamma \sum_{s' \in \S} \absnorm{\tra{s}{a}{s'} \max_{a' \in \A} \Q{n}{M}(s', a') - \trabar{s}{a}{s'} \max_{a' \in \A} \Q{n}{\bar{M}}(s', a')} \\
	& \leq \absnorm{\Rew{s}{a} - \Rewbar{s}{a}} + \gamma \sum_{s' \in \S} \max_{a' \in \A} \Q{n}{\bar{M}}(s', a') \absnorm{\tra{s}{a}{s'} - \trabar{s}{a}{s'}} \\ 
	& \phleq + \gamma \sum_{s' \in \S} \tra{s}{a}{s'} \absnorm{\max_{a' \in \A} \Q{n}{M}(s', a') - \max_{a' \in \A} \Q{n}{\bar{M}}(s', a')} \\
	& \leq \absnorm{\Rew{s}{a} - \Rewbar{s}{a}} + \sum_{s' \in \S} \gamma V^{*}_{\bar{M}}(s') \absnorm{\tra{s}{a}{s'} - \trabar{s}{a}{s'}} + \gamma \sum_{s' \in \S} \tra{s}{a}{s'} \max_{a' \in \A} \absnorm{\Q{n}{M}(s', a') - \Q{n}{\bar{M}}(s', a')} \\
	& \leq \modiv{s}{a}{\gamma \V{*}{\bar{M}}}{M}{\bar{M}} + \gamma \sum_{s' \in \S} \tra{s}{a}{s'} \max_{a'} \mdpdiv{s'}{a'}{M}{\bar{M}} \\
	& \leq \mdpdiv{s}{a}{M}{\bar{M}}
	\mthspc ,
	\end{align*}
	where we used Lemma~\ref{lemma:mdp-distance-fp} in the last inequality.
	Since $\Q{*}{M}$ and $\Q{*}{\bar{M}}$ are respectively the limits of the sequences $\left( \Q{n}{M} \right)_{n\in\N}$ and $\left( \Q{n}{\bar{M}} \right)_{n\in\N}$, it results from passage to the limit that
	\begin{equation*}
	\absnorm{\Q{*}{M}(s, a) - \Q{*}{\bar{M}}(s, a)} \leq \mdpdiv{s}{a}{M}{\bar{M}} \mthspc .
	\end{equation*}
	By symmetry, we also have $\absnorm{\Q{*}{M}(s, a) - \Q{*}{\bar{M}}(s, a)} \leq \mdpdiv{s}{a}{M}{\bar{M}}$ and we can take the minimum of the two valid \ub{}s, yielding:
	\begin{equation*}
	\absnorm{\Q{*}{M}(s, a) - \Q{*}{\bar{M}}(s, a)} \leq \min \left\{ \mdpdiv{s}{a}{M}{\bar{M}}, \mdpdiv{s}{a}{\bar{M}}{M} \right\} \mthspc ,
	\end{equation*}
	which concludes the proof.
\end{proof}

\section{Similar results to Proposition~\ref{proposition:local-lipschitz-continuity}}
\label{sec:app:other-results}

Similar results to Proposition~\ref{proposition:local-lipschitz-continuity} can be derived.
First, an important consequence is the global pseudo-Lipschitz continuity result presented below.
\begin{proposition}[Global pseudo-Lipschitz continuity]
	\label{proposition:global-continuity}
	For two MDPs $M$, $\bar{M}$, for all $(s, a) \in \S \times \A$,
	\begin{equation}
	\label{eq:global-continuity}
	|Q^*_M(s, a) - Q^*_{\bar{M}}(s, a)| \leq \mdppm{}{}{M}{\bar{M}},
	\end{equation}
	with $\mdppm{}{}{M}{\bar{M}} \eqdef \min \SET{ \mdpdiv{}{}{M}{\bar{M}}, \mdpdiv{}{}{\bar{M}}{M} }$ and
	\begin{equation*}
	\mdpdiv{}{}{M}{\bar{M}} \triangleq \frac{1}{1 - \gamma} \MAXENS{s, a \in \S \times \A}{\modiv{s}{a}{\gamma V^*_{\bar{M}}}{M}{\bar{M}}}
	.
	\end{equation*}
\end{proposition}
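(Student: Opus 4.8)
The plan is to obtain the global bound as a short consequence of the local one (Proposition~\ref{proposition:local-lipschitz-continuity}), by showing that the fixed-point solution $\mdpdiv{s}{a}{M}{\bar{M}}$ is uniformly bounded by $\mdpdiv{}{}{M}{\bar{M}} = \frac{1}{1-\gamma}\MAXENS{s, a \in \S \times \A}{\modiv{s}{a}{\gamma V^*_{\bar{M}}}{M}{\bar{M}}}$. Write $\Delta \triangleq \MAXENS{s, a \in \S \times \A}{\modiv{s}{a}{\gamma V^*_{\bar{M}}}{M}{\bar{M}}}$, which is finite since $\S$ and $\A$ are finite. Recall the operator $L$ introduced in the proof of Lemma~\ref{lemma:mdp-distance-fp}, acting on $\functionspace{\SA}{\R}$: it is monotone (if $d \leq d'$ pointwise then $Ld \leq Ld'$, because $T \geq 0$ and the $\modiv{s}{a}{\gamma V^*_{\bar{M}}}{M}{\bar{M}}$ term is common), and by Lemma~\ref{lemma:mdp-distance-fp} it is a $\gamma$-contraction on the complete space $\tuple{\functionspace{\SA}{\R}}{\inftynorm{\cdot}}$ with unique fixed point $\mdpdiv{\cdot}{\cdot}{M}{\bar{M}}$.

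First I would check that the constant function $c \equiv \frac{\Delta}{1-\gamma}$ is a super-solution, i.e.\ $Lc \leq c$ pointwise: indeed $Lc_{sa} = \modiv{s}{a}{\gamma V^*_{\bar{M}}}{M}{\bar{M}} + \gamma \tfrac{\Delta}{1-\gamma} \leq \Delta + \gamma \tfrac{\Delta}{1-\gamma} = \tfrac{\Delta}{1-\gamma} = c$. Applying the monotone operator $L$ repeatedly then gives $L^{n+1}c \leq L^n c \leq \cdots \leq c$ for all $n$, and since $L^n c$ converges to the unique fixed point $\mdpdiv{\cdot}{\cdot}{M}{\bar{M}}$ (Banach fixed-point theorem, exactly as in the proof of Lemma~\ref{lemma:mdp-distance-fp}), passing to the limit yields $\mdpdiv{s}{a}{M}{\bar{M}} \leq \tfrac{\Delta}{1-\gamma} = \mdpdiv{}{}{M}{\bar{M}}$ for every $(s,a) \in \SA$.

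It then suffices to combine this uniform bound with Proposition~\ref{proposition:local-lipschitz-continuity}: for all $(s,a) \in \SA$,
\begin{align*}
|Q^*_M(s, a) - Q^*_{\bar{M}}(s, a)| &\leq \mdppm{s}{a}{M}{\bar{M}} = \min\left\{ \mdpdiv{s}{a}{M}{\bar{M}}, \mdpdiv{s}{a}{\bar{M}}{M} \right\} \\
&\leq \min\left\{ \mdpdiv{}{}{M}{\bar{M}}, \mdpdiv{}{}{\bar{M}}{M} \right\} = \mdppm{}{}{M}{\bar{M}},
\end{align*}
which is the claim. An equivalent route, bypassing the super-solution argument, is to rerun the value-iteration induction from the proof of Proposition~\ref{proposition:local-lipschitz-continuity}, replacing the local term $\modiv{s}{a}{\gamma V^*_{\bar{M}}}{M}{\bar{M}}$ by its uniform bound $\Delta$ at each step; this gives $|Q^n_M(s,a) - Q^n_{\bar{M}}(s,a)| \leq \Delta \sum_{k=0}^{n-1}\gamma^k \leq \tfrac{\Delta}{1-\gamma}$, and one concludes by passing to the limit and symmetrizing. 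There is no genuine obstacle here: the global statement is strictly weaker than the local one, so the work reduces to the bookkeeping above; the only point deserving a line of care is the justification that monotone iteration of $L$ from the constant super-solution converges to the fixed point, which is immediate from the contraction property already established in Lemma~\ref{lemma:mdp-distance-fp}.
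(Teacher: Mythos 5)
Your proof is correct, but your primary route differs from the paper's. The paper does not go through Proposition~\ref{proposition:local-lipschitz-continuity} at all: it reruns the value-iteration induction from scratch, showing by induction that $\absnorm{\Q{n}{M}(s,a)-\Q{n}{\bar{M}}(s,a)} \leq \mdpdiv{}{}{M}{\bar{M}}$ at every rank, using the uniform bound $\max_{s,a}\modiv{s}{a}{\gamma V^*_{\bar{M}}}{M}{\bar{M}}\left(1+\tfrac{\gamma}{1-\gamma}\right)=\mdpdiv{}{}{M}{\bar{M}}$ in the inductive step, then passes to the limit and symmetrizes. You instead treat the local result as a black box and reduce the global statement to the claim that the fixed point $\mdpdiv{s}{a}{M}{\bar{M}}$ of Equation~\ref{eq:asym-mdp-pseudo-distance} is uniformly bounded by $\tfrac{\Delta}{1-\gamma}$, which you establish via monotonicity of $L$ plus the super-solution $c\equiv\tfrac{\Delta}{1-\gamma}$ and the contraction property of Lemma~\ref{lemma:mdp-distance-fp}. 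Both arguments are sound; yours makes explicit that the global bound is a corollary of the local one (and the fixed-point comparison could be shortened further: evaluating the fixed-point equation at a maximizing pair gives $\max_{s,a}\mdpdiv{s}{a}{M}{\bar{M}} \leq \Delta + \gamma \max_{s,a}\mdpdiv{s}{a}{M}{\bar{M}}$ directly), whereas the paper's proof is self-contained at the cost of repeating the induction machinery. Your secondary route (redoing the induction with $\Delta$ in place of the local term) is essentially the paper's proof.
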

From a pure transfer perspective, Equation~\ref{eq:global-continuity} is interesting since the right hand side does not depend on $s, a$.
Hence, the counterpart of the \ub{} of Equation~\ref{eq:local-ub}, namely,
\begin{equation*}
s, a \mapsto Q^*_{\bar{M}}(s, a) + \mdppm{}{}{M}{\bar{M}},
\end{equation*}
is easier to compute.
Indeed, $\mdppm{}{}{M}{\bar{M}}$ can be computed once and for all, contrarily to $\mdppm{s}{a}{M}{\bar{M}}$ that needs to be evaluated for all $s, a$ pair.
However, we do not use this result for transfer because it is impractical to compute online.
Indeed, estimating the maximum in the definition of $\mdpdiv{}{}{M}{\bar{M}}$ can be as hard as solving both MDPs, which, when it happens, is too late for transfer to be useful.
\begin{proof}[Proof of Proposition~\ref{proposition:global-continuity}]
	The proof is by induction.
	We consider the sequence of value iteration iterates defined for any MDP $M \in \M$ for $\tuple{s}{a} \in \SA$ by
	\begin{align*}
	\Q{0}{M}(s, a) & = 0 \mthspc , \\
	\Q{n + 1}{M}(s, a) & = \Rew{s}{a} + \gamma \sum_{s' \in \S} \tra{s}{a}{s'} \max_{a' \in \A} \Q{n}{M}(s', a'), \mthspc \forall n \in \N \mthspc .
	\end{align*}
	Consider two MDPs $M, \bar{M} \in \M$.
	It is immediate for any $\tuple{s}{a} \in \SA$ that
	\begin{equation*}
	\absnorm{\Q{0}{M}(s, a) - \Q{0}{\bar{M}}(s, a)} \leq \mdpdiv{}{}{M}{\bar{M}}
	\mthspc ,
	\end{equation*}
	and, by symmetry, the result holds as well for $\mdpdiv{}{}{\bar{M}}{M}$.
	Suppose that it is true at rank $n \in \N$.
	Consider rank $n + 1$ and $\tuple{s}{a} \in \SA$, we have that:
	\begin{align*}
	\absnorm{\Q{n + 1}{M}(s, a) - \Q{n + 1}{\bar{M}}(s, a)}
	& \leq \modiv{s}{a}{\gamma \V{*}{\bar{M}}}{M}{\bar{M}} + \gamma \sum_{s' \in \S} \tra{s}{a}{s'} \max_{a' \in \A} \absnorm{\Q{n}{M}(s', a') - \Q{n}{\bar{M}}(s', a')} \\
	& \leq  \max_{\tuple{s}{a} \in \SA} \modiv{s}{a}{\gamma \V{*}{\bar{M}}}{M}{\bar{M}} + \gamma \sum_{s' \in \S} \tra{s}{a}{s'} \frac{1}{1 - \gamma} \max_{\tuple{s}{a} \in \SA} \modiv{s}{a}{\gamma \V{*}{\bar{M}}}{M}{\bar{M}} \\
	& \leq \max_{\tuple{s}{a} \in \SA} \modiv{s}{a}{\gamma \V{*}{\bar{M}}}{M}{\bar{M}} \left( 1 + \frac{\gamma}{1 - \gamma} \right) \\
	& \leq \mdpdiv{}{}{M}{\bar{M}}
	\mthspc .
	\end{align*}
	By symmetry, the results holds as well for $\mdpdiv{}{}{\bar{M}}{M}$ which concludes the proof by induction.
\end{proof}

The second result is for the value function and is stated below.
\begin{proposition}[Local pseudo-Lipschitz continuity of the optimal value function]
	\label{proposition:local-lipschitz-continuity-v}
	For any two MDPs $M, \bar{M} \in \M$, for all $s \in \S$,
	\begin{equation*}
	\absnorm{\V{*}{M}(s) - \V{*}{\bar{M}}(s)} \leq \max_{a \in \A} \mdppm{s}{a}{M}{\bar{M}}
	\end{equation*}
	where the local MDP pseudo-metric $\mdppm{s}{a}{M}{\bar{M}}$ has the same definition as in Proposition~\ref{proposition:local-lipschitz-continuity}.
\end{proposition}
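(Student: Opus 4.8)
The plan is to reduce this statement directly to Proposition~\ref{proposition:local-lipschitz-continuity}, which already controls the gap between the two optimal Q-functions pointwise. Since $\V{*}{M}(s) = \max_{a \in \A} \Q{*}{M}(s, a)$ and $\V{*}{\bar{M}}(s) = \max_{a \in \A} \Q{*}{\bar{M}}(s, a)$, the whole argument comes down to passing from a bound on $|\Q{*}{M}(s,a) - \Q{*}{\bar{M}}(s,a)|$ to a bound on the difference of the two maxima over $a$.

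First I would record the elementary fact that for any two real-valued functions $f, g$ on a common finite set $X$, $\absnorm{\max_{x \in X} f(x) - \max_{x \in X} g(x)} \leq \max_{x \in X} \absnorm{f(x) - g(x)}$. This is standard: if $a^\star$ achieves $\max_x f(x)$ and $b^\star$ achieves $\max_x g(x)$, then $f(a^\star) - g(b^\star) \leq f(a^\star) - g(a^\star) \leq \max_x \absnorm{f(x)-g(x)}$, and symmetrically $g(b^\star) - f(a^\star) \leq \max_x \absnorm{f(x)-g(x)}$; combining the two gives the claim. I would then apply this with $X = \A$, $f = \Q{*}{M}(s, \cdot)$ and $g = \Q{*}{\bar{M}}(s, \cdot)$ for an arbitrary fixed $s \in \S$, to get
\begin{equation*}
\absnorm{\V{*}{M}(s) - \V{*}{\bar{M}}(s)} \leq \max_{a \in \A} \absnorm{\Q{*}{M}(s, a) - \Q{*}{\bar{M}}(s, a)}.
\end{equation*}

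Finally I would invoke Proposition~\ref{proposition:local-lipschitz-continuity}, which gives $\absnorm{\Q{*}{M}(s, a) - \Q{*}{\bar{M}}(s, a)} \leq \mdppm{s}{a}{M}{\bar{M}}$ for every $(s,a) \in \SA$; taking the maximum over $a \in \A$ of both sides yields $\max_{a \in \A} \absnorm{\Q{*}{M}(s, a) - \Q{*}{\bar{M}}(s, a)} \leq \max_{a \in \A} \mdppm{s}{a}{M}{\bar{M}}$, and chaining the two displayed inequalities concludes. Since $s$ was arbitrary, the result holds for all $s \in \S$.

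There is essentially no obstacle here: the content is entirely in Proposition~\ref{proposition:local-lipschitz-continuity}, and the only additional ingredient is the $1$-Lipschitzness of the $\max$ operation in the sup-norm. (One could alternatively mirror the induction proof of Proposition~\ref{proposition:local-lipschitz-continuity} directly on the value-iteration iterates $\V{n}{M}(s) = \max_a \Q{n}{M}(s,a)$, but reusing the already-proven Q-function bound is shorter and avoids redoing the fixed-point bookkeeping.)
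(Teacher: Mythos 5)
Your proof is correct, but it takes a different route from the paper's. The paper proves Proposition~\ref{proposition:local-lipschitz-continuity-v} by repeating the induction of Proposition~\ref{proposition:local-lipschitz-continuity} on the value-iteration iterates of the optimal \emph{value} function (define $\V{*,n}{M}$, show $\absnorm{\V{*,n}{M}(s) - \V{*,n}{\bar{M}}(s)} \leq \max_a \mdppm{s}{a}{M}{\bar{M}}$ by induction, pass to the limit) --- essentially the alternative you mention parenthetically at the end. You instead reduce the statement directly to the already-established Q-function bound via the identity $\V{*}{M}(s) = \max_{a}\Q{*}{M}(s,a)$ and the elementary inequality $\absnorm{\max_x f(x) - \max_x g(x)} \leq \max_x \absnorm{f(x) - g(x)}$, which you prove correctly. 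Your reduction is shorter, avoids re-running the fixed-point bookkeeping, and makes explicit that the entire content of the corollary is the $1$-Lipschitzness of the $\max$ operator in the sup-norm; the paper's induction is self-contained but is essentially a verbatim repetition of an argument already carried out. Both are valid; yours is arguably the cleaner presentation given that Proposition~\ref{proposition:local-lipschitz-continuity} is already proved.
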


\begin{proof}[Proof of Proposition~\ref{proposition:local-lipschitz-continuity-v}]
	The proof follows exactly the same steps as the proof of Proposition~\ref{proposition:local-lipschitz-continuity}, \ie{}, by first constructing the value iteration sequence of iterates of the optimal value function, showing the result by induction for rank $n \in \N$ and then concluding with a passage to the limit.
\end{proof}

Another result can be derived for the value of any policy $\pi$.
For the sake of generality, we state the result for any stochastic policy mapping states to distributions over actions.
Note that a deterministic policy is a stochastic policy mapping states to Dirac distributions over actions.
First, we state the result for the value function in Proposition~\ref{prop:local-lipischitz-for-any-policy-v} and then for the Q function in Proposition~\ref{prop:local-lipischitz-for-any-policy-q}.

\begin{proposition}[Local pseudo-Lipschitz continuity of the value function of any policy]
	\label{prop:local-lipischitz-for-any-policy-v}
	For any two MDPs $M, \bar{M} \in \M$, for any stochastic stationary policy $\pi$, for all $s \in \S$,
	\begin{equation*}
	\absnorm{\V{\pi}{M}(s) - \V{\pi}{\bar{M}}(s)} \leq \mdppmpi{s}{}{\pi}{M}{\bar{M}}
	\end{equation*}
	where $\mdppmpi{s}{}{\pi}{M}{\bar{M}} \eqdef \min \SET{ \mdpdivpi{s}{}{\pi}{M}{\bar{M}}, \mdpdivpi{s}{}{\pi}{\bar{M}}{M} }$ and $\mdpdivpi{s}{}{\pi}{M}{\bar{M}}$ is defined as the \fp{} of the following \fp{} equation on $d \in \functionspace{\S}{\R}$:
	\begin{equation*}
	d_s = \sum_{a \in \A} \pi(a \mid s) \left( \modpm{s}{a}{\gamma \V{\pi}{\bar{M}}}{M}{\bar{M}} + \gamma \sum_{s' \in \S} \tra{s}{a}{s'} d_{s'} \right)
	\mthspc .
	\end{equation*}
\end{proposition}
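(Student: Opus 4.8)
The plan is to transcribe the argument of Proposition~\ref{proposition:local-lipschitz-continuity} almost verbatim, replacing value iteration by \emph{policy evaluation} for the fixed stochastic policy $\pi$ and replacing the $\max_{a'}$ inside the Bellman backup by the convex combination $\sum_{a}\pi(a\mid s)$. First I would prove the analogue of Lemma~\ref{lemma:mdp-distance-fp}: the operator $L^\pi$ defined on $d \in \functionspace{\S}{\R}$ by
\begin{equation*}
L^\pi d_s = \sum_{a \in \A} \pi(a \mid s) \left( \modpm{s}{a}{\gamma \V{\pi}{\bar{M}}}{M}{\bar{M}} + \gamma \sum_{s' \in \S} \tra{s}{a}{s'} d_{s'} \right)
\end{equation*}
is a $\gamma$-contraction for $\inftynorm{\cdot}$, since the induced policy kernel $P^\pi_{s s'} = \sum_{a} \pi(a\mid s) \tra{s}{a}{s'}$ is a stochastic matrix, so $L^\pi f_s - L^\pi g_s = \gamma \sum_{s'} P^\pi_{s s'} (f_{s'} - g_{s'}) \leq \gamma \inftynorm{f - g}$ uniformly in $s$. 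Banach's fixed-point theorem on the complete nonempty space $(\functionspace{\S}{\R}, \inftynorm{\cdot})$ then yields the unique solution, which I take as the definition of $\mdpdivpi{s}{}{\pi}{M}{\bar{M}}$, so the quantity in the statement is well-defined.

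Next I would run an induction on the policy-evaluation iterates $\V{\pi,0}{M}(s) = 0$, $\V{\pi,n+1}{M}(s) = \sum_{a} \pi(a\mid s)(\Rew{s}{a} + \gamma \sum_{s'} \tra{s}{a}{s'} \V{\pi,n}{M}(s'))$, with the hypothesis $\absnorm{\V{\pi,n}{M}(s) - \V{\pi,n}{\bar{M}}(s)} \leq \mdpdivpi{s}{}{\pi}{M}{\bar{M}}$ for all $s$. The base case is immediate. For the inductive step I would expand $\absnorm{\V{\pi,n+1}{M}(s) - \V{\pi,n+1}{\bar{M}}(s)}$, push the outer $\sum_{a}\pi(a\mid s)$ through the absolute value by convexity, and for each $a$ write $\tra{s}{a}{s'}\V{\pi,n}{M}(s') - \trabar{s}{a}{s'}\V{\pi,n}{\bar{M}}(s') = \V{\pi,n}{\bar{M}}(s')(\tra{s}{a}{s'} - \trabar{s}{a}{s'}) + \tra{s}{a}{s'}(\V{\pi,n}{M}(s') - \V{\pi,n}{\bar{M}}(s'))$, exactly as in the proof of Proposition~\ref{proposition:local-lipschitz-continuity}. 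The first piece is bounded using $\V{\pi,n}{\bar{M}}(s') \leq \V{\pi}{\bar{M}}(s')$ (the iterates increase from $0$ because rewards are nonnegative), giving the coefficient $\gamma \V{\pi}{\bar{M}}(s')$ on $\absnorm{\tra{s}{a}{s'} - \trabar{s}{a}{s'}}$; the second piece is bounded by $\gamma \sum_{s'} \tra{s}{a}{s'} \mdpdivpi{s'}{}{\pi}{M}{\bar{M}}$ via the inductive hypothesis. Summing the reward discrepancy with these two terms reconstructs exactly $L^\pi$ applied to $d = \mdpdivpi{\cdot}{}{\pi}{M}{\bar{M}}$, and the fixed-point identity closes the induction.

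Finally I would pass to the limit, using $\V{\pi,n}{M} \to \V{\pi}{M}$ and $\V{\pi,n}{\bar{M}} \to \V{\pi}{\bar{M}}$, to get $\absnorm{\V{\pi}{M}(s) - \V{\pi}{\bar{M}}(s)} \leq \mdpdivpi{s}{}{\pi}{M}{\bar{M}}$; then swap the roles of $M$ and $\bar{M}$ to obtain the symmetric bound with $\mdpdivpi{s}{}{\pi}{\bar{M}}{M}$, and take the minimum to conclude with $\mdppmpi{s}{}{\pi}{M}{\bar{M}}$. The only genuinely delicate point — and the single use of reward nonnegativity — is the monotonicity argument that lets $\V{\pi}{\bar{M}}$ replace the iterate $\V{\pi,n}{\bar{M}}$ in the transition-discrepancy term, matching the role played by $\V{*}{\bar{M}}$ in Proposition~\ref{proposition:local-lipschitz-continuity}; the rest is a routine rewriting of that proof with $\max_{a'}$ replaced by $\sum_{a}\pi(a\mid s)$.
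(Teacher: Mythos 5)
Your proposal is correct and follows essentially the same route as the paper: a contraction/Banach argument for well-posedness of the fixed-point equation (the paper's Lemma~\ref{lemma:mdp-distance-fp-any-policy-v}), an induction on the policy-evaluation iterates using the decomposition $\tra{s}{a}{s'}\V{\pi,n}{M}(s') - \trabar{s}{a}{s'}\V{\pi,n}{\bar{M}}(s') = \V{\pi,n}{\bar{M}}(s')(\tra{s}{a}{s'} - \trabar{s}{a}{s'}) + \tra{s}{a}{s'}(\V{\pi,n}{M}(s') - \V{\pi,n}{\bar{M}}(s'))$, passage to the limit, and symmetrization. If anything, you are more careful than the paper in flagging the monotonicity $\V{\pi,n}{\bar{M}} \leq \V{\pi}{\bar{M}}$ (from nonnegative rewards) that justifies replacing the iterate by its limit in the transition-discrepancy term, a step the paper performs silently.
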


Before proving the Proposition, we show that the fixed point equation admits a unique solution in the following Lemma.

\begin{lemma}
	\label{lemma:mdp-distance-fp-any-policy-v}
	Given two MDPs $M, \bar{M} \in \M$, any stochastic stationary policy $\pi$, the following equation on $d \in \functionspace{\S}{\R}$ is a fixed-point equation admitting a unique solution for any $s \in \S$:
	\begin{equation*}
	d_s = \sum_{a \in \A} \pi(a \mid s) \left( \modpm{s}{a}{\gamma \V{\pi}{\bar{M}}}{M}{\bar{M}} + \gamma \sum_{s' \in \S} \tra{s}{a}{s'} d_{s'} \right)
	\mthspc .
	\end{equation*}
	We refer to this unique solution as $\mdpdivpi{s}{}{\pi}{M}{\bar{M}}$.
\end{lemma}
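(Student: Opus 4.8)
The plan is to replay the argument of Lemma~\ref{lemma:mdp-distance-fp} almost verbatim, the only new ingredient being the extra averaging over actions against the policy $\pi$. First I would introduce the functional operator $L$ that sends any $d \in \functionspace{\S}{\R}$ to the function $Ld$ defined for all $s \in \S$ by
\begin{equation*}
Ld_s \eqdef \sum_{a \in \A} \pi(a \mid s) \left( \modpm{s}{a}{\gamma \V{\pi}{\bar{M}}}{M}{\bar{M}} + \gamma \sum_{s' \in \S} \tra{s}{a}{s'} d_{s'} \right),
\end{equation*}
so that the fixed-point equation in the statement is precisely $d = Ld$. Since $\S$ is finite and the terms $\modpm{s}{a}{\gamma \V{\pi}{\bar{M}}}{M}{\bar{M}}$ do not depend on $d$, $Ld$ is a well-defined element of $\functionspace{\S}{\R}$, so $L$ is an endomorphism of that space.

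Next I would show that $L$ is a $\gamma$-contraction for $\inftynorm{\cdot}$. Given $f, g \in \functionspace{\S}{\R}$ and $s \in \S$, the reward- and transition-discrepancy terms are independent of the argument and cancel, leaving
\begin{equation*}
Lf_s - Lg_s = \gamma \sum_{a \in \A} \pi(a \mid s) \sum_{s' \in \S} \tra{s}{a}{s'} \left( f_{s'} - g_{s'} \right) \leq \gamma \inftynorm{f - g},
\end{equation*}
where the last inequality uses that $\pi(\cdot \mid s)$ and each $\tra{s}{a}{\cdot}$ are probability distributions, so that $\sum_{a \in \A} \pi(a \mid s) \sum_{s' \in \S} \tra{s}{a}{s'} = 1$. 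Swapping $f$ and $g$ gives the matching lower bound, hence $\absnorm{Lf_s - Lg_s} \leq \gamma \inftynorm{f - g}$ for every $s$, and taking the supremum over $s$ yields $\inftynorm{Lf - Lg} \leq \gamma \inftynorm{f - g}$.

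Finally, since $\gamma < 1$ and $\tuple{\functionspace{\S}{\R}}{\inftynorm{\cdot}}$ is a complete, non-empty metric space ($\S$ being finite), the Banach \fp{} theorem guarantees that $d = Ld$ admits a unique solution, which we name $\mdpdivpi{s}{}{\pi}{M}{\bar{M}}$. There is essentially no obstacle here: the only point that genuinely differs from Lemma~\ref{lemma:mdp-distance-fp} is checking that the additional convex combination over actions does not inflate the contraction constant, which holds exactly because $\sum_{a \in \A} \pi(a \mid s) = 1$; every other step is identical.
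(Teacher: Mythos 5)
Your proposal is correct and follows essentially the same route as the paper's proof: define the affine operator $L$, observe that the model-discrepancy terms cancel in $Lf_s - Lg_s$, bound the remaining term by $\gamma \inftynorm{f-g}$ using that $\pi(\cdot \mid s)$ and each $\tra{s}{a}{\cdot}$ are probability distributions, and conclude via the Banach fixed-point theorem. Your added remark about swapping $f$ and $g$ to obtain the two-sided bound is a small tidying-up of a step the paper leaves implicit; otherwise the arguments coincide.
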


\begin{proof}[Proof of Lemma~\ref{lemma:mdp-distance-fp-any-policy-v}]
	Let $L$ be the functional operator that maps any function $d \in \functionspace{\S}{\R}$ to
	\begin{equation*}
	\FUNCTION{Ld}{\S}{\R}{s}{\sum_{a \in \A} \pi(a \mid s) \left( \modpm{s}{a}{\gamma \V{\pi}{\bar{M}}}{M}{\bar{M}} + \gamma \sum_{s' \in \S} \tra{s}{a}{s'} d_{s'} \right)}
	\mthspc .
	\end{equation*}
	Then for $f$ and $g$, two functions from $\S$ to $\R$, we have that
	\begin{align*}
	Lf_s - Lg_s & = \gamma \sum_{a \in \A} \pi(a \mid s) \sum_{s' \in \S} \tra{s}{a}{s'} \left( f_{s'} - g_{s'} \right) \\
	& \leq \gamma \inftynorm{f - g}
	\mthspc .
	\end{align*}
	Hence we have that $\inftynorm{Lf - Lg} \leq \gamma \inftynorm{f - g}$.
	Since $\gamma < 1$, $L$ is a contraction mapping in the metric space $\tuple{\functionspace{\S}{\R}}{\inftynorm{\cdot}}$.
	This metric space being complete and non-empty, it follows by direct application of the Banach \fp{} theorem that the equation $d = L d$ admits a unique solution.
\end{proof}

\begin{proof}[Proof of Proposition~\ref{prop:local-lipischitz-for-any-policy-v}]
	Consider a stochastic stationary stationary policy $\pi$.
	The value iteration sequence of iterates $\left( \V{\pi, n}{M} \right)_{n \in \N}$ of the value function of any MDP $M \in \M$ is defined for all $s \in \S$ by:
	\begin{align*}
	\V{\pi, 0}{M}(s) & = 0 \mthspc , \\
	\V{\pi, n + 1}{M}(s) & = \sum_{a \in \A} \pi(a \mid s) \left( \Rew{s}{a} + \gamma \sum_{s' \in \S} \tra{s}{a}{s'} \V{\pi, n}{M}(s') \right)
	\end{align*}
	Consider two MDPs $M, \bar{M} \in \M$.
	It is obvious that $\absnorm{\V{\pi, 0}{M}(s) - \V{\pi, 0}{\bar{M}}(s)} \leq \mdpdivpi{s}{}{\pi}{M}{\bar{M}}$ for all $s \in \S$.
	Suppose the property $\absnorm{\V{\pi, n}{M}(s) - \V{\pi, n}{\bar{M}}(s)} \leq \mdpdivpi{s}{}{\pi}{M}{\bar{M}}$ true at rank $n \in \N$ for all $s \in \S$.
	Consider now the rank $n + 1$ and the state $s \in \S$:
	\begin{align*}
	\absnorm{\V{\pi, n + 1}{M}(s) - \V{\pi, n + 1}{\bar{M}}(s)}
	& \leq \sum_{a \in \A} \pi(a \mid s) \absnorm{\Rew{s}{a} - \Rewbar{s}{a} + \gamma \sum_{s' \in \S} \left( \tra{s}{a}{s'} \V{\pi, n}{M}(s') - \trabar{s}{a}{s'} \V{\pi, n}{\bar{M}}(s') \right)} \\
	& \leq \sum_{a \in \A} \pi(a \mid s) \Bigg( \absnorm{\Rew{s}{a} - \Rewbar{s}{a}} + \gamma \sum_{s' \in \S} \V{\pi, n}{\bar{M}}(s') \absnorm{\tra{s}{a}{s'} - \trabar{s}{a}{s'}} \\
	& \phleq + \gamma \sum_{s' \in \S} \tra{s}{a}{s'} \absnorm{\V{\pi, n}{M}(s') - \V{\pi, n}{\bar{M}}(s')} \Bigg) \\
	& \leq \sum_{a \in \A} \pi(a \mid s) \left( \modpm{s}{a}{\gamma \V{\pi}{\bar{M}}}{M}{\bar{M}} + \gamma \sum_{s' \in \S} \tra{s}{a}{s'} \mdpdivpi{s'}{}{\pi}{M}{\bar{M}} \right) \\
	& \leq \mdpdivpi{s}{}{\pi}{M}{\bar{M}}
	\mthspc ,
	\end{align*}
	where we used Lemma~\ref{lemma:mdp-distance-fp-any-policy-v} in the last inequality.
	Since $\V{\pi}{M}$ and $\V{\pi}{\bar{M}}$ are respectively the limits of the sequences $\left( \V{\pi, n}{M} \right)_{n\in\N}$ and $\left( \V{\pi, n}{\bar{M}} \right)_{n\in\N}$, it results from passage to the limit that
	\begin{equation*}
	\absnorm{\V{\pi}{M}(s) - \V{\pi}{\bar{M}}(s)} \leq \mdpdivpi{s}{}{\pi}{M}{\bar{M}} \mthspc .
	\end{equation*}
	By symmetry, we also have $\absnorm{\V{\pi}{M}(s) - \V{\pi}{\bar{M}}(s)} \leq \mdpdivpi{s}{}{\pi}{\bar{M}}{M}$ and we can take the minimum of the two valid \ub{}s, yielding:
	\begin{equation*}
	\absnorm{\V{\pi}{M}(s) - \V{\pi}{\bar{M}}(s)} \leq \min \SET{ \mdpdivpi{s}{}{\pi}{M}{\bar{M}}, \mdpdivpi{s}{}{\pi}{\bar{M}}{M} } \mthspc ,
	\end{equation*}
	which concludes the proof.
\end{proof}

\begin{proposition}[Local pseudo-Lipschitz continuity of the \Qfun{} of any policy]
	\label{prop:local-lipischitz-for-any-policy-q}
	For any two MDPs $M, \bar{M} \in \M$, for any stochastic stationary policy $\pi$, for all $\tuple{s}{a} \in \SA$,
	\begin{equation*}
	\absnorm{\Q{\pi}{M}(s, a) - \Q{\pi}{\bar{M}}(s, a)} \leq \mdppmpi{s}{a}{\pi}{M}{\bar{M}}
	\end{equation*}
	where $\mdppmpi{s}{a}{\pi}{M}{\bar{M}} \eqdef \min \SET{ \mdpdivpi{s}{a}{\pi}{M}{\bar{M}}, \mdpdivpi{s}{a}{\pi}{\bar{M}}{M} }$ and $\mdpdivpi{s}{a}{\pi}{M}{\bar{M}}$ is defined as the \fp{} of the following \fp{} equation on $d \in \functionspace{\SA}{\R}$:
	\begin{equation*}
	d_{s a} = \modpm{s}{a}{\gamma \V{\pi}{\bar{M}}}{M}{\bar{M}} + \gamma \sum_{\tuple{s'}{a'} \in \SA} \tra{s}{a}{s'} \pi(a' \mid s') d_{s' a'}
	\mthspc .
	\end{equation*}
\end{proposition}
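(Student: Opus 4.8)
The plan is to reproduce, essentially verbatim, the argument used for Proposition~\ref{proposition:local-lipschitz-continuity} and Proposition~\ref{prop:local-lipischitz-for-any-policy-v}, now applied to the policy-evaluation Bellman operator acting on Q-functions. First I would state and prove an auxiliary lemma, analogous to Lemma~\ref{lemma:mdp-distance-fp-any-policy-v}, asserting that the displayed equation on $d \in \functionspace{\SA}{\R}$ is a genuine \fp{} equation with a unique solution $\mdpdivpi{s}{a}{\pi}{M}{\bar{M}}$. The essential observation is that $\tuple{s'}{a'} \mapsto \tra{s}{a}{s'} \pi(a' \mid s')$ is a probability distribution on $\SA$, so the operator
\[
(Ld)(s,a) = \modpm{s}{a}{\gamma \V{\pi}{\bar{M}}}{M}{\bar{M}} + \gamma \sum_{\tuple{s'}{a'} \in \SA} \tra{s}{a}{s'} \pi(a' \mid s') d_{s' a'}
\]
satisfies $\inftynorm{Lf - Lg} \leq \gamma \inftynorm{f - g}$ for any $f, g \in \functionspace{\SA}{\R}$; since $\gamma < 1$ and $\tuple{\functionspace{\SA}{\R}}{\inftynorm{\cdot}}$ is a complete non-empty metric space, the Banach fixed-point theorem yields existence and uniqueness.

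Next I would prove $\absnorm{\Q{\pi}{M}(s, a) - \Q{\pi}{\bar{M}}(s, a)} \leq \mdpdivpi{s}{a}{\pi}{M}{\bar{M}}$ by induction over the value-iteration sequence for policy evaluation, $\Q{\pi, 0}{M} \equiv 0$ and $\Q{\pi, n + 1}{M}(s, a) = \Rew{s}{a} + \gamma \sum_{s' \in \S} \tra{s}{a}{s'} \sum_{a' \in \A} \pi(a' \mid s') \Q{\pi, n}{M}(s', a')$. The base case is immediate since the iterates vanish and $\mdpdivpi{s}{a}{\pi}{M}{\bar{M}} \geq 0$. For the inductive step I would bound the difference at rank $n+1$, as in the earlier proofs, by the reward gap $\absnorm{\Rew{s}{a} - \Rewbar{s}{a}}$, plus a transition gap $\gamma \sum_{s' \in \S} \left( \sum_{a' \in \A} \pi(a' \mid s') \Q{\pi, n}{\bar{M}}(s', a') \right) \absnorm{\tra{s}{a}{s'} - \trabar{s}{a}{s'}}$, plus a propagation term $\gamma \sum_{s' \in \S} \tra{s}{a}{s'} \sum_{a' \in \A} \pi(a' \mid s') \absnorm{\Q{\pi, n}{M}(s', a') - \Q{\pi, n}{\bar{M}}(s', a')}$, where a triangle inequality pulls $\pi(a' \mid s')$ out of the absolute value. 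Using $\sum_{a' \in \A} \pi(a' \mid s') \Q{\pi, n}{\bar{M}}(s', a') = \V{\pi, n}{\bar{M}}(s') \leq \V{\pi}{\bar{M}}(s')$ (monotonicity of policy-evaluation value iteration, valid since $\Rew{s}{a} \in [0,1]$ and the iterates start at $0$), the first two terms sum to exactly $\modpm{s}{a}{\gamma \V{\pi}{\bar{M}}}{M}{\bar{M}}$. Applying the induction hypothesis inside the propagation term bounds the whole right-hand side by $\modpm{s}{a}{\gamma \V{\pi}{\bar{M}}}{M}{\bar{M}} + \gamma \sum_{s' \in \S} \tra{s}{a}{s'} \sum_{a' \in \A} \pi(a' \mid s') \mdpdivpi{s'}{a'}{\pi}{M}{\bar{M}}$, which equals $\mdpdivpi{s}{a}{\pi}{M}{\bar{M}}$ by the defining \fp{} equation. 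Passing to the limit $n \to \infty$ gives the bound with $\mdpdivpi{s}{a}{\pi}{M}{\bar{M}}$; exchanging $M$ and $\bar{M}$ gives the symmetric bound with $\mdpdivpi{s}{a}{\pi}{\bar{M}}{M}$, and taking the minimum of the two valid upper bounds yields $\mdppmpi{s}{a}{\pi}{M}{\bar{M}}$, concluding the proof.

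I do not anticipate a genuine obstacle, since the computation is structurally identical to that of Proposition~\ref{proposition:local-lipschitz-continuity}. The two points requiring minor care are (i) that the ``effective kernel'' of the fixed-policy Q-Bellman operator is the joint law $\tra{s}{a}{s'}\pi(a' \mid s')$ on $\SA$ rather than a law on $\S$ alone, which is precisely what keeps the contraction factor equal to $\gamma$, and (ii) the use of $\V{\pi, n}{\bar{M}} \leq \V{\pi}{\bar{M}}$ to recover the $\gamma \V{\pi}{\bar{M}}$ weight appearing in the model pseudometric, which rests on the monotonicity of the policy-evaluation iterates. As an alternative, one could derive the proposition from Proposition~\ref{prop:local-lipischitz-for-any-policy-v} applied to the Markov reward process that $\pi$ induces in $M$ and $\bar{M}$, treating $\SA$ as the state space; but the direct induction is more transparent and keeps the presentation uniform with the preceding proofs.
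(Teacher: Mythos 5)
Your proposal is correct and follows essentially the same route as the paper: a Banach fixed-point lemma for the displayed equation (contraction factor $\gamma$ because $\tuple{s'}{a'} \mapsto \tra{s}{a}{s'}\pi(a'\mid s')$ is a probability distribution on $\SA$), then induction over the policy-evaluation iterates with the same three-term decomposition, passage to the limit, and symmetry to take the minimum. Your explicit justification of $\sum_{a'}\pi(a'\mid s')\Q{\pi,n}{\bar{M}}(s',a') = \V{\pi,n}{\bar{M}}(s') \leq \V{\pi}{\bar{M}}(s')$ via monotonicity of the iterates is a step the paper uses implicitly, so this is a welcome clarification rather than a deviation.
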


Before proving the Proposition, we show that the fixed point equation admits a unique solution in the following Lemma.

\begin{lemma}
	\label{lemma:mdp-distance-fp-any-policy-q}
	Given two MDPs $M, \bar{M} \in \M$, any stochastic stationary policy $\pi$, the following equation on $d \in \functionspace{\SA}{\R}$ is a \fp{} equation admitting a unique solution for any $\tuple{s}{a} \in \SA$:
	\begin{equation*}
	d_{s a} = \modpm{s}{a}{\gamma \V{\pi}{\bar{M}}}{M}{\bar{M}} + \gamma \sum_{\tuple{s'}{a'} \in \SA} \tra{s}{a}{s'} \pi(a' \mid s') d_{s' a'}
	\mthspc .
	\end{equation*}
	We refer to this unique solution as $\mdpdivpi{s}{a}{\pi}{M}{\bar{M}}$.
\end{lemma}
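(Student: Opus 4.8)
The plan is to mirror the arguments already used for Lemma~\ref{lemma:mdp-distance-fp} and Lemma~\ref{lemma:mdp-distance-fp-any-policy-v}: exhibit the right-hand side of the equation as a $\gamma$-contraction on a complete metric space and apply the Banach \fp{} theorem. Concretely, I would introduce the operator $L$ that maps any $d \in \functionspace{\SA}{\R}$ to
\begin{equation*}
\FUNCTION{Ld}{\S \times \A}{\R}{s, a}{\modpm{s}{a}{\gamma \V{\pi}{\bar{M}}}{M}{\bar{M}} + \gamma \sum_{\tuple{s'}{a'} \in \SA} \tra{s}{a}{s'} \pi(a' \mid s') d_{s' a'} \mthspc .}
\end{equation*}
Since $\S$ and $\A$ are finite, $L$ is well defined and maps $\functionspace{\SA}{\R}$ into itself, and a solution of the stated \fp{} equation is exactly a fixed point of $L$.

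The key computation is the contraction estimate. For any $f, g \in \functionspace{\SA}{\R}$ and any $\tuple{s}{a} \in \SA$, the term $\modpm{s}{a}{\gamma \V{\pi}{\bar{M}}}{M}{\bar{M}}$ does not depend on the argument function and cancels, so
\begin{equation*}
Lf_{s a} - Lg_{s a} = \gamma \sum_{\tuple{s'}{a'} \in \SA} \tra{s}{a}{s'} \pi(a' \mid s') \left( f_{s' a'} - g_{s' a'} \right) \leq \gamma \inftynorm{f - g},
\end{equation*}
where the inequality uses that the weights $\tra{s}{a}{s'} \pi(a' \mid s')$ are nonnegative and sum to one over $\SA$, so the expression is a convex average. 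Since this holds for every $\tuple{s}{a}$, and symmetrically with $f$ and $g$ exchanged, we obtain $\inftynorm{Lf - Lg} \leq \gamma \inftynorm{f - g}$.

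Finally, since $\gamma < 1$, $L$ is a contraction on the complete, non-empty metric space $\tuple{\functionspace{\SA}{\R}}{\inftynorm{\cdot}}$, so the Banach \fp{} theorem provides a unique fixed point, which we denote $\mdpdivpi{s}{a}{\pi}{M}{\bar{M}}$. I do not expect a genuine obstacle: the proof is structurally identical to the two preceding lemmas, and the only step that warrants an explicit line is checking that the joint weights $\tra{s}{a}{s'} \pi(a' \mid s')$ (the product of the transition probabilities out of $s, a$ with the action probabilities of $\pi$) sum to one over $\SA$, which is precisely what makes the weighted average non-expansive in sup-norm.
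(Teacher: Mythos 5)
Your proof is correct and is essentially identical to the paper's: both define the same operator $L$, show $\inftynorm{Lf - Lg} \leq \gamma \inftynorm{f - g}$ using that the weights $\tra{s}{a}{s'} \pi(a' \mid s')$ form a probability distribution over $\SA$, and conclude by the Banach \fp{} theorem on the complete space $\tuple{\functionspace{\SA}{\R}}{\inftynorm{\cdot}}$. If anything, your version is slightly cleaner, since the paper's displayed computation contains a typo (writing $Lf_{s' a'} - Lg_{s' a'}$ where $f_{s' a'} - g_{s' a'}$ is meant) that you avoid.
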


\begin{proof}[Proof of Lemma~\ref{lemma:mdp-distance-fp-any-policy-q}]
	Let $L$ be the functional operator that maps any function $d \in \functionspace{\SA}{\R}$ to
	\begin{equation*}
	\FUNCTION{Ld}{\SA}{\R}{\tuple{s}{a}}{\modpm{s}{a}{\gamma \V{\pi}{\bar{M}}}{M}{\bar{M}} + \gamma \sum_{\tuple{s'}{a'} \in \SA} \tra{s}{a}{s'} \pi(a' \mid s') d_{s', a'} \mthspc .}
	\end{equation*}
	Then for $f$ and $g$, two functions from $\SA$ to $\R$, we have for all $\tuple{s}{a} \in \SA$ that
	\begin{align*}
	Lf_{s a} - Lg_{s a}
	& = \gamma \sum_{\tuple{s'}{a'} \in \SA} \tra{s}{a}{s'} \pi(a' \mid s') \left( Lf_{s' a'} - Lg_{s' a'} \right) \\
	& \leq \gamma \inftynorm{f - g}
	\mthspc .
	\end{align*}
	Hence we have that $\inftynorm{Lf - Lg} \leq \gamma \inftynorm{f - g}$.
	Since $\gamma < 1$, $L$ is a contraction mapping in the metric space $\tuple{\functionspace{\SA}{\R}}{\inftynorm{\cdot}}$.
	This metric space being complete and non-empty, it follows by direct application of the Banach \fp{} theorem that the equation $d = L d$ admits a unique solution.
\end{proof}

\begin{proof}[Proof of Proposition~\ref{prop:local-lipischitz-for-any-policy-q}]
	Consider a stochastic stationary policy $\pi$.
	The value iteration sequence of iterates $\left( \Q{\pi, n}{M} \right)_{n \in \N}$ of the Q function for the policy $\pi$ and MDP $M \in \M$ is defined for all $\tuple{s}{a} \in \SA$ by:
	\begin{align*}
	\Q{\pi, 0}{M}(s, a) & = 0 \mthspc , \\
	\Q{\pi, n + 1}{M}(s, a) & = \Rew{s}{a} + \gamma \sum_{\tuple{s'}{a'} \in \SA} \tra{s}{a}{s'} \pi(a' \mid s') \Q{\pi, n}{M}(s', a')
	\end{align*}
	Consider two MDPs $M, \bar{M} \in \M$.
	It is obvious that $\absnorm{\Q{\pi, 0}{M}(s, a) - \Q{\pi, 0}{\bar{M}}(s, a)} \leq \mdpdivpi{s}{a}{\pi}{M}{\bar{M}}$ for all $\tuple{s}{a} \in \SA$.
	Suppose the property $\absnorm{\Q{\pi, n}{M}(s, a) - \Q{\pi, n}{\bar{M}}(s, a)} \leq \mdpdivpi{s}{a}{\pi}{M}{\bar{M}}$ true at rank $n \in \N$ for all $\tuple{s}{a} \in \SA$.
	Consider now the rank $n + 1$ and the state-action pair $\tuple{s}{a} \in \SA$:
	\begin{align*}
	\absnorm{\Q{\pi, n + 1}{M}(s, a) - \Q{\pi, n + 1}{\bar{M}}(s, a)}
	& \leq \absnorm{\Rew{s}{a} - \Rewbar{s}{a}} + \gamma \sum_{\tuple{s'}{a'} \in \SA} \pi(a' \mid s') \absnorm{\tra{s}{a}{s'} \Q{\pi, n}{M}(s', a') - \trabar{s}{a}{s'} \Q{\pi, n}{\bar{M}}(s', a')} \\
	& \leq \absnorm{\Rew{s}{a} - \Rewbar{s}{a}} + \gamma \sum_{\tuple{s'}{a'} \in \SA} \pi(a' \mid s') \Q{\pi, n}{\bar{M}}(s', a') \absnorm{\tra{s}{a}{s'} - \trabar{s}{a}{s'}} \\
	& \phleq + \gamma \sum_{\tuple{s'}{a'} \in \SA} \pi(a' \mid s') \tra{s}{a}{s'} \absnorm{\Q{\pi, n}{M}(s', a') - \Q{\pi, n}{\bar{M}}(s', a')} \\
	& \leq \absnorm{\Rew{s}{a} - \Rewbar{s}{a}} + \sum_{s' \in \S} \gamma \V{\pi}{\bar{M}}(s') \absnorm{\tra{s}{a}{s'} - \trabar{s}{a}{s'}} + \gamma \sum_{\tuple{s'}{a'} \in \SA} \pi(a' \mid s') \tra{s}{a}{s'} d^{M, \bar{M}}_{\pi}(s', a') \\
	& \leq \modpm{s}{a}{\gamma \V{\pi}{\bar{M}}}{M}{\bar{M}} + \gamma \sum_{\tuple{s'}{a'} \in \SA} \tra{s}{a}{s'} \pi(a' \mid s') \mdpdivpi{s'}{a'}{\pi}{M}{\bar{M}} \\
	& \leq \mdpdivpi{s}{a}{\pi}{M}{\bar{M}}
	\mthspc ,
	\end{align*}
	where we used Lemma~\ref{lemma:mdp-distance-fp-any-policy-q} in the last inequality.
	Since $\Q{\pi}{M}$ and $\Q{\pi}{\bar{M}}$ are respectively the limits of the sequences $\left( \Q{\pi, n}{M} \right)_{n\in\N}$ and $\left( \Q{\pi, n}{\bar{M}} \right)_{n\in\N}$, it results from passage to the limit that
	\begin{equation*}
	\absnorm{\Q{\pi}{M}(s, a) - \Q{\pi}{\bar{M}}(s, a)} \leq \mdpdivpi{s}{a}{\pi}{M}{\bar{M}} \mthspc .
	\end{equation*}
	By symmetry, we also have $\absnorm{\Q{\pi}{M}(s, a) - \Q{\pi}{\bar{M}}(s, a)} \leq \mdpdivpi{s}{a}{\pi}{\bar{M}}{M}$ and we can take the minimum of the two valid \ub{}s, yielding for all $\tuple{s}{a} \in \SA$:
	\begin{equation*}
	\absnorm{\Q{\pi}{M}(s, a) - \Q{\pi}{\bar{M}}(s, a)} \leq \min \SET{ \mdpdivpi{s}{a}{\pi}{M}{\bar{M}}, \mdpdivpi{s}{a}{\pi}{\bar{M}}{M} } \mthspc ,
	\end{equation*}
	which concludes the proof.
\end{proof}

\section{Proof of Proposition~\ref{proposition:total-ub}}

\begin{proof}[Proof of Proposition~\ref{proposition:total-ub}]
	The result is clear for all $\tuple{s}{a} \notin K$ since the Lipschitz bounds are provably greater than $\Q{*}{M}$.
	For $\tuple{s}{a} \in K$, the result is shown by induction.
	Let us consider the Dynamic Programming~\citep{bellman1957dynamic} sequences converging to $\Q{*}{M}$ and $U$ whose definitions follow for all $\tuple{s}{a} \in \SA$ and for $n \in \N$:
	\begin{align*}
	&
	\begin{cases}
	\Q{0}{M}(s, a) = 0 \\
	\Q{n + 1}{M}(s, a) = \Rew{s}{a} + \gamma \sum_{s' \in \S} \tra{s}{a}{s'} \max_{a' \in \A} \Q{n}{M}(s', a')
	\end{cases}
	\mthspc ,
	\\
	&
	\begin{cases}
	U^0(s, a) = 0 \\
	U^{n + 1}(s, a) = \Rew{s}{a} + \gamma \sum_{s' \in \S} \tra{s}{a}{s'} \max_{a' \in \A} U^{n}(s', a')
	\end{cases}
	\end{align*}
	Obviously, we have at rank $n = 0$ that $\Q{0}{M}(s, a) \leq U^0(s, a)$ for all $\tuple{s}{a} \in \SA$.
	Suppose the property true at rank $n \in \N$ and consider rank $n + 1$:
	\begin{align*}
	\Q{n + 1}{M}(s, a) - U^{n + 1}(s, a)
	& = \gamma \sum_{s' \in \S} \tra{s}{a}{s'} \left( \max_{a' \in \A} \Q{n}{M}(s', a') - \max_{a' \in \A} U^{n}(s', a') \right) \\
	& \leq \gamma \sum_{s' \in \S} \tra{s}{a}{s'} \max_{a' \in \A} \left( \Q{n}{M}(s', a') - U^{n}(s', a') \right)\\
	& \leq 0
	\mthspc.
	\end{align*}
	Which concludes the proof by induction.
	The result holds by passage to the limit since the considered Dynamic Programming sequences converge to the true functions.
\end{proof}

\section{Proof of Proposition~\ref{proposition:ub-model-pseudo-metric}}

\begin{proof}[Proof of Proposition~\ref{proposition:ub-model-pseudo-metric}]
	Consider two tasks $M = (\transition, \Reward)$ and $\bar{M} = (\bar{\transition}, \bar{\Reward})$, with  $K$  and $\bar{K}$ the respective sets of state-action pairs where their learned models $\hat{M} = (\hat{\transition}, \hat{\Reward})$ and $\hat{\bar{M}} = ( \hat{\bar{\transition}}, \hat{\bar{\Reward}} )$ are known with accuracy $\epsilon$ in $\mathcal{L}_1$-norm with probability at least $1 - \delta$, \ie{}, we have that,
	\begin{align}%
	\label{eq:proof:epsilon-accurate-model}%
	\Pr \left(
	\begin{array}{lll}
	\absnorm{\Rew{s}{a} - \Rewhat{s}{a}} & \leq & \epsilon, \quad \forall \tuple{s}{a} \in K \quad \text{and}\\
	\onenorm{\tra{s}{a}{s'} - \trahat{s}{a}{s'}} & \leq & \epsilon, \quad \forall \tuple{s}{a} \in K \quad \text{and} \\
	\absnorm{\Rewbar{s}{a} - \Rewhatbar{s}{a}} & \leq & \epsilon, \quad \forall \tuple{s}{a} \in \bar{K} \quad \text{and} \\
	\onenorm{\trabar{s}{a}{s'} - \trahatbar{s}{a}{s'}} & \leq & \epsilon, \quad \forall \tuple{s}{a} \in \bar{K}
	\end{array}
	\right) & \leq 1-\delta.
	\end{align}
	Importantly, notice that the probabilistic event of Inequality~\ref{eq:proof:epsilon-accurate-model} is the intersection of at most $4 \nS \nA$ individual events of estimating either $\Rew{s}{a}$, $\tra{s}{a}{s'}$, $\Rewbar{s}{a}$ or $\trabar{s}{a}{s'}$ with precision $\epsilon$.
	Each one of those individual events is itself true with probability at least $1 - \delta'$, where $\delta' \in (0, 1]$ is a parameter.
	For \emph{all} the individual events to be true at the same time, \ie{} for Inequality~\ref{eq:proof:epsilon-accurate-model} to be verified, one must apply Boole's inequality and set $\delta' = \delta / (4 \nS \nA)$ to ensure a total probability --- \ie{}, probability of the intersection of all the individual events --- of at least $1 - \delta$.
	
	We demonstrate now the result for each one of the three cases \begin{enumerate}[label=(\roman*), align=left]
		\item $\tuple{s}{a} \in K \cap \bar{K}$, \label{item:kbark}
		\item $\tuple{s}{a} \in K \cap \bar{K}^c$ and \label{item:kbarkc}
		\item $\tuple{s}{a} \in K^c \cap \bar{K}^c$, \label{item:kcbarkc}
	\end{enumerate}
	the case $\tuple{s}{a} \in K^c \cap \bar{K}$ being the symmetric of case \ref{item:kbarkc}.
	
	\ref{item:kbark} If $\tuple{s}{a} \in K \cap \bar{K}$, then we have $\epsilon$-close estimates of both models with high probability, as described by Inequality~\ref{eq:proof:epsilon-accurate-model}.
	By definition:
	\begin{equation}
	\modpm{s}{a}{\gamma \V{*}{\bar{M}}}{M}{\bar{M}} = \absnorm{\Rew{s}{a} - \Rewbar{s}{a}} + \gamma \sum_{s' \in \S} \V{*}{\bar{M}} (s') \absnorm{\tra{s}{a}{s'} - \trabar{s}{a}{s'}}
	\mthspc .
	\label{eq:model-distance-def}
	\end{equation}
	The first term of the right hand side of Equation~\ref{eq:model-distance-def} respects the following sequence of inequalities with probability at least $1 - \delta$:
	\begin{align}
	\absnorm{\Rew{s}{a} - \Rewbar{s}{a}}
	& \leq \absnorm{\Rew{s}{a} - \Rewhat{s}{a}} + \absnorm{\Rewhat{s}{a} - \Rewhatbar{s}{a}} + \absnorm{\Rewbar{s}{a} - \Rewhatbar{s}{a}} \nonumber \\
	& \leq \absnorm{\Rewhat{s}{a} - \Rewhatbar{s}{a}} + 2 \epsilon
	\mthspc .
	\label{eq:reward-gap-ub-1}
	\end{align}
	The second term of the right hand side of Equation~\ref{eq:model-distance-def} respects the following sequence of inequalities with probability at least $1 - \delta$:
	\begin{align}
	\gamma \sum_{s' \in \S} \V{*}{\bar{M}} (s') \absnorm{\tra{s}{a}{s'} - \trabar{s}{a}{s'}}
	& \leq \gamma \sum_{s' \in \S} \bar{V}(s') \left( \absnorm{\tra{s}{a}{s'} - \trahat{s}{a}{s'}} + \absnorm{\trahat{s}{a}{s'} - \hat{\bar{T}}_{s s'}^a} + \absnorm{\trabar{s}{a}{s'} - \hat{\bar{T}}_{s s'}^a} \right) \nonumber\\
	& \leq \gamma \max_{s'} \bar{V}(s') \sum_{s' \in \S} \absnorm{\tra{s}{a}{s'} - \trahat{s}{a}{s'}} + \gamma \sum_{s' \in \S} \bar{V}(s') \absnorm{\trahat{s}{a}{s'} - \hat{\bar{T}}_{s s'}^a} + \nonumber\\
	& \phleq \gamma \max_{s'} \bar{V}(s') \sum_{s' \in \S} \absnorm{\trabar{s}{a}{s'} - \hat{\bar{T}}_{s s'}^a} \nonumber\\
	& \leq \gamma \sum_{s' \in \S} \bar{V}(s') \absnorm{\trahat{s}{a}{s'} - \hat{\bar{T}}_{s s'}^a} + 2 \epsilon \gamma \max_{s' \in \S} \bar{V}(s').
	\label{eq:transition-gap-ub-1}
	\end{align}
	Replacing the Inequalities~\ref{eq:reward-gap-ub-1} and~\ref{eq:transition-gap-ub-1} in Equation~\ref{eq:model-distance-def} yields
	\begin{align*}
	\modiv{s}{a}{\gamma \V{*}{\bar{M}}}{M}{\bar{M}}
	& \leq \absnorm{\Rewhat{s}{a} - \Rewhatbar{s}{a}} + \gamma \sum_{s' \in \S} \bar{V}(s') \absnorm{\trahat{s}{a}{s'} - \hat{\bar{T}}_{s s'}^a} + 2 \epsilon + 2 \epsilon \gamma \max_{s' \in \S} \bar{V}(s') \\
	& \leq \modpm{s}{a}{\gamma \bar{V}}{\hat{M}}{\hat{\bar{M}}} + 2 \epsilon \left(1 + \gamma \max_{s' \in \S} \bar{V}(s') \right)
	\mthspc ,
	\end{align*}
	which holds with probability at least $1 - \delta$ and proves the Theorem for case \ref{item:kbark}.
	
	\ref{item:kbarkc} If $\tuple{s}{a} \in K \cap \bar{K}^c$, then we do not have an $\epsilon$-close estimate of $\trabar{s}{a}{\cdot}$ and $\Rewbar{s}{a}$.
	Similarly to the proof of case \ref{item:kbark}, we \ub{} sequentially the two terms of the right hand side of Equation~\ref{eq:model-distance-def}.
	With probability at least $1 - \delta$, we have the following:
	\begin{align}
	\absnorm{\Rew{s}{a} - \Rewbar{s}{a}} & \leq \absnorm{\Rew{s}{a} - \Rewhat{s}{a}} + \absnorm{\Rewhat{s}{a} - \Rewbar{s}{a}} \nonumber\\
	& \leq \epsilon + \max_{\bar{R} \in [0, 1]} \absnorm{\Rewhat{s}{a} - \bar{R}}.
	\label{eq:reward-gap-ub-2}
	\end{align}
	Similarly, with probability at least $1 - \delta$, we have:
	\begin{align}%
	\gamma \sum_{s' \in \S} \V{*}{\bar{M}} (s') \absnorm{\tra{s}{a}{s'} - \trabar{s}{a}{s'}} & \leq \gamma \sum_{s' \in \S} \bar{V}(s') \left( \absnorm{\tra{s}{a}{s'} - \trahat{s}{a}{s'}} + \absnorm{\trahat{s}{a}{s'} - \trabar{s}{a}{s'}} \right) \nonumber\\
	& \leq \gamma \max_{s' \in \S} \bar{V}(s') \epsilon + \gamma \max_{\bar{T} \in \setprobavect{\nS}} \sum_{s' \in \S} \bar{V}(s') \absnorm{\trahat{s}{a}{s'} - \bar{T}_{s'}}
	\mthspc ,%
	\label{eq:transition-gap-ub-2}%
	\end{align}%
	where $\setprobavect{\nS}$ is the set of probability vectors of size $\nS$.
	Combining inequalities~\ref{eq:reward-gap-ub-2} and~\ref{eq:transition-gap-ub-2}, we get the following with probability at least $1 - \delta$, by noticing $\modpm{s}{a}{\gamma \V{*}{\bar{M}}}{M}{\bar{M}}$ on the left hand side:
	\begin{align*}
	\modiv{s}{a}{\gamma \V{*}{\bar{M}}}{M}{\bar{M}} \leq \max_{\mdpvarbar \in \M} \modpm{s}{a}{\gamma \bar{V}}{\hat{M}}{\mdpvarbar} + \epsilon \left( 1 + \gamma \max_{s'} \bar{V}(s') \right),
	\end{align*}
	which is the expected result for case \ref{item:kbarkc}.
	
	\ref{item:kcbarkc} If $\tuple{s}{a} \in K^c \cap \bar{K}^c$, then we do not have $\epsilon$-close estimates of both tasks.
	In such a case, the result
	\begin{equation*}
	\modiv{s}{a}{\gamma \V{*}{\bar{M}}}{M}{\bar{M}} \leq \max_{\mdpvar, \mdpvarbar \in \M^2} \modpm{s}{a}{\gamma \bar{V}}{\mdpvar}{\mdpvarbar}
	\end{equation*}
	is straightforward by remarking that, as a consequence of Inequality~\ref{eq:proof:epsilon-accurate-model}, we have that $\V{*}{\bar{M}}(s) \leq \bar{V}(s)$ with probability at least $1 - \delta$.
\end{proof}

\section{Analytical calculation of $\modivhat{s}{a}{M}{\bar{M}}$ in Proposition~\ref{proposition:ub-model-pseudo-metric}}
\label{sec:app:analytical-calculation-dmodelhat}

Consider two tasks $M = (\transition, \Reward)$ and $\bar{M} = (\bar{\transition}, \bar{\Reward})$, with  $K$  and $\bar{K}$ the respective sets of state-action pairs where their learned models $\hat{M} = (\hat{\transition}, \hat{\Reward})$ and $\hat{\bar{M}} = ( \hat{\bar{\transition}}, \hat{\bar{\Reward}} )$ are known with accuracy $\epsilon$ in $\mathcal{L}_1$-norm with probability at least $1 - \delta$.
We note $\Vmax$, a known \ub{} on the maximum achievable value.
In the \wc{} where one does not have any information on the value of $\Vmax$, setting $\Vmax = \frac{1}{1 - \gamma}$ is a valid \ub{}.
We detail the computation of $\modivhat{s}{a}{M}{\bar{M}}$ for each cases: 1) $\tuple{s}{a} \in K \cap \bar{K}$, 2) $\tuple{s}{a} \in K \cap \bar{K}^c$, and 3) $\tuple{s}{a} \in K^c \cap \bar{K}^c$.
The case $\tuple{s}{a} \in K^c \cap \bar{K}$ being the symmetric of case 2), the same calculations apply.

1) If $\tuple{s}{a} \in K \cap \bar{K}$, we have
\begin{align*}
\modivhat{s}{a}{M}{\bar{M}}
& = \modpm{s}{a}{\gamma \bar{V}}{\hat{M}}{\hat{\bar{M}}} + 2 B \\
& = \absnorm{\Rewhat{s}{a} - \Rewhatbar{s}{a}} + \gamma \sum_{s' \in \S} \bar{V}(s') \absnorm{\trahat{s}{a}{s'} - \hat{\bar{T}}_{s s'}^a} + 2 \epsilon \left(1 + \gamma \max_{s' \in \S} \bar{V}(s') \right).
\end{align*}
Since $\tuple{s}{a}$ is a known state-action pair, everything is known and computable in this last equation.
Note that $\max_{s' \in \S} \bar{V}(s')$ can be tracked along the updates of $\bar{V}$ and thus its computation does not induce any additional computational complexity.

2) If $\tuple{s}{a} \in K \cap \bar{K}^c$, we have
\begin{align*}
\modivhat{s}{a}{M}{\bar{M}}
& = \max\limits_{\bar{\mu} \in \M} \modpm{s}{a}{\gamma \bar{V}}{\hat{M}}{\bar{\mu}} + B \\
& = \max_{\Rewbar{s}{a}, \trabar{s}{a}{s'}} \left( \absnorm{\Rewhat{s}{a} - \Rewbar{s}{a}} + \gamma \sum_{s' \in \S} \bar{V}(s') \absnorm{\trahat{s}{a}{s'} - \bar{T}_{ss'}^a} \right) + \epsilon \left(1 + \gamma \max_{s' \in \S} \bar{V}(s') \right), \\
& = \max_{r\in [0,1]} \absnorm{\Rewhat{s}{a} - r}
+ \gamma \max_{\substack{t \in [0, 1]^{\nS} \\ \text{\st{} } \sum_{s' \in \S} t_{s'} = 1}} \left( \sum_{s' \in \S} \bar{V}(s') \absnorm{\trahat{s}{a}{s'} - t_{s'}} \right)
+ \epsilon \left(1 + \gamma \max_{s' \in \S} \bar{V}(s') \right).
\end{align*}
First, we have
\begin{equation*}
\max_{r\in [0,1]} \absnorm{\Rewhat{s}{a} - r} = \max \left\{ \Rewhat{s}{a}, 1 - \Rewhat{s}{a} \right\}.
\end{equation*}
Maximizing over the variable $t \in [0, 1]^{\nS}$ such that $\sum_{s' \in \S} t_{s'} = 1$ is equivalent to maximizing a convex combination of the positive vector $\bar{V}$ whose terms are not independent as they must sum to one. 
This is easily solvable as a linear programming problem.
A straightforward (simplex-like) resolution procedure consists in progressively adding mass on the terms that will maximize the convex combination as follows:
\begin{itemize}
	\item $t_{s'} = 0, \mthspc \forall s'\in \S$
	\item $l$ = Sort states by decreasing values of $\bar{V}$
	\item While $\sum_{s \in \S} t_{s} < 1$
	\begin{itemize}
		\item $s'$ = pop first state in $l$
		\item Assign $t_{s'} \leftarrow \arg\max_{t \in [0,1]} \absnorm{\hat{T}_{ss'}^a - t}$ to $s'$ (note that $t_{s'}\in \{0,1\}$)
		\item If $\sum_{s \in \S} t_{s} > 1$, then $t_{s'} \leftarrow 1 - \sum_{s \in \S\setminus{s'}} t(s)$
	\end{itemize}
\end{itemize}
This allows calculating the maximum over transition models.

Notice that there is a simpler computation that almost always yields the same result (when it does not, it provides an \ub{}) and does not require the burden of the previous procedure. 
Consider the subset of states for which $\bar{V}(s')=\max_{s \in \S} \bar{V}(s)$ (often these are states in $\bar{K}^c$). 
Among those states, let us suppose there exists $s^+$, unreachable from $(s, a)$, according to $\hat{T}$, \ie{}, $\trahat{s}{a}{s^+} = 0$. 
If $\bar{M}$ has not been fully explored, as is often the case in \rmax{}, there may be many such states.
Then the distribution $t$ with all its mass on $s^+$ maximizes the $\max_{t \in [0,1]^{\nS}}$ term. 
Conversely, if such a state does not exist (that is, if for all such states $\trahat{s}{a}{s^+} > 0$), then $\max_{s \in \S} \bar{V}(s)$ is an \ub{} on the $\max_{t \in [0,1]^{\nS}}$ term.
Therefore:
\begin{equation*}
\max_{t\in [0, 1]^{\nS}} \left( \sum_{s' \in \S} \bar{V}(s') \absnorm{\trahat{s}{a}{s'} - t_{s'}} \right) \leq \max_{s \in \S} \bar{V}(s)
\mthspc ,
\end{equation*}
with equality in many cases.

3) If $\tuple{s}{a} \in K^c \cap \bar{K}^c$, the resolution is trivial and we have
\begin{align*}
\modivhat{s}{a}{M}{\bar{M}}
& = \max\limits_{\mu, \bar{\mu} \in \M^2} \modpm{s}{a}{\gamma \bar{V}}{\mu}{\bar{\mu}} \\
& = \max_{\Rew{s}{a}, T_{ss'}^a, \Rewbar{s}{a}, \bar{T}_{ss'}^a } \left( \absnorm{\Rew{s}{a} - \Rewbar{s}{a}} + \gamma \sum_{s' \in \S} \bar{V}(s') \absnorm{T_{ss'}^a - \bar{T}_{ss'}^a} \right) \\
& = \max_{r,\bar{r} \in [0,1]} \absnorm{r - \bar{r}} + \gamma \max_{\substack{t,\bar{t} \in [0,1]^{\nS} \\ \text{\st{} } \sum_{s \in \S} t_s = 1 \\ \text{and } \sum_{s \in \S} \bar{t}_s = 1 }} \sum_{s'\in \S} \bar{V}(s') \absnorm{t_{s'} - \bar{t}_{s'}} \\
& = 1 + 2 \gamma \max_{s \in \S} \bar{V}(s)
\mthspc .
\end{align*}
Overall, computing the value of the provided \ub{} in the three cases allows to compute $\modivhat{s}{a}{M}{\bar{M}}$ for all $\tuple{s}{a} \in \SA$.

\section{Proof of Proposition~\ref{proposition:asym-mdp-pseudo-distance-upperbound}}

\begin{lemma}
	\label{lemma:ub-mdp-distance-fp}
	Given two tasks $M, \bar{M} \in \M$, $K$ the set of state-action pairs for which $(\Reward, \transition)$ is known with accuracy $\epsilon$ in $\mathcal{L}_1$-norm with probability at least $1 - \delta$.
	If $\gamma (1 + \epsilon) < 1$, this equation on $\hat{d} \in \functionspace{\SA}{\R}$ is a \fp{} equation admitting a unique solution.
	\begin{equation*}
	\hat{d}_{s, a} = 
	\begin{cases}
	\modivhat{s}{a}{M}{\bar{M}} + \gamma \Big( \sum\limits_{s' \in \S} \hat{T}_{s s'}^a \max\limits_{a' \in \A} \hat{d}_{s', a'} + \epsilon \max\limits_{\tuple{s'}{a'} \in \SA} \hat{d}_{s', a'} \Big)
	\text{ if } \tuple{s}{a} \in K, \\
	\modivhat{s}{a}{M}{\bar{M}} + \gamma \max\limits_{\tuple{s'}{a'} \in \SA} \hat{d}_{s', a'}
	\text{ else}.
	\end{cases}
	\end{equation*}
	We refer to this unique solution as $\mdpdivhat{s}{a}{M}{\bar{M}}$.
\end{lemma}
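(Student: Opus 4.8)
The plan is to replay, with one extra term to track, the contraction argument used for Lemma~\ref{lemma:mdp-distance-fp}: I would exhibit the functional operator sitting on the right-hand side, show it is a contraction on the complete metric space $\tuple{\functionspace{\SA}{\R}}{\inftynorm{\cdot}}$, and invoke the Banach \fp{} theorem. Concretely, let $L$ be the operator mapping $\hat d \in \functionspace{\SA}{\R}$ to the function whose value at $\tuple{s}{a}$ equals $\modivhat{s}{a}{M}{\bar{M}} + \gamma\left(\sum_{s'\in\S}\hat T_{ss'}^a \max_{a'\in\A}\hat d_{s'a'} + \epsilon \max_{s',a'\in\SA}\hat d_{s'a'}\right)$ when $\tuple{s}{a}\in K$, and to $\modivhat{s}{a}{M}{\bar{M}} + \gamma\max_{s',a'\in\SA}\hat d_{s'a'}$ otherwise. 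Since $\modivhat{s}{a}{M}{\bar{M}}$ does not depend on $\hat d$, it cancels out as soon as we compare $Lf$ and $Lg$.

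Next I would bound $Lf_{sa} - Lg_{sa}$ for two functions $f,g \in \functionspace{\SA}{\R}$, case by case on membership of $\tuple{s}{a}$ in $K$. For $\tuple{s}{a}\in K$, using $\max_{a'}f_{s'a'} - \max_{a'}g_{s'a'} \leq \max_{a'}(f_{s'a'}-g_{s'a'}) \leq \inftynorm{f-g}$, the analogous estimate $\max_{s',a'}f_{s'a'} - \max_{s',a'}g_{s'a'} \leq \inftynorm{f-g}$, and $\sum_{s'\in\S}\hat T_{ss'}^a = 1$ (the learned transition model is a probability distribution), one gets $Lf_{sa} - Lg_{sa} \leq \gamma(1+\epsilon)\inftynorm{f-g}$. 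For $\tuple{s}{a}\notin K$ the same bound on the global maximum gives $Lf_{sa} - Lg_{sa} \leq \gamma\inftynorm{f-g} \leq \gamma(1+\epsilon)\inftynorm{f-g}$. Swapping the roles of $f$ and $g$ yields $\absnorm{Lf_{sa}-Lg_{sa}} \leq \gamma(1+\epsilon)\inftynorm{f-g}$, and since this holds for every $\tuple{s}{a}\in\SA$, we obtain $\inftynorm{Lf-Lg} \leq \gamma(1+\epsilon)\inftynorm{f-g}$.

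Finally, the hypothesis $\gamma(1+\epsilon)<1$ makes $\gamma(1+\epsilon)$ a genuine contraction modulus; as $\tuple{\functionspace{\SA}{\R}}{\inftynorm{\cdot}}$ is complete and non-empty, the Banach \fp{} theorem gives a unique solution to $\hat d = L\hat d$, which we denote $\mdpdivhat{s}{a}{M}{\bar{M}}$. The only real subtlety — and the reason the condition is $\gamma(1+\epsilon)<1$ rather than the usual $\gamma<1$ — is that the extra $\epsilon\max_{s',a'}\hat d_{s'a'}$ term inflates the Lipschitz constant of $L$ by a factor $1+\epsilon$; once one notices $\hat T^a_{s\cdot}$ is a probability vector, absorbing this term is the entire content of the argument, everything else being a routine transcription of the proof of Lemma~\ref{lemma:mdp-distance-fp}.
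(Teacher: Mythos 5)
Your proposal is correct and follows essentially the same route as the paper's proof: define the piecewise operator $L$, bound $Lf_{sa}-Lg_{sa}$ separately for $\tuple{s}{a}\in K$ and $\tuple{s}{a}\notin K$ using the max-difference inequality and the fact that $\hat{T}_{s\cdot}^a$ sums to one, conclude $\inftynorm{Lf-Lg}\leq\gamma(1+\epsilon)\inftynorm{f-g}$, and invoke the Banach fixed-point theorem on the complete space $\tuple{\functionspace{\SA}{\R}}{\inftynorm{\cdot}}$ under the hypothesis $\gamma(1+\epsilon)<1$. Your explicit symmetrization to get the absolute value is a small extra care the paper leaves implicit; otherwise the two arguments coincide.
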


\begin{proof}[Proof of Lemma~\ref{lemma:ub-mdp-distance-fp}]
	Let $L$ be the functional operator that maps any function $d \in \functionspace{\SA}{\R}$ to
	\begin{equation*}
	\FUNCTION{Ld}{\SA}{\R}{\tuple{s}{a}}{
		\begin{cases}
		\modivhat{s}{a}{M}{\bar{M}} + \gamma \left( \sum\limits_{s' \in \S} \trahat{s}{a}{s'} \max\limits_{a' \in \A} d_{s', a'} + \epsilon \max\limits_{\tuple{s'}{a'} \in \S \times \A} d_{s', a'} \right) \text{ if } \tuple{s}{a} \in K, \\
		\modivhat{s}{a}{M}{\bar{M}} + \gamma \max\limits_{\tuple{s'}{a'} \in \S \times \A} d_{s', a'}
		\text{ otherwise}.
		\end{cases}
	}
	\end{equation*}
	Let $f$ and $g$ be two functions from $\SA$ to $\R$.
	If $\tuple{s}{a} \in K$, we have
	\begin{align*}
	Lf_{s a} - Lg_{s a}
	& = \gamma \sum_{s' \in \S} \tra{s}{a}{s'} \left( \max_{a' \in \A} f_{s' a'} - \max_{a' \in \A} g_{s' a'} \right) + \gamma \epsilon \left( \max_{\tuple{s'}{a'} \in \SA} f_{s' a'} - \max_{\tuple{s'}{a'} \in \SA} g_{s' a'} \right) \\
	& \leq (\gamma + \gamma \epsilon) \left( \max_{\tuple{s'}{a'} \in \SA} f_{s' a'} - \max_{\tuple{s'}{a'} \in \SA} g_{s' a'} \right) \\
	& \leq \gamma (1 + \epsilon) \max_{\tuple{s'}{a'} \in \SA} \left( f_{s' a'} - g_{s' a'} \right) \\
	& \leq \gamma (1 + \epsilon) \inftynorm{f - g}
	\mthspc .
	\end{align*}
	If $\tuple{s}{a} \notin K$, we have
	\begin{align*}
	Lf_{s a} - Lg_{s a}
	& = \gamma \left( \max_{\tuple{s'}{a'} \in \SA} f_{s' a'} - \max_{\tuple{s'}{a'} \in \SA} g_{s' a'} \right) \\
	& \leq \gamma \max_{\tuple{s'}{a'} \in \SA} \left( f_{s' a'} - g_{s' a'} \right) \\
	& = \gamma (1 + \epsilon) \inftynorm{f - g}
	\mthspc .
	\end{align*}
	In both cases, $\inftynorm{Lf - Lg} \leq \gamma (1 + \epsilon) \inftynorm{f - g}$.
	If $\gamma (1 + \epsilon) < 1$, $L$ is a contraction mapping in the metric space $\tuple{\functionspace{\SA}{\R}}{\|\cdot\|_\infty}$.
	This metric space being complete and non-empty, it follows from Banach \fp{} theorem that $d = L d$ admits a single solution.
\end{proof}

\begin{proof}[Proof of Proposition~\ref{proposition:asym-mdp-pseudo-distance-upperbound}]
	Consider two MDPs $M, \bar{M} \in \M$.
	Before proving the result, notice that we shall put ourselves in the case of Proposition~\ref{proposition:ub-model-pseudo-metric}, for the \ub{} on the pseudometric between models $\modivhat{s}{a}{M}{\bar{M}}$ to be true \ub{}s with probability at least $1 - \delta$ for all $(s, a) \in \SA$.
	As seen in the proof of Proposition~\ref{proposition:ub-model-pseudo-metric}, this implies learning any reward or transition function with precision $\epsilon$ in $\L_1$-norm with probability at least $1 - \delta / (4 \nS \nA)$.
	
	The proof is done by induction, by calculating the values of $\mdpdiv{s}{a}{M}{\bar{M}}$ and $\mdpdivhat{s}{a}{M}{\bar{M}}$ following the value iteration algorithm.
	Those values can respectively be computed via the sequences of iterates $( d^{n} )_{n\in\N}$ and $( \hat{d}^{n} )_{n\in\N}$ defined as follows for all $\tuple{s}{a} \in \SA$:
	\begin{align*}
	\mdpdivn{s}{a}{0}{M}{\bar{M}} & = 0 \\
	\mdpdivn{s}{a}{n + 1}{M}{\bar{M}} & = \modiv{s}{a}{\gamma \V{*}{\bar{M}}}{M}{\bar{M}} + \gamma \sum_{s' \in \S} \tra{s}{a}{s'} \max_{a' \in \A} \mdpdivn{s'}{a'}{n}{M}{\bar{M}}
	\mthspc,
	\end{align*}
	and,
	\begin{align*}
	\mdpdivhatn{s}{a}{0}{M}{\bar{M}} & = 0, \\
	\mdpdivhatn{s}{a}{n + 1}{M}{\bar{M}} & = 
	\begin{cases}
	\modivhat{s}{a}{M}{\bar{M}} + \gamma \left( \sum\limits_{s' \in \S} \trahat{s}{a}{s'} \max\limits_{a' \in \A} \mdpdivhatn{s'}{a'}{n}{M}{\bar{M}} + \epsilon \max\limits_{\tuple{s'}{a'} \in \SA} \mdpdivhatn{s'}{a'}{n}{M}{\bar{M}} \right) \text{ if } \tuple{s}{a} \in K, \\
	\modivhat{s}{a}{M}{\bar{M}} + \gamma \max\limits_{\tuple{s'}{a'} \in \SA} \mdpdivhatn{s'}{a'}{n}{M}{\bar{M}}
	\text{ otherwise}.
	\end{cases}
	\end{align*}
	The proof at rank $n = 0$ is trivial.
	Let us assume the proposition $\mdpdivn{s}{a}{n}{M}{\bar{M}} \leq \mdpdivhatn{s}{a}{n}{M}{\bar{M}}, \mthspc \forall \tuple{s}{a} \in \SA$ true at rank $n \in \N$ and consider rank $n + 1$.
	There are two cases, depending on the fact that $\tuple{s}{a}$ is in $K$ or not.
	
	If $\tuple{s}{a} \in K$, we have
	\begin{align*}
	\mdpdivn{s}{a}{n + 1}{M}{\bar{M}} - \mdpdivhatn{s}{a}{n + 1}{M}{\bar{M}}
	& = \modiv{s}{a}{\gamma \V{*}{\bar{M}}}{M}{\bar{M}} - \modivhat{s}{a}{M}{\bar{M}} \\
	& \pheq + \gamma \sum_{s' \in \S} \left( \tra{s}{a}{s'} \max_{a' \in \A} \mdpdivn{s'}{a'}{n}{M}{\bar{M}} - \trahat{s}{a}{s'} \max\limits_{a' \in \A} \mdpdivhatn{s'}{a'}{n}{M}{\bar{M}} \right) \\
	& \pheq - \gamma \epsilon \max\limits_{\tuple{s'}{a'} \in \SA} \mdpdivhatn{s'}{a'}{n}{M}{\bar{M}}
	\mthspc .
	\end{align*}
	Using Proposition~\ref{proposition:ub-model-pseudo-metric}, we have that $\modivhat{s}{a}{M}{\bar{M}}$ is an \ub{} on $\modiv{s}{a}{\gamma \V{*}{\bar{M}}}{M}{\bar{M}}$ with probability at least $1 - \delta$.
	Hence
	\begin{equation*}
	\Pr \left( \modiv{s}{a}{\gamma \V{*}{\bar{M}}}{M}{\bar{M}} - \modivhat{s}{a}{M}{\bar{M}} \leq 0 \right) \geq 1 - \delta.
	\end{equation*}
	This plus the fact that $\mdpdivn{s}{a}{n}{M}{\bar{M}} \leq \mdpdivhatn{s}{a}{n}{M}{\bar{M}}, \mthspc \forall \tuple{s}{a} \in \SA$ by induction hypothesis, we have with probability at least $1 - \delta$,
	\begin{align*}
	\mdpdivn{s}{a}{n + 1}{M}{\bar{M}} - \mdpdivhatn{s}{a}{n + 1}{M}{\bar{M}}
	& \leq \gamma \sum_{s' \in \S} \max_{a' \in \A} \mdpdivhatn{s'}{a'}{n}{M}{\bar{M}} \left( \tra{s}{a}{s'} - \trahat{s}{a}{s'} \right) - \gamma \epsilon \max\limits_{\tuple{s'}{a'} \in \SA} \mdpdivhatn{s'}{a'}{n}{M}{\bar{M}} \\
	& \leq \gamma \max_{\tuple{s'}{a'} \in \SA} \mdpdivhatn{s'}{a'}{n}{M}{\bar{M}} \sum_{s' \in \S} \left( \tra{s}{a}{s'} - \trahat{s}{a}{s'} \right) - \gamma \epsilon \max_{\tuple{s'}{a'} \in \SA} \mdpdivhatn{s'}{a'}{n}{M}{\bar{M}} \\
	& \leq \gamma \max_{\tuple{s'}{a'} \in \SA} \mdpdivhatn{s'}{a'}{n}{M}{\bar{M}} \left( \onenorm{T - \hat{T}} - \epsilon \right)
	\mthspc .
	\end{align*}
	Since $\Pr \left( \onenorm{T - \hat{T}} \leq \epsilon \right) \geq 1 - \delta$, we have with probability at least $1 - \delta$,
	\begin{equation*}
	\mdpdivn{s}{a}{n + 1}{M}{\bar{M}} - \mdpdivhatn{s}{a}{n + 1}{M}{\bar{M}}
	\leq \gamma \max_{\tuple{s'}{a'} \in \SA} \mdpdivhatn{s'}{a'}{n}{M}{\bar{M}} \left( \epsilon - \epsilon \right) = 0,
	\end{equation*}
	which concludes the proof in the first case case.
	
	Conversely, if $\tuple{s}{a} \notin K$, we have
	\begin{align*}
	\mdpdivn{s}{a}{n + 1}{M}{\bar{M}} - \mdpdivhatn{s}{a}{n + 1}{M}{\bar{M}}
	& = \modiv{s}{a}{\gamma \V{*}{\bar{M}}}{M}{\bar{M}} - \modivhat{s}{a}{M}{\bar{M}} + \gamma \sum_{s' \in \S} \tra{s}{a}{s'} \max_{a' \in \A} \mdpdivn{s'}{a'}{n}{M}{\bar{M}} - \gamma \max\limits_{\tuple{s'}{a'} \in \SA} \mdpdivhatn{s'}{a'}{n}{M}{\bar{M}}
	\mthspc .
	\end{align*}
	Using the same reasoning than in case $\tuple{s}{a} \in K$, we have with probability higher than $1 - \delta$,
	\begin{align*}
	\mdpdivn{s}{a}{n + 1}{M}{\bar{M}} - \mdpdivhatn{s}{a}{n + 1}{M}{\bar{M}}
	& \leq \gamma \sum_{s' \in \S} \tra{s}{a}{s'} \max_{a' \in \A} \mdpdivhatn{s'}{a'}{n}{M}{\bar{M}} - \gamma \max_{\tuple{s'}{a'} \in \SA} \mdpdivhatn{s'}{a'}{n}{M}{\bar{M}} \\
	& \leq \gamma \max_{\tuple{s'}{a'} \in \SA} \mdpdivhatn{s'}{a'}{n}{M}{\bar{M}} - \gamma \max_{\tuple{s'}{a'} \in \SA} \mdpdivhatn{s'}{a'}{n}{M}{\bar{M}} \\
	& \leq 0,
	\end{align*}
	which concludes the proof in the second case.
\end{proof}

\section{Proof of Proposition~\ref{proposition:computational-complexity}}

\begin{proof}[Proof of Proposition~\ref{proposition:computational-complexity}]
	The cost of \lrmax{} is constant on most time steps since the action is greedily chosen \wrt{} the \ub{} on the optimal \Qfun{}, which is a lookup table.
	Let $N \in \N$ be the number of source tasks that have been learned by \lrmax{} during a \lrl{} experiment.
	When updating a new state-action pair, \ie{}, labeling it as a known pair, the algorithm performs $2N$ Dynamic Programming (DP) computations to update the induced Lipschitz bounds (Equation~\ref{eq:asym-mdp-pseudo-distance-upperbound}) plus one DP computation to update the total-bound (Equation~\ref{eq:total-ub}).
	In total, we apply $(2N + 1)$ DP computations for each state-action pair update.
	As at most $\nS \nA$ state-action pairs are updated during the exploration of the current MDP, the total number of DP computations is at most $\nS \nA (2N + 1)$, for which we use the value iteration algorithm.
	
	We use the value iteration as a Dynamic Programming method.
	\citet{strehl2009reinforcement} report the minimum number of iterations needed by the value iteration algorithm to estimate a value function (or \Qfun{} in our case) that is $\epsilon_Q$-close to the optimum in maximum norm.
	This minimum number is given by
	\begin{equation*}
	\ceil[\bigg]{\frac{1}{1 - \gamma} \ln \left( \frac{1}{\epsilon_Q (1 - \gamma)} \right)}
	\mthspc .
	\end{equation*}
	Each iteration has a cost $\nS^2 \nA$.
	Overall, the cost of all the DP computations in a complete run of \lrmax{} is
	\begin{equation*}
	\bigotilde \left( \frac{\nS^3 \nA^2 N}{1 - \gamma} \ln \left( \frac{1}{\epsilon_Q (1 - \gamma)} \right) \right)
	\mthspc .
	\end{equation*}
	This, plus the constant cost $\bigo (1)$ applied on each one of the $\ntimesteps$ decision epochs concludes the proof.
\end{proof}

\section{Proof of Proposition~\ref{proposition:max-distance-estimate}}

\begin{proof}[Proof of Proposition~\ref{proposition:max-distance-estimate}]
	Consider an algorithm producing $\epsilon$-accurate model estimates $\modivhat{s}{a}{M}{\bar{M}}$ for a subset $K$ of $\SA$ after interacting with any two MDPs $M, \bar{M} \in \M$.
	Assume $\modivhat{s}{a}{M}{\bar{M}}$ to be an \ub{} of $\modiv{s}{a}{\gamma V^*_{\bar{M}}}{M}{\bar{M}}$ for any $(s, a) \notin K$.
	These assumptions are guaranteed with high probability by Proposition~\ref{proposition:ub-model-pseudo-metric} while running Algorithm~\ref{alg:lrmax} in the \lrl{} setting.
	Then, for any $(s, a) \in \SA$ and any two MDPs $M, \bar{M} \in \M$, we have that
	\begin{align*}
	\begin{array}{lclr}
	\modivhat{s}{a}{M}{\bar{M}}
	& =
	& \modiv{s}{a}{\gamma V^*_{\bar{M}}}{M}{\bar{M}} \pm \epsilon & \text{ if } (s, a) \in K \\
	\modivhat{s}{a}{M}{\bar{M}}
	& \geq
	& \modiv{s}{a}{\gamma V^*_{\bar{M}}}{M}{\bar{M}}
	& \text{ else.}
	\end{array}
	\end{align*}
	Particularly, $\modivhat{s}{a}{M}{\bar{M}} + \epsilon \geq \modiv{s}{a}{\gamma V^*_{\bar{M}}}{M}{\bar{M}}$ for all $(s, a) \in \SA$ and any $M, \bar{M} \in \M$.
	By definition of $\prior(s, a)$, this implies that, for all $(s, a) \in \SA$,
	\begin{equation}
	\max_{M, \bar{M} \in \tilde{\M}} \modivhat{s}{a}{M}{\bar{M}} + \epsilon \geq \prior(s, a) \mthspc ,
	\label{eq:max-estimator-ub}
	\end{equation}
	where $\tilde{M}$ is the set of possible tasks in the considered \lrl{} experiment.
	Consider $\hat{\M}$, the set of sampled MDPs which allows to define $\hat{D}_{\max}(s, a) = \max_{M, \bar{M} \in \hat{\M}} \modivhat{s}{a}{M}{\bar{M}}$ as the maximum model distance for all the experienced MDPs at $(s, a) \in \SA$.
	We have that
	\begin{equation*}
	\hat{D}_{\max}(s, a) = \max_{M, \bar{M} \in \tilde{\M}} \modivhat{s}{a}{M}{\bar{M}} \mthspc ,
	\end{equation*}
	only if two MDPs maximizing the right hand side of this equation belong to $\hat{M}$.
	If it is the case, then Equation~\ref{eq:max-estimator-ub} imply that
	\begin{equation}
	\hat{D}_{\max}(s, a) + \epsilon \geq \prior(s, a) \mthspc .
	\label{eq:ub-dmax-result}
	\end{equation}
	Overall, we require the two MDPs maximizing $\max_{M, \bar{M} \in \tilde{\M}} \modivhat{s}{a}{M}{\bar{M}}$ to be sampled for Equation~\ref{eq:ub-dmax-result} to hold.
	Let us now derive the probability that those two MDPs have been sampled.
	We note them $M_1$ and $M_2$.
	There may exist more candidates for the maximization but, for the sake of generality, we put ourselves in the case where only two MDPs achieve the maximization.
	Let us consider drawing $m \in \N$ tasks.
	We note $p_1$ (respectively $p_2$) the probability of sampling $M_1$ (respectively $M_2$) each time a task is sampled.
	We note $X_1$ (respectively $X_2$) the random variable of the first occurrence of the task $M_1$ (respectively $M_2$) among the $m$ trials.
	Hence, the probability of sampling $M_1$ for the first time at trial $k \in \intrange{1}{m}$ is given by the geometric law and is equal to
	\begin{equation*}
	\Pr \left( X_1 = k \right) = p_1 \left( 1 - p_1 \right)^{k - 1} \mthspc .
	\end{equation*}
	Additionally, the probability of sampling $M_1$ at least once in the first $m$ trials is given by the cumulative distribution function:
	\begin{equation}
	\Pr \left( X_1 \leq m \right) = 1 - (1 - p_1)^m \mthspc .
	\label{eq:cdf-geo}
	\end{equation}
	We are interested in the probability of the event that $M_1$ \emph{and} $M_2$ have been sampled in the $m$ first trials, \ie{} $\Pr \left( X_1 \leq m \cap X_2 \leq m \right)$.
	Following the rule of addition for probabilities, we have that,
	\begin{equation*}
	\Pr \left( X_1 \leq m \cap X_2 \leq m \right) = \Pr \left( X_1 \leq m \right) + \Pr \left( X_2 \leq m \right) - \Pr \left( X_1 \leq m \cup X_2 \leq m \right) \mthspc .
	\end{equation*}
	Given that the event of sampling either $M_1$ or $M_2$ during a single trial happens with probability $p_1 + p_2$, we have by analogy with Equation~\ref{eq:cdf-geo} that $\Pr \left( X_1 \leq m \cup X_2 \leq m \right) = 1 - (1 - (p_1 + p_2))^m$.
	As a result, the following holds:
	\begin{align*}
	\Pr \left( X_1 \leq m \cap X_2 \leq m \right)
	& = 1 - (1 - p_1)^m + 1 - (1 - p_2)^m - \left( 1 - (1 - (p_1 + p_2))^m \right) \\
	& = 1 - (1 - p_1)^m - (1 - p_2)^m + (1 - (p_1 + p_2))^m \\
	& \geq 1 - 2 (1 - \pmin)^m + (1 - 2 \pmin)^m
	\mthspc .
	\end{align*}
	As said earlier, Equation~\ref{eq:ub-dmax-result} holds if $M_1$ and $M_2$ have been sampled during the first $m$ trials, which imply that the probability for Equation~\ref{eq:ub-dmax-result} to hold is at least equal to the probability of sampling both tasks.
	Formally,
	\begin{align*}
	\Pr \left( \hat{D}_{\max}(s, a) + \epsilon \geq \prior(s, a) \right) & \geq \Pr \left( X_1 \leq m \cap X_2 \leq m \right) \\
	& \geq 1 - 2 (1 - \pmin)^m + (1 - 2 \pmin)^m
	\mthspc .
	\end{align*}
	In turn, if $m$ verifies $2 (1 - \pmin)^m - (1 - 2\pmin)^m \leq \delta$, then $1 - 2 (1 - \pmin)^m + (1 - 2 \pmin)^m \geq 1 - \delta$ and $\Pr \left( \hat{D}_{\max}(s, a) + \epsilon \geq \prior(s, a) \right) \geq 1 - \delta$, which concludes the proof.
\end{proof}

\section{Discussion on an \ub{} on distances between MDP models}
\label{sec:app:dmax}

Section \ref{sec:improving-lrmax} introduced the idea of exploiting \emph{prior} knowledge on the maximum distance between two MDP models. 
This idea begs for a more detailed discussion.
Consider two MDPs $M$ and $\bar{M}$.
By definition of the local model pseudo metric in Equation~\ref{eq:local-model-pseudo-metric}, the maximum possible distance is given by
\begin{equation*}
\max_{M, \bar{M} \in \M^2} \modiv{s}{a}{\gamma V^*_{\bar{M}}}{M}{\bar{M}} = \frac{1 + \gamma}{1 - \gamma}.
\end{equation*}
But this assumes that \emph{any} transition or reward model can define $M$ and $\bar{M}$.
In other words, the maximization is made on the whole set of possible MDPs.
To illustrate why this is too naive, consider a game within the Arcade Learning Environment~\citep{bellemare2013ale}.
We, as humans, have a strong bias concerning similarity between environments.
If the game changes, we still assume groups of pixels will move together on the screen as the result of game actions.
For instance, we generally discard possible new games $\bar{M}$ that ``teleport'' objects across the screen without physical considerations. 
We also discard new games that allow transitions from a given screen to another screen full of static.
These examples illustrate why the knowledge of $\prior$ is very natural (and also why its precise value may be irrelevant).
The same observation can be made for the ``tight'' experiment of Section \ref{sec:experiments}; the set of possible MDPs is restricted by some implicit assumptions that constrain the maximum distance between tasks.
For instance, in these experiments, all transitions move to a neighboring state and never ``teleport'' the agent to the other side of the gridworld.
Without the knowledge of $\prior$, \lrmax{} assumes such environments are possible and therefore transfer values very cautiously (with the ultimate goal not to under estimate the optimal \Qfun{}, in order to avoid negative transfer).
Overall, the experiments of Section \ref{sec:experiments} confirm this important insight: safe transfer occurs slowly if no a priori is given on the maximum distance between MDPs.
On the contrary, the knowledge of $\prior$ allows a faster and more efficient transfer between environments.

\section{The ``tight'' environment used in experiments of Section~\ref{sec:experiments}}
\label{sec:app:tight}

The tight environment is a $11 \times 11$ grid-world illustrated in Figure~\ref{fig:tight}.
The initial state of the agent is the central cell displayed with an ``S''.
The actions are moving 1 cell in one of the four cardinal directions.
The reward is 0 everywhere, except for executing an action in one of the three teal cells in the upper-right corner.
Each time a task is sampled, a slipping probability of executing another action as the one selected is drawn in $[0, 1]$ and the reward received in each one of the teal cells is picked in $[0.8, 1.0]$.

\begin{figure}[]
	\centering
	\includegraphics[width=0.25\textwidth]{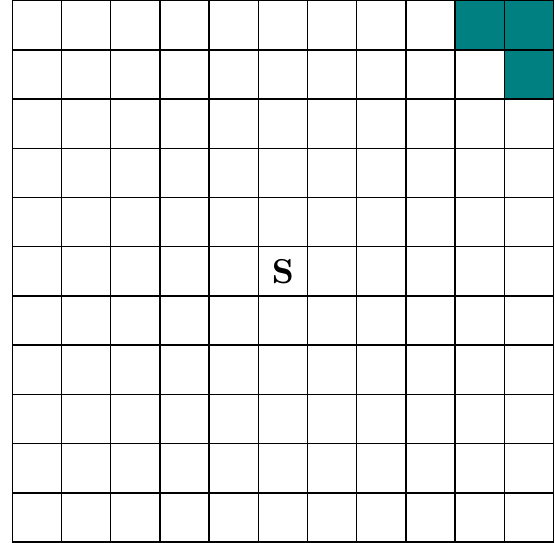}
	\caption{The tight grid-world environment.}
	\label{fig:tight}
\end{figure}

\section{Additional \lrl{} experiments}
\label{sec:app:additional-experiments}

We ran additional experiments on the corridor grid-world environment represented in Figure~\ref{fig:corridor}.
The initial state of the agent is the central cell labeled with the letter ``S''.
The actions are \{left, right\} and the goal is to reach the cell labeled with the letter ``G'' on the extreme right.
A reward $R > 0$ is received when reaching the goal and $0$ otherwise.
At each new task, a new value of $R$ is sampled in $[0.8, 1]$.
The transition function is fixed and deterministic.

\begin{figure}[]
	\centering
	\includegraphics[width=0.3\textwidth]{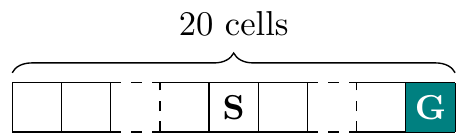}
	\caption{The corridor grid-world environment.}
	\label{fig:corridor}
\end{figure}

The key insight in this experiment is not to lose time exploring the left part of the corridor.
We ran 20 episodes of 11 time steps for each one of the 20 sampled tasks.
Results are displayed in Figure~\ref{fig:corridor-discounted_return_vs_task} and~\ref{fig:corridor-discounted_return_vs_episode}, respectively for the average relative discounted return over episodes and over tasks.
Similarly as in Section~\ref{sec:experiments}, we observe in Figure~\ref{fig:corridor-discounted_return_vs_task} that \lrmax{} benefits from the transfer method as early as the second task.
The \maxqinit{} algorithm benefits from the transfer from task number 12.
Prior knowledge $\prior$ decreases the sample complexity of \lrmax{} as reported earlier and the combination of \lrmax{} with \maxqinit{} outperforms all other methods by providing a tighter \ub{} on the optimal Q-value function.
This decrease of sample complexity is also observed in the episode-wise display of Figure~\ref{fig:corridor-discounted_return_vs_episode} where the convergence happens more quickly on average for \lrmax{} and even more for \maxqinit{}.
This figure allows to see the three learning stages of \lrmax{} reported in Section~\ref{sec:experiments}.
\begin{figure*}[t]
	\centering
	\begin{subfigure}{.49\textwidth}
		\centering
		\includegraphics[
		width=\textwidth
		]{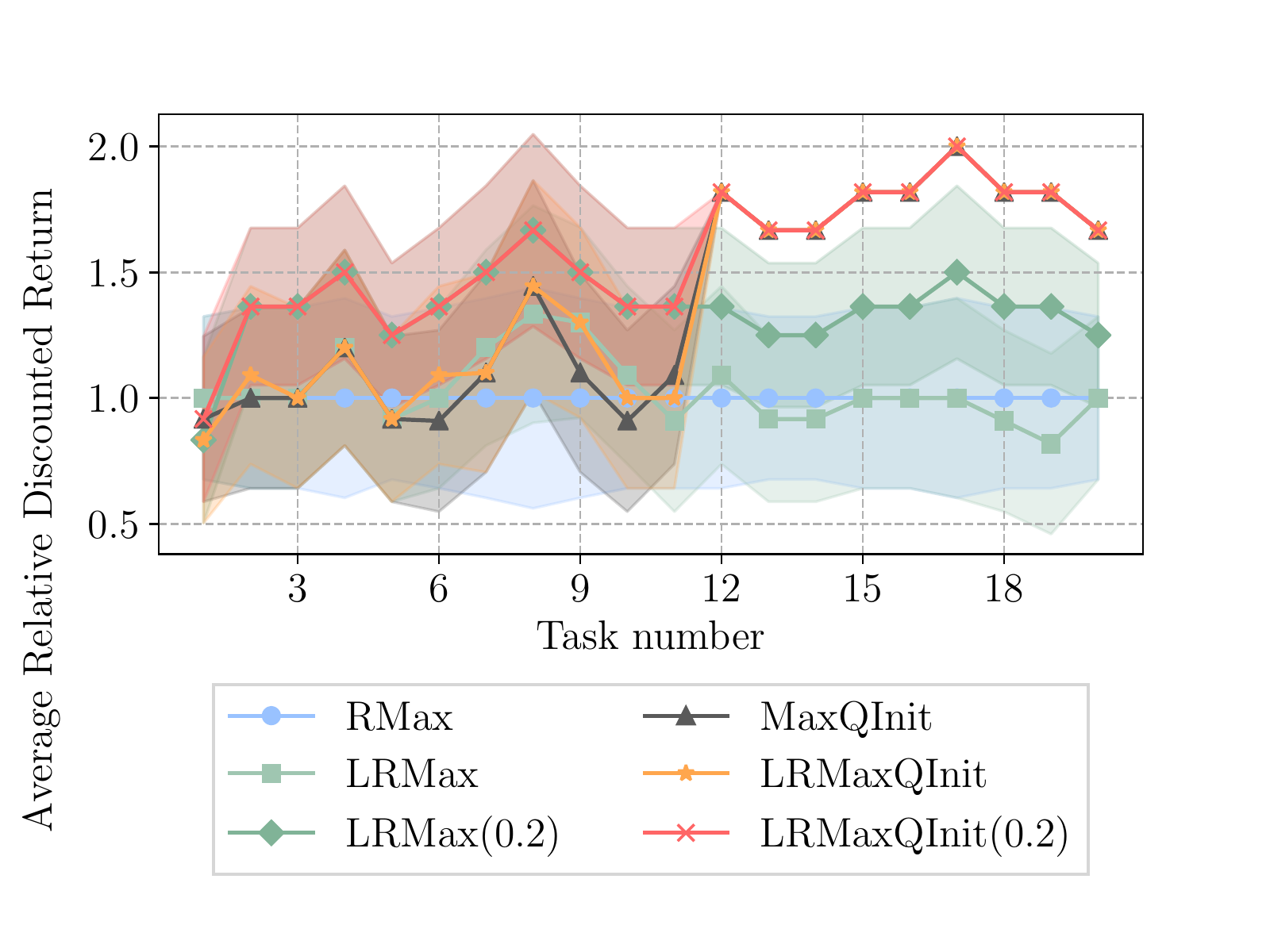}
		\caption{Average discounted return vs. tasks}
		\label{fig:corridor-discounted_return_vs_task}
	\end{subfigure}
	\hfill
	\begin{subfigure}{.49\textwidth}
		\centering
		\includegraphics[
		width=\textwidth
		]{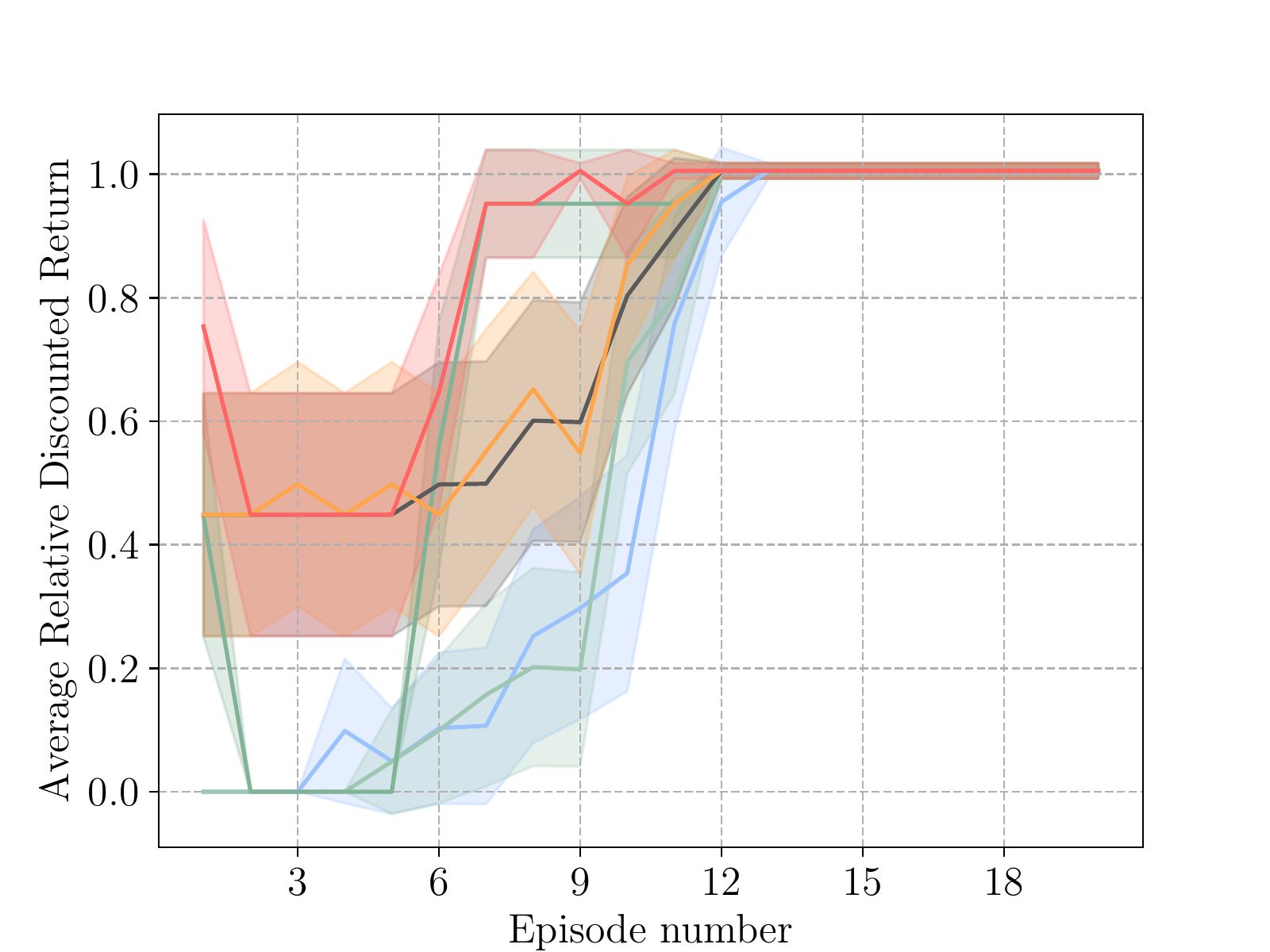}
		\caption{Average discounted return vs. episodes}
		\label{fig:corridor-discounted_return_vs_episode}
	\end{subfigure}
	\caption{Results of the corridor \lrl{} experiment with 95\% confidence interval.}
	\label{fig:additional-experimental-results}
\end{figure*}


We also ran \lrl{} experiments in the maze grid-world of Figure~\ref{fig:maze}.
The tasks consists in reaching the goal cell labeled with a ``G'' while the initial state of the agent is the central cell, labeled with an ``S''.
Two walls configurations are possible, yielding two different tasks with probability $\frac{1}{2}$ of being sampled in the \lrl{} setting.
The first task corresponds to the case where orange walls are actually walls and green cells are normal white cells where the agent can go.
The second task is the converse, where green walls are walls and orange cells are normal white cells.
We run 100 episodes of length 15 time steps and sample a total of 30 different tasks.
Results can be found in Figure~\ref{fig:maze-results}.
In this experiment, we observe the increase of performance of \lrmax{} as the value of $\prior$ decreases.
The three stages behavior of \lrmax{} reported in Section~\ref{sec:experiments} does not appear in this case.
We tested the performance of using the online estimation of the local model distances of Proposition~\ref{proposition:max-distance-estimate} in the algorithm referred by \lrmax{} in Figure~\ref{fig:maze-results}.
Once enough tasks have been sampled, the estimate on the model local distance is used with high confidence on its value and refines the \ub{} computed analytically in Equation~\ref{eq:local-distance-ub}.
Importantly, this instance of \lrmax{} achieved the best result in this particular environment, demonstrating the usefulness of this result.
This method being similar to the \maxqinit{} estimation of maximum Q-values, we unsurprisingly observe that both algorithms feature a similar performance in the maze environment.

\begin{figure}[]
	\centering
	\includegraphics[width=0.3\textwidth]{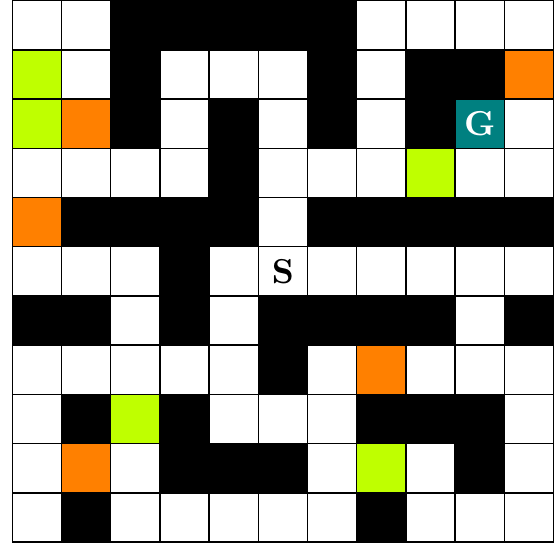}
	\caption{
		The maze grid-world environment.
		The walls correspond to the black cells and either the green ones or the orange ones.
	}
	\label{fig:maze}
\end{figure}

\begin{figure}[]
	\centering
	\includegraphics[width=0.45\textwidth]{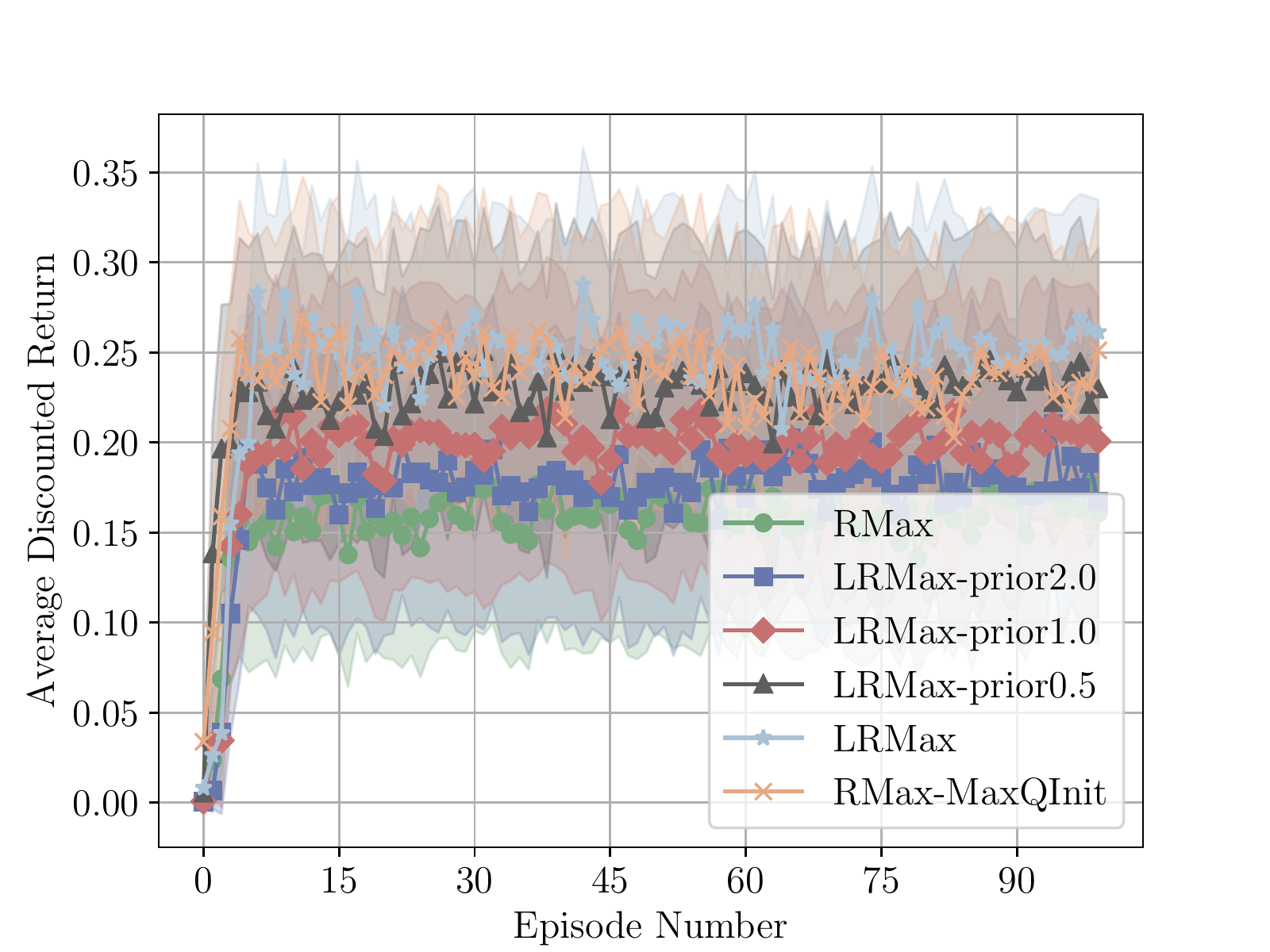}
	\caption{Averaged discounted return over tasks for the maze grid-world \lrl{} experiment.}
	\label{fig:maze-results}
\end{figure}

\section{Prior $\prior$ use experiment}
\label{sec:app:prior-use-experiment}

Consider two MDPs $M, \bar{M} \in \M$.
Each time a state-action pair $\tuple{s}{a} \in \SA$ is updated, we compute the local distance \ub{} $\modivhat{s}{a}{M}{\bar{M}}$ (Equation~\ref{eq:local-distance-ub}) for all $\tuple{s}{a} \in \S \times \A$.
In this computation, one can leverage the knowledge of $\prior$ to select $\min \SET{ \modivhat{s}{a}{M}{\bar{M}}, \prior }$.
We show that \lrmax{} relies less and less on $\prior$ as knowledge on the current task increases.
For this experiment, we used the two grid-worlds environments displayed in Figures~\ref{fig:heat-map-maze-gridworld-1} and~\ref{fig:heat-map-maze-gridworld-2}.
\begin{figure}
	\centering
	\begin{subfigure}{.49\textwidth}
		\centering
		\includegraphics[
		width=0.3\textwidth
		]{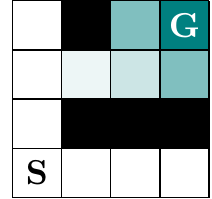}
		\caption{4 times 4 heat-map grid-world. Slipping probability is 10\%.}
		\label{fig:heat-map-maze-gridworld-1}
	\end{subfigure}
	\hfill
	\begin{subfigure}{.49\textwidth}
		\centering
		\includegraphics[
		width=0.3\textwidth
		]{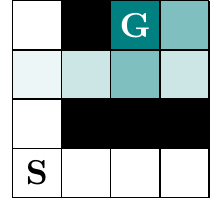}
		\caption{4 times 4 heat-map grid-world. Slipping probability is 5\%.}
		\label{fig:heat-map-maze-gridworld-2}
	\end{subfigure}
	\caption{The two grid-worlds of the prior use experiment.}
\end{figure}

The rewards collected with any actions performed in the teal cells of both tasks are defined as:
\begin{align*}
& R^s_a = \exp \left( - \frac{(s_x - g_x)^2 + (s_y - g_y)^2}{2 \sigma^2} \right), \\
& \forall s = (s_x, s_y) \in \S, a \in \A,
\end{align*}
where $(s_x, s_y)$ are the coordinates of the current state, $(g_x, g_y)$ the coordinate of the goal cell labelled with a G and $\sigma$ is a span parameter equal to $1$ in the first environment and $1.5$ in the second environment.
The agent starts at the cell labelled with the S letter. Black cells represent unreachable cells (walls).
We run \lrmax{} twice on the two different maze grid-worlds and record for each model update the proportion of times $\prior$ is smaller than $\modivhat{s}{a}{M}{\bar{M}}$ in Figure~\ref{fig:prior_use} via the \% use of $\prior$.
\begin{figure}[]
	\centering
	\includegraphics[width=0.45\textwidth]{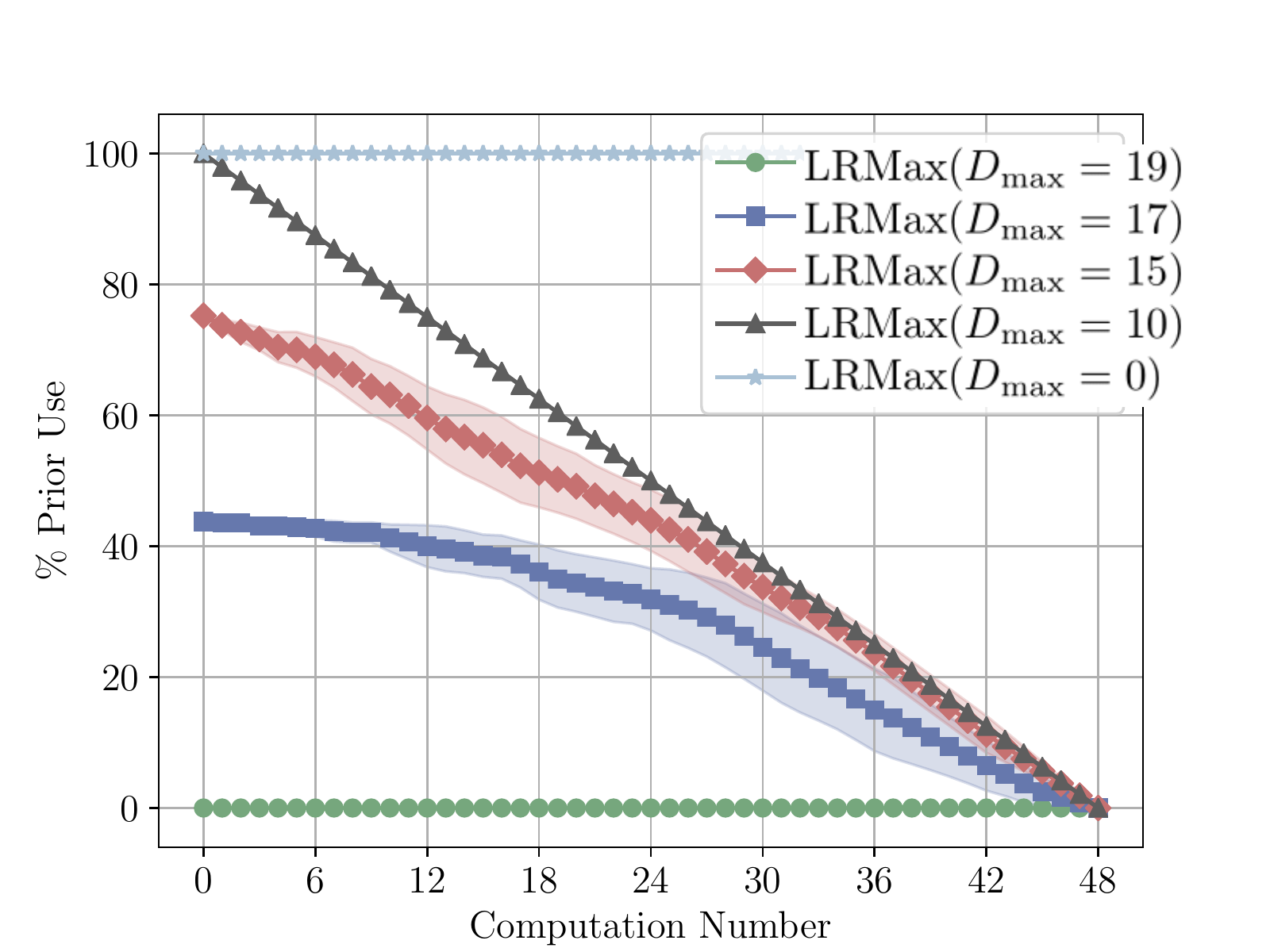}
	\caption{Proportion of times where $\prior \leq \modivhat{s}{a}{M}{\bar{M}}$, \ie{}, use of the prior, vs computation of the Lipschitz bound. Each curve is displayed with 95\% confidence intervals.}
	\label{fig:prior_use}
\end{figure}

With maximum value $\prior = 19$, $\modivhat{s}{a}{M}{\bar{M}}$ is systematically lesser than $\prior$, resulting in 0\% use.
Conversely, with minimum value $\prior = 0$, the use expectedly increases to 100\%.
The in-between value of $\prior = 10$ displays a linear decay of the use.
This suggests that, at each update, $\modivhat{s}{a}{M}{\bar{M}} \leq \prior$ is only true for one more unique $s, a$ pair, resulting in a constant decay of the use.
With fewer prior ($\prior = 15$ or $17$), updating one single $s, a$ pair allows $\modivhat{s}{a}{M}{\bar{M}}$ to drop under $\prior$ for more than one pair, resulting in less use of the prior knowledge.
The conclusion of this experiment if that $\prior$ is only useful at the beginning of the exploration, while \lrmax{} relies more on its own bound $\modivhat{s}{a}{M}{\bar{M}}$ when partial knowledge of the task has been acquired.

\section{Discussion on \rmax{} precision parameters $\epsilon$, $\delta$, $n_{known}$}
\label{sec:app:nknown}

We used $n_{known} = 10$, $\delta=0.05$ and $\epsilon = 0.01$.
Theoretically, $n_{known}$ should be a lot larger ($\approx 10^5$) in order to reach an accuracy $\epsilon = 0.01$ according to \citet{strehl2009reinforcement}.
However, it is common practice to assume such small values of $n_{known}$ are sufficient to reach an acceptable model accuracy $\epsilon$.
Interestingly, empirical validation did not confirm this assumption for any \rmax{}-based algorithm.
We keep these values nonetheless for the sake of comparability between algorithms and consistency with the literature.
Despite such absence of accuracy guarantees, \rmax{}-based algorithms still perform surprisingly well and are robust to model estimation uncertainties.

\section{Information about the Machine Learning reproducibility checklist}
\label{sec:app:reproducibility-checklist}

\newcommand{\specialcell}[2][c]{\begin{tabular}[#1]{@{}c@{}}#2\end{tabular}}

For the experiments run in Section~\ref{sec:experiments}, the computing infrastructure used was a laptop using a single 64-bit CPU (model: Intel(R) Core(TM) i7-4810MQ CPU @ 2.80GHz).
The collected samples sizes and number of evaluation runs for each experiment is summarized in Table~\ref{table:summary}.
\begin{table*}[t]
	\centering
	\begin{tabular}{lccccc}
		\toprule
		
		Task &
		\specialcell[3]{Number of\\experiment\\repetitions} &
		\specialcell{Number of\\sampled tasks} &
		\specialcell{Number of\\episodes} &
		\specialcell[3]{Maximum\\length\\of episodes} &
		\specialcell[4]{Total number of\\ collected transition\\samples $(s, a, r, s')$}\\
		
		\midrule
		
		\specialcell{``Tight'' task\\of Figures~\ref{fig:discounted_return_vs_task}~\ref{fig:discounted_return_vs_episode} \\ and~\ref{fig:custom_fig}} & 10 & 15 & 2000 & 10 & 3,000,000 \\
		
		\midrule
		
		\specialcell{``Tight'' task\\of Figure~\ref{fig:bounds_comparison}} & 100 & 2 & 2000 & 10 & 4,000,000 \\
		
		\midrule
		
		\specialcell{Corridor task\\Section~\ref{sec:app:additional-experiments}} & 1 & 20 & 20 & 11 & 4400 \\
		
		\midrule
		
		\specialcell{Maze task\\Section~\ref{sec:app:additional-experiments}} & 1 & 30 & 100 & 15 & 45000 \\
		
		\midrule
		
		\specialcell{Heat-map\\Section~\ref{sec:app:prior-use-experiment}} & 100 & 2 & 100 & 30 & 600,000 \\
		
		\bottomrule
	\end{tabular}
	\captionof{table}{Summary of the number of experiment repetition, number of sampled tasks, number of episodes, maximum length of episodes and \ub{}s on the number of collected samples.}
	\label{table:summary}
\end{table*}

The displayed confidence intervals for any curve presented in the paper is the 95\% confidence interval~\citep{neyman1937x} on the displayed mean.
No data were excluded neither pre-computed.
Hyper-parameters were determined to our appreciation, they may be sub-optimal but we found the results convincing enough to display interesting behaviors.


\fi

\end{document}